\newtheorem{assumption}{Assumption}
\DeclareMathOperator{\argmax}{argmax}
\newcommand{\rightcomment}[1]{\(\triangleright\) {\small \it #1}}
\newcommand{\eqcomment}[1]{\addtocounter{equation}{1}\tag*{\rightcomment{#1}\quad(\theequation)}}
\newcommand\eqcomment*[1]{\tag*{\rightcomment{#1}}}
\renewcommand\algorithmicthen{:}
\algnewcommand{\IfThen}[2]{\State \algorithmicif\ #1\ \algorithmicthen\ #2}
\algnewcommand{\IfThenElse}[3]{\State \algorithmicif\ #1\ \algorithmicthen\ #2\ \algorithmicelse\ #3}
\algrenewcommand{\algorithmiccomment}[1]{\hfill \rightcomment{#1}}
\algnewcommand{\LineComment}[1]{\State \rightcomment{#1}}
\algnewcommand{\LinesComment}[1]{\State \rightcomment{\parbox[t]{\linewidth-\leftmargin-\widthof{\(\triangleright\) }}{#1}}\smallskip}
\algnewcommand\algorithmicinput{{\bfseries Input:}}
\algnewcommand\INPUT{\item[\algorithmicinput]}
\algnewcommand\algorithmicoutput{{\bfseries Output:}}
\algnewcommand\OUTPUT{\item[\algorithmicoutput]}
\newcounter{algorithmicH}
\let\oldalgorithmic\algorithmic
\renewcommand{\algorithmic}{%
  \stepcounter{algorithmicH}
  \oldalgorithmic}
\renewcommand{\theHALG@line}{ALG@line.\thealgorithmicH.\arabic{ALG@line}}
\newcommand{\algmargin}{\the\ALG@thistlm}
\algnewcommand{\Statepar}[1]{\State\parbox[t]{\dimexpr\linewidth-\algmargin}{\strut #1\strut}}
\newcommand{\pluseq}{\mathrel{+\!\!=}}
\xapptocmd\normalsize{
 \abovedisplayskip=11pt plus 3pt minus 9pt
 \abovedisplayshortskip=0pt plus 3pt
 \belowdisplayskip=11pt plus 3pt minus 9pt
 \belowdisplayshortskip=6.5pt plus 3.5pt minus 3pt
}{}{}
\newcolumntype{C}{>{\centering\arraybackslash}X}
\newcolumntype{R}{>{\raggedleft\arraybackslash}X}
\newcolumntype{S}{>{\raggedleft\arraybackslash\hsize=.5\hsize}X}
\crefname{equation}{equation}{equations}
\crefname{section}{section}{sections}
\crefname{footnote}{footnote}{footnotes}   
\crefname{line}{line}{lines}   
\crefname{assumption}{assumption}{assumptions}
\let\realfootnote=\footnote  
\let\footnote=\endnote    %
\crefname{footnote}{endnote}{endnote} %
\renewcommand{\thefootnote}{\fnsymbol{footnote}} %
\newcommand{\nofootnotes}{\realfootnote{This paper uses endnotes instead of footnotes.  They are found at the start of the supplementary material.\looseness=-1}}
\let\frac=\tfrac  %
\newcommand{\defn}[1]{\textbf{#1}}
\newcommand{\defeq}{\mathrel{\stackrel{\textnormal{\tiny def}}{=}}}
\newcommand{\xpct}{\mathbb{E}}
\newcommand{\E}[2][]{\xpct_{{#1}}\left[#2\right]}
\newcommand{\var}{\mathbb{V}}
\newcommand{\Real}{\mathbb{R}}
\newcommand{\Uniform}{\mathrm{Unif}}
\newcommand{\Exp}{\mathrm{Exp}}
\newcommand{\Normal}{\mathrm{Normal}}
\renewcommand{\th}{\textsuperscript{th}\xspace}
\newcommand{\inv}[1]{#1^{\scriptscriptstyle-\!1}}
\newcommand{\vecb}[1]{{\boldsymbol{\mathbf{#1}}}}
\newcommand{\param}{\theta}
\newcommand{\Param}{\Theta}
\newcommand{\paramq}{\phi}
\newcommand{\history}{\mathcal{H}}
\newcommand{\es}[2]{#1_{#2}} %
\newcommand{\esh}[2]{\es{#1}{[0,#2)}} %
\newcommand{\eshclosed}[2]{\es{#1}{[0,#2]}} %
\newcommand{\esm}[3]{\es{#1}{#2}^{#3}} %
\newcommand{\esmh}[3]{\esm{#1}{[0,#2)}{#3}} %
\newcommand{\model}{p}
\newcommand{\data}{p^*}
\newcommand{\noise}{q}
\newcommand{\optparens}[1]{\if\relax\detokenize{#1}\relax\else(#1)\fi}   %
\newcommand{\inten}[2]{\lambda_{{#1}}\optparens{#2}} %
\newcommand{\intenbound}{\overline{\lambda}} %
\newcommand{\intend}[2]{\lambda^*_{{#1}}\optparens{#2}} %
\newcommand{\intenm}[3]{\lambda^{#3}_{{#1}}\optparens{#2}} %
\newcommand{\intenq}[2]{\intenm{#1}{#2}{\mathrm{q}}} %
\newcommand{\intenstar}[2]{\intenm{#1}{#2}{*}} %
\newcommand{\tree}[2]{%
	\ifthenelse{\equal{#1}{a}}{$\sqrt{#2}$}{%
		\ifthenelse{\equal{#1}{b}}{"Hi."}{}}}
\newcommand{\set}[1]{{\mathcal{\uppercase{#1}}}}
\newcommand{\xunderbrace}[2]{{\underbrace{#1}_{#2}}}
\newcommand{\objind}{L}
\newcommand{\obj}{J}
\newcommand{\mle}{\obj_{\mathrm{LL}}}
\newcommand{\nce}{\obj_{\mathrm{NC}}}
\newcommand{\draw}[2]{#1_{#2}}
\newcommand{\nothing}{\varnothing}
\newcommand{\dt}{dt}
\newcommand{\ds}{ds}
\newcommand{\tbeg}{t_{\text{beg}}}
\newcommand{\tend}{t_{\text{end}}}
\newcommand{\probsym}{\mathbb{P}}
\newcommand{\prob}[1]{\probsym\left[#1\right]}
\newcommand{\cutforspace}[1]{}
\newcommand{\redsolid}{\begin{tikzpicture} \draw [Red, ultra thick] (0,1) -- (0.5,1); \end{tikzpicture}\xspace}
\newcommand{\reddash}{\begin{tikzpicture} \draw [Red, dashed, ultra thick] (0,1) -- (0.5,1); \end{tikzpicture}\xspace}
\newcommand{\bluesolid}{\begin{tikzpicture} \draw [Blue, ultra thick] (0,1) -- (0.5,1); \end{tikzpicture}\xspace}
\title{Noise-Contrastive Estimation for Multivariate Point Processes}
\author{
	Hongyuan Mei \ \ \ \ \ \  Tom Wan \ \ \ \ \ \ Jason Eisner \\
	Department of Computer Science, Johns Hopkins University \\
	3400 N. Charles Street, Baltimore, MD 21218 U.S.A \\
	\texttt{\{hmei,tom,jason\}@cs.jhu.edu} \\
}
\begin{document}
	
\maketitle

\vspace{-10pt}
\begin{abstract}\label{sec:abstract}

\vspace{-3pt}
The log-likelihood of a generative model often involves both positive and negative terms.  For a temporal multivariate point process, the negative term sums over all the possible event types at each time and also integrates  over all the possible times.  As a result, maximum likelihood estimation is expensive.  We show how to instead apply a version of noise-contrastive estimation%
---a general parameter estimation method with a less expensive stochastic objective.  Our specific instantiation of this general idea works out in an interestingly non-trivial way and has provable guarantees for its optimality, consistency and efficiency. On several synthetic and real-world datasets, our method shows benefits: for the model to achieve the same level of log-likelihood on held-out data, our method needs considerably fewer function evaluations and less wall-clock time.

\end{abstract}

\vspace{-8pt}
\section{Introduction}\label{sec:intro}

Maximum likelihood estimation (MLE) is a popular training method for generative models.  However, to obtain the likelihood of a generative model given the observed data, one must compute the probability of each observed sample, which often includes an expensive normalizing constant. %
For example, in a language model, each word is typically drawn from a softmax distribution over a large vocabulary, whose normalizing constant requires a summation over the vocabulary.

This paper aims to alleviate a similar computational cost for
multivariate point processes.  These generative models are natural tools to analyze streams of discrete events in continuous time.
Their likelihood is improved not only by raising the probability of the observed events, but by lowering the probabilities of the events that were observed \emph{not} to occur. 
There are infinitely many times at which no event of any type occurred; to predict these \emph{non}-occurrences, the likelihood must integrate the infinitesimal event probability for each event type over the entire observed time interval.  Therefore, the likelihood is expensive to compute, particularly when there are many possible event types.

As an alternative to MLE, we propose to train the model by learning to discriminate the observed events from events sampled from a noise process. 
Our method is a version of \defn{noise-contrastive estimation} (NCE), which was originally developed for unnormalized (energy-based) distributions and then extended to conditional softmax distributions such as language models.
To our best knowledge, we are the first to extend the method and its theoretical guarantees (for optimality, consistency and efficiency) to the context of multivariate point processes. 
We will also discuss similar efforts in related areas in \cref{sec:related}. 

On several datasets, our method shows compelling results. By evaluating
fewer event intensities, training takes much less wall-clock time while still achieving competitive log-likelihood.\looseness=-1

\section[Preliminaries]{Preliminaries}\label{sec:notation}\label{sec:prilim}

\subsection{Event Streams and Multivariate Point Processes}\label{sec:es}\label{sec:mpp}
Given a fixed time interval $[0, T)$, we may observe an \defn{event stream} $\es{x}{[0,T)}$:  at each continuous time $t$, the observation $\es{x}{t}$ is one of the discrete types $\{\nothing, 1, \ldots, K\}$ where $\nothing$ means \emph{no event}. 
An non-$\nothing$ observation is called an \defn{event}. 
A generative model of an event stream is called a \defn{multivariate point process}.\nofootnotes

We wish to fit an \defn{autoregressive} probability model to observed event streams.  In a discrete-time autoregressive model, events would be generated from left to right, where  $x_t$ is drawn from a distribution that depends on $x_0, \ldots, x_{t-1}$.  The continuous-time version still generates events from left to right,\footnote{A special event $\es{x}{0}$ is sometimes given at time 0 to mark the beginning of the sequence; the model then generates the rest of the sequence conditioned on $\es{x}{0}$.} but at any specific time $t$ we have $p(x_t=\nothing)=1$, with only an infinitesimal probability of any event.  (For a computationally practical sampling method, see \cref{sec:sample_q}.)  The model is a stochastic process defined by functions $\lambda_k$ that determine a finite \defn{intensity} $\inten{k}{t \mid \esh{x}{t}} \geq 0$ for each event type $k\neq\nothing$ at each time $t > 0$.  This intensity depends on the \defn{history} of events $\esh{x}{t}$ that were drawn at times $< t$.  It quantifies the \defn{instantaneous rate} at time $t$ of events of type $k$.  
That is, $\inten{k}{t \mid \esh{x}{t}}$ is the limit as $dt \rightarrow^+ 0$ of $\frac{1}{dt}$ times the expected number of events of type $k$ on the interval $[t,t+dt)$, where the expectation is conditioned on the history.

As the event probabilities are infinitesimal, the times of the events are almost surely distinct.  To ensure that we have a point process, the intensity functions must be chosen such that the total number of events on any bounded interval is almost surely finite.
Models of this form include inhomogeneous Poisson processes \citep{daley-07-poisson},
in which the intensity functions ignore the history, as well as (non-explosive) Hawkes processes \citep{hawkes-71} and their modern neural versions \citep{du-16-recurrent,mei-17-neuralhawkes}.

Most models use intensity functions that are continuous between events.  Our analysis requires only
\begin{assumption}[Continuity]\label{asmp:inten_cont}	
	For any event stream $\es{x}{[0,T)}$ and event type $k \in \{1, \ldots, K\}$, $\inten{k}{t \mid \es{x}{[0,t)}}$ is Riemann integrable, i.e., bounded and continuous almost everywhere w.r.t. time $t$. %
\end{assumption}

\subsection{Maximum Likelihood Estimation: Usefulness and Difficulties}\label{sec:mle}

In practice, we parameterize the intensity functions by $\param$.  We write $\model_{\param}$ for the resulting probability density over event streams.   
When learning $\param$ from data, we make the conventional assumption that the true point process $\data$
actually falls into the chosen model family:
\begin{assumption}[Existence]\label{asmp:exist}	
	There exists at least one parameter vector $\param^*$ such that $\model_{\param^*} = \data$.
\end{assumption}
Then as proved in \cref{app:mle_proof_details}, such a $\param^*$ can be found as an argmax of
\begin{align}\label{eqn:mle}
	\mle(\param)
	\defeq \E[ \es{x}{[0,T)} \sim \data ]{ \log \model_{\param}(\es{x}{[0,T)} ) }
\end{align}
Given \cref{asmp:inten_cont}, the $\param$ values that maximize $\mle(\param)$ are exactly the
set $\Param^*$ of values for which $\model_{\param} = \data$: any $\param$ for which $\model_{\param} \neq \data$ would end up with a strictly smaller $\mle(\param)$ by increasing the cross entropy $-\data\log\model_{\param}$ over some interval $(t, t')$ for a set of histories with non-zero measure. 

If we modify \cref{eqn:mle} to take the expectation under the empirical distribution of event streams $\es{x}{[0,T)}$ in the training dataset, then $\mle(\param)$ is proportional to the log-likelihood of $\param$. 
For any $\es{x}{[0,T)}$ that satisfies the condition in \cref{asmp:inten_cont},  the log-density used in \cref{eqn:mle} can be expressed in terms of $\inten{k}{t \mid \esh{x}{t}}$: 
\begin{align}\label{eqn:loglik}
\log \model_{\param}(\es{x}{[0,T)} )
= \sum_{t : \es{x}{t} \neq \nothing } \log \inten{\es{x}{t}}{t \mid \esh{x}{t} }  - \int_{t=0}^{T} \sum_{k=1}^{K} \inten{k}{t \mid \esh{x}{t} } \dt
\end{align}
Notice that the second term lacks a log.
It is \emph{expensive} to compute in the following cases: 
\begin{itemize}[nosep,leftmargin=*]
	\item The total number of event types $K$ is large, making $\sum_{k=1}^{K}$ slow. %
	\item The integral $\int_{t=0}^{T}$ is slow to estimate well, e.g., via a Monte Carlo estimate $\frac{T}{J} \sum_{j=1}^{J} \sum_{k=1}^{K} \inten{k}{t_j}$ where each $t_j$ is randomly sampled from the uniform distribution over $[0, T)$.
	\item The chosen model architecture makes it hard to parallelize the $\inten{k}{t_j}$ computation over $j$ and $k$.
\end{itemize}

\subsection{Noise-Contrastive Estimation in Discrete Time}\label{sec:nce_discrete}

For autoregressive models of \emph{discrete-time} sequences, a similar computational inefficiency can be tackled by applying the principle of noise-contrastive estimation \citep{gutmann-10-nce}, as follows.
For each history $\es{x}{0:t}\defeq\es{x}{0}\es{x}{1}\ldots\es{x}{t-1}$ in training data, NCE trains the model $\model_{\param}$ to discriminate the actually observed datum $\es{x}{t}$ from some noise samples whose distribution $\noise$ is known.  
The intuition is: optimal performance is obtained \emph{if and only if} $\model_{\param}$ matches the true distribution $\data$.  

More precisely, given a bag $\{\esm{x}{t}{0}, \esm{x}{t}{1}, \ldots, \esm{x}{t}{M}\}$, where exactly one element of the bag was drawn from $\data$ and the rest drawn i.i.d.\@ from $\noise$, consider the log-posterior probability (via Bayes' Theorem\footnote{The product $\data(\esm{x}{t}{m} \mid \es{x}{0:t}) \prod_{m'\neq m} \noise(\esm{x}{t}{m'} \mid \es{x}{0:t})$ is the likelihood of $\esm{x}{t}{m}$ being the one drawn from $\data$. The prior is uniform since any $m$ in the unordered bag was \emph{a priori} equally probable.}) that $\esm{x}{t}{0}$ was the one drawn from $\data$:

\vspace{-2.5\baselineskip}
\begin{align}\label{eqn:nce_ma}
\hspace{2cm}\log \frac{\phantom{\sum_{m=0}^{M}}\data(\esm{x}{t}{0} \mid \es{x}{0:t}) \prod_{m=1}^{M} \noise(\esm{x}{t}{m} \mid \es{x}{0:t}) }{\sum_{m=0}^{M} \data(\esm{x}{t}{m} \mid \es{x}{0:t}) \prod_{m'\neq m} \noise(\esm{x}{t}{m'} \mid \es{x}{0:t}) }
\end{align}
The ``ranking'' variant of NCE \citep{jozefowicz-16-lm} substitutes $\model_\param$ for $\data$ in this expression, and seeks $\param$ (e.g., by stochastic gradient ascent) to maximize the expectation of the resulting quantity when $\esm{x}{t}{0}$ is a random observation in training data,\footnote{\label{fn:expectedseq}In practice, it is more convenient to maximize the expected \emph{sum} over $t$ in a sequence drawn uniformly from the set of sequences in the training dataset.  This scales the objective up by the average sequence length, preserving the property that longer sequences have more weight.}
 $\es{x}{0:t}$ is its history, and  $\esm{x}{t}{1}, \ldots, \esm{x}{t}{M}$ are drawn i.i.d.\@ from $q(\cdot \mid \es{x}{0:t})$.

This objective is really just conditional maximum log-likelihood on a supervised dataset of $(M+1)$-way classification problems. 
Each problem presents an unordered set of $M+1$ samples---one drawn from $\data$ and the others drawn i.i.d.\@ from $\noise$.  The task is to guess \emph{which} sample was drawn from $\data$.
Conditional MLE trains $\param$ to maximize (in expectation) the log-probability that the model assigns to the correct answer.
In the infinite-data limit, it will find $\param$ (if possible) such that these log-probabilities \emph{match} the true ones given by \eqref{eqn:nce_ma}.  For that, it is \emph{sufficient} for $\param$ to be such that $\model_{\param} = \data$.  Given \cref{asmp:exist}, \citet{ma-18-nce} show that $\model_{\param} = \data$ is also \emph{necessary}, i.e., the NCE task is sufficient to find the true parameters. 
Although the NCE objective does not learn to predict the full observed sample $\es{x}{t}$ as MLE does, but only to distinguish it from the $M$ noise samples, their theorem implies that in expectation over all possible sets of $M$ noise samples, it actually retains all the information (provided that $M > 0$ and 
  $\noise$ has support everywhere that $\data$ does).\looseness=-1

This NCE objective is computationally cheaper than MLE when the distribution $\model_\param(\cdot \mid \es{x}{0:t})$ is a softmax distribution over $\{1, \ldots, K\}$ with large $K$.  The reason is that the expensive normalizing constants in the numerator and denominator of \cref{eqn:nce_ma} need not be computed.  They cancel out because all the probabilities are conditioned on the same (actually observed) history.

\section{Applying Noise-Contrastive Estimation in Continuous Time}\label{sec:objective}\label{sec:practical}\label{sec:nce_obj}

The expensive $\int\sum$ term in \cref{eqn:loglik} is rather similar to a normalizing constant,\footnote{Our model does not need any normalization: $\model(\es{x}{t}=\nothing)+\sum_{k=1}^K \model(\es{x}{t}=k) = 1 + \text{(infinitesimal quantities)} = 1$.} as it sums over non-occurring events.
We might try to avoid computing it\footnote{
  While this paper's speedup over the MLE objective \eqref{eqn:loglik} comes from avoiding the integral, an alternative would be to estimate the integral more efficiently.  One might try randomized adaptive quadrature \citep{baran2008optimally} modified for our discontinuous intensity functions and GPU hardware; or importance sampling of $(t,k)$ pairs where the proposal distribution is roughly proportional to $\inten{k}{t}$---much like the noise distribution we will develop for NCE.}
by discretizing the time interval $[0, T)$ into finitely many intervals of width $\Delta$ and applying NCE.  In this case, we would be distinguishing the true sequence of events on an interval $[i\Delta, (i+1)\Delta )$ from corresponding noise sequences on the same interval, given the same (actually observed) history $\esh{x}{i\Delta}$.  Unfortunately, the distribution $\model_\param(\cdot \mid \esh{x}{i\Delta})$ in the objective still involves an $\int\sum$ term where the integral is over $[i\Delta, (i+1)\Delta )$ and the inner sum is over $k$.  The solution is to shrink the intervals to  \emph{infinitesimal width} $dt$.
Then our log-posterior over each of them becomes
\begin{align}\label{eqn:nce_interval}
	\log \dfrac{\phantom{\sum_{m=0}^{M}} {\model}_{\param}(\esm{x}{[t, t+\dt)}{0} \mid \esmh{x}{t}{0}) \prod_{m=1}^{M} \noise(\esm{x}{[t, t+\dt)}{0} \mid \esmh{x}{t}{0})  }{\sum_{m=0}^{M} { \model}_{\param}(\esm{x}{[t, t+\dt)}{m} \mid \esmh{x}{t}{0}) \prod_{m'\neq m} \noise(\esm{x}{[t, t+\dt)}{m'} \mid \esmh{x}{t}{0})  }
\end{align}

We will define the noise distribution $\noise$ in terms of finite intensity  functions $\intenq{k}{}$,  like the ones $\inten{k}{}$ that define $\model_\param$.  As a result, at a \emph{given} time $t$, there is only an infinitesimal probability that \emph{any} of $\{\esm{x}{t}{0}, \esm{x}{t}{1}, \ldots, \esm{x}{t}{M}\}$ is an event.  Nonetheless, at \emph{each} time $t \in [0,T)$, we will consider generating a noise event (for each $m > 0$) conditioned on the actually observed history $\esh{x}{t}$.  Among these uncountably many times $t$, we may have some for which $\esm{x}{t}{0} \neq \nothing$ (the observed events), or where  $\esm{x}{t}{m} \neq \nothing$ for some $1 \leq m \leq M$ (the noise events).

Almost surely, the set of times $t$ with a real or noise event remains finite.  Our NCE objective is the expected sum of \cref{eqn:nce_interval} over all such times $t$ in an event stream, when the stream is drawn uniformly from the set of streams in the training dataset---as in \cref{fn:expectedseq}---and the noise events are then drawn as above.

Our objective ignores all other times $t$, as they provide no information about $\param$.  After all, when  $\esm{x}{t}{0}=\cdots=\esm{x}{t}{M}=\nothing$, the probability that $\esm{x}{t}{0}$ is the one drawn from the true model must be $1/(M+1)$ by symmetry, regardless of $\param$.  At these times, the ratio in \cref{eqn:nce_interval} does reduce to $1/(M+1)$, since all probabilities are 1.\looseness=-1

At the times $t$ that we do consider, how do we compute \cref{eqn:nce_interval}?  Almost surely, exactly one of $\esm{x}{t}{0},\ldots,\esm{x}{t}{M}$ is an event $k$ for some $k \neq \nothing$.  As a result, exactly one factor in each product is infinitesimal ($dt$ times the $\inten{k}{}$ or $\intenq{k}{}$ intensity), and the other factors are 1.  Thus, the $dt$ factors cancel out between numerator and denominator, and \cref{eqn:nce_interval} simplifies to
\begin{align}\label{eqn:nce_inten_term}
\log \frac{ \inten{k}{t \mid \esmh{x}{t}{0}} }{ \inten{k}{t \mid \esmh{x}{t}{0}} + M \intenq{k}{t \mid \esmh{x}{t}{0}} } \text{ if } \esm{x}{t}{0} = k \text{ and } \log \frac{ \intenq{k}{t \mid \esmh{x}{t}{0}} }{ \inten{k}{t \mid \esmh{x}{t}{0}} + M \intenq{k}{t \mid \esmh{x}{t}{0}} } \text{ if } \esm{x}{t}{0} = \nothing
\end{align}
When a gradient-based optimization method adjusts $\param$ to increase \cref{eqn:nce_inten_term}, the intuition is as follows. 
If $\esm{x}{t}{0} = k$, the model intensity $\inten{k}{t}$ is \emph{increased} to explain why an event of type $k$ occurred at this particular time $t$.  If $\esm{x}{t}{0} = \nothing$, the model intensity $\inten{k}{t}$ is \emph{decreased} to explain why an event of type $k$ did \emph{not} actually occur at time $t$ (it was merely a noise event $\esm{x}{t}{m} = k$, for some $m \neq 0$).  These cases achieve the same qualitative effects as following the gradients of the first and second terms, respectively, in the log-likelihood \eqref{eqn:loglik}.

Our full objective is an expectation of the sum of finitely many such log-ratios:\footnote{We remark that $\nce(\param)$ is the expected log-\emph{probability} of a discrete choice, whereas $\mle(\param)$ was the expected log-\emph{density} of an observation that includes continuous times.  A density must be integrated to yield a probability.}
\begin{align}\label{eqn:nce_inten}%
\nce(\param) \defeq
\E[\esm{x}{[0,T)}{0}\sim \data, \esm{x}{[0,T)}{1:M} \sim \noise]{\sum_{t : \esm{x}{t}{0} \neq\nothing } \log \frac{ \inten{\esm{x}{t}{0}}{t \mid \esmh{x}{t}{0}} }{ \underline{\lambda}_{\esm{x}{t}{0}}(t \mid \esmh{x}{t}{0}) } + \sum_{m=1}^M \sum_{t : \esm{x}{t}{m}\neq\nothing } \log \frac{ \intenq{\esm{x}{t}{m}}{t \mid \esmh{x}{t}{0}} }{ \underline{\lambda}_{\esm{x}{t}{m}}(t \mid \esmh{x}{t}{0}) } }
\end{align}
where $\underline{\lambda}_{k}(t \mid \esmh{x}{t}{0}) \defeq \inten{k}{t \mid \esmh{x}{t}{0}} + M  \intenq{k}{t \mid \esmh{x}{t}{0}}$. %
The expectation is estimated by sampling: we draw an observed stream $\esm{x}{[0,T)}{0}$ from the training dataset, then draw noise events $\esm{x}{[0,T)}{1:M}$ from $\noise$ conditioned on the prefixes (histories) given by this observed stream, as explained in the next section. %
Given these samples, the bracketed term is easy to compute (and we then use backprop to get its gradient w.r.t.\@ $\param$, which is a stochastic gradient of the objective \eqref{eqn:nce_inten}).  It eliminates the $\int\sum$ of \cref{eqn:loglik} as desired, replacing it with a sum over the noise events.  For each real or noise event, we compute only two intensities---the true and noise intensities of that event type at that time.

\subsection{Efficient Sampling of Noise Events}\label{sec:sample_q}\label{sec:q}

The \defn{thinning algorithm} \citep{lewis-79-sim,liniger-09-hawkes} is a rejection sampling method for drawing an event stream over a given observation interval $[0,T)$ from a continuous-time autoregressive process.  Suppose we have already drawn the first $i-1$ times, namely $t_1,\ldots,t_{i-1}$.  For every future time $t \geq t_{i-1}$, let $\history(t)$ denote the context $\esh{x}{t}$ consisting only of the events at those times, and define $\inten{}{t \mid \history(t)} \defeq \sum_{k=1}^{K} \inten{k}{t \mid \history(t)}$.  If $\inten{}{t \mid \history(t)}$ were constant at $\intenbound$,  we could draw the next event time as $t_i \sim t_{i-1} + \Exp(\intenbound)$.  We would then set $\es{x}{t}=\nothing$ for all of the intermediate times $t \in (t_{i-1},t_i)$, and finally draw the type $\es{x}{t_i}$ of the event at time $t_i$, choosing $k$ with probability $\inten{k}{t_i \mid \history(t)}\,/\,\intenbound$.  But what if $\inten{}{t \mid \history(t)}$ is not constant?  The thinning algorithm still runs the foregoing method, taking $\intenbound$ to be any upper bound: $\intenbound \geq \inten{}{t \mid \history(t)}$ for all $t \geq t_{i-1}$.  In this case, there may be ``leftover'' probability mass not allocated to any $k$.  This mass is allocated to $\nothing$.  A draw of $\es{x}{t_i}=\nothing$ means there was no event at time $t_i$ after all (corresponding to a rejected proposal).  Either way, we now continue on to draw $t_{i+1}$ and $\es{x}{t_{i+1}}$, using a version of $\history(t)$ that has been updated to include the event or non-event $\es{x}{t_i}$.  The update to $\history(t)$ affects $\inten{}{t \mid \history(t)}$ and the choice of $\intenbound$.\looseness=-1

{\bfseries How to sample noise streams.} To draw a stream $\esmh{x}{t}{m}$ of noise events, we run the thinning algorithm, using the noise intensity functions $\intenq{k}{}$.  However, there is a modification: $\history(t)$ is now defined to be $\esmh{x}{t}{0}$---the history from the \emph{observed} event stream, rather than the previously sampled \emph{noise} events---and is updated accordingly.  This is because in \cref{eqn:nce_inten}, at each time $t$, all of $\{\esm{x}{t}{0}, \esm{x}{t}{1}, \ldots, \esm{x}{t}{M}\}$ are conditioned on $\esmh{x}{t}{0}$ (akin to the discrete-time case).\footnote{\label{fn:gan}This is not essential to the NCE approach, since in principle the $M+1$ elements of the bag could all be drawn from different distributions.  However, the homogeneity simplifies \crefrange{eqn:nce_inten_term}{eqn:nce_inten}, and not having to keep track of previous noise samples simplifies bookkeeping.  Furthermore, much as in a GAN, we expect the discrimination task to be most challenging and informative when the noise intensity $\intenq{k}{}$ at time $t$ is close to the true intensity $\intenstar{k}{t \mid \esmh{x}{t}{0}}$.  Therefore we give the function $\intenq{k}{}$ access to the true history $\esmh{x}{t}{0}$, and will train it to predict something like the true intensity.\looseness=-1}
The full pseudocode is given in \cref{alg:nce} in the supplementary material.

{\bfseries Coarse-to-fine sampling of event types.} Although our NCE method has eliminated the need to integrate over $t$, the thinning algorithm above still sums over $k$ in the definition of $\intenq{}{t \mid \history(t)}$.  For large $K$, this sum is expensive if we take the noise distribution on each training minibatch to be, for example, the $\model_{\param}$ with the current value of $\param$.  That is a \emph{statistically} efficient choice of noise distribution, but we can make a more \emph{computationally} efficient choice.  A simple scheme is to first generate each noise event with a coarse-grained type $c \in \{1,\ldots,C\}$, and then stochastically choose a refinement $k \in \{1,\ldots,K\}$:
\begin{align}\label{eqn:q}
\intenq{k}{t \mid \esmh{x}{t}{0}}
&\defeq \sum_{c =1}^{C} \noise( k \mid c )  \intenq{c}{t \mid \esmh{x}{t}{0}}  \text{ for } k = 1, 2, \ldots, K 
\end{align}
This noise model is parameterized by the functions $\intenq{c}{}$ and the probabilities $\noise(k\mid c)$.  The total intensity is now $\intenq{}{t \mid \history(t)} = \sum_{c=1}^{C} \intenq{c}{t}$, so we now need to examine only $C$ intensity functions, not $K$, to choose $\intenbound$ in the thinning algorithm.  If we \emph{partition} the $K$ types into $C$ coarse-grained clusters (e.g., using domain knowledge), then evaluating the noise probability \eqref{eqn:q} within the training objective \eqref{eqn:nce_inten} is also fast because there is only one non-zero summand $c$ in \cref{eqn:q}.  This simple scheme works well in our experiments. 
However, it could be elaborated by replacing $\noise( k \mid c )$ with $\noise( k \mid c, \esmh{x}{t}{0} )$, by partitioning the event vocabulary automatically, by allowing overlapping clusters, or by using multiple levels of refinement: all of these elaborations are used by the fast hierarchical language model of \citet{mnih-09-scalable}.

{\bfseries How to draw $M$ streams.} An efficient way to draw the union of $M$ i.i.d.\@ noise streams is to run the thinning algorithm once, with all intensities multiplied by $M$.   In other words, the expected number of noise events on any interval is multiplied by $M$.  This scheme does not tell us which specific noise stream $m$ generated a particular noise event, but the NCE objective \eqref{eqn:nce_inten} does not need to know that.  The scheme works only because every noise stream $m$ has the same intensities $\intenq{k}{t \mid \esmh{x}{t}{0}}$ (not $\intenq{k}{t \mid \esmh{x}{t}{m}}$) at time $t$: there is no dependence on the previous events from that stream.  Amusingly, NCE can now run even with non-integer $M$.\looseness=-1

{\bfseries Fractional objective.}
One view of the thinning algorithm is that it accepts the proposed time $t_i$ with probability $\mu = \inten{}{t_i}/\intenbound$, and in that case, labels it as $k$ with probability $\inten{k}{t_i}/\inten{}{t_i}$.  To get a greater diversity of noise samples, we can accept the time with probability 1, if we then scale its term in the objective \eqref{eqn:nce_inten} by $\mu$.  This does not change the expectation  \eqref{eqn:nce_inten} but may reduce the sampling variance in estimating it.  Note that increasing the upper bound $\intenbound$ now has an effect similar to increasing $M$: more noise samples.\footnote{This trick does carry computational cost: we need to train (via backpropagation) on proposals that might not have been accepted otherwise.  This cost is perhaps not worth it when $\mu(t)$ is too low: it might be better spent on increasing $M$ or running more training epochs for a fixed $M$.  As a compromise, if $\mu$ is small ($\leq 0.05$ in our current experiments), we revert to the original approach of accepting the time with probability $\mu$ and not scaling it.}

\subsection{Computational Cost Analysis}\label{sec:runtime}

 State-of-the-art intensity models use neural networks whose state summarizes the history and is updated after each event.  So to train on a single event stream $x$ with $I \geq 0$ events, both MLE and NCE must perform $I$ updates to the neural state.  Both MLE and NCE then evaluate the intensities $\inten{k}{t \mid \esh{x}{t}}$ of these $I$ events, and also the intensities of a number of events that did \emph{not} occur, which almost surely fall at other times.\footnote{In between the events, even if the neural state remains constant,  the intensity functions need not be constant.}\looseness=-1

Consider the \emph{number of intensities evaluated}.
For MLE, assume the Monte Carlo integration technique mentioned in \cref{sec:mle}.  MLE computes the intensity $\inten{}{}$ for $I$ observed events and for all $K$ possible events at each of $J$ sampled times.  We take $J = \rho I$ (with randomized rounding to an integer), where $\rho > 0$ is a hyperparameter \citep{mei-17-neuralhawkes}.
Hence, the expected total number of intensity evaluations is $I + \rho I K$. 

For NCE with the coarse-to-fine strategy, let $J$ be the total number of times \emph{proposed} by the thinning algorithm.  Observe that $\E{I} = \int_0^T \intend{}{t \mid \esh{x}{t}} dt$, and $\E{J} = M \cdot \int_0^T \intenbound(t \mid \esh{x}{t}) dt$.  Thus, $\E{J} \approx M\cdot \E{I}$ if (1) $\intenbound$ at any time is a tight upper bound on the noise event rate $\intenq{}{}$ at that time and (2) the average noise event rate well-approximates the average observed event rate (which should become true very early in training).  To label or reject each of the $J$ proposals, NCE evaluates $C$ noise intensities $\intenq{c}{}$; if the proposal is accepted with label $k$ (perhaps fractionally), it must also evaluate its model intensity $\inten{k}{}$.  The noise and model intensities $\intenq{c}{}$ and $\inten{k}{}$ must also be evaluated for the $I$ observed events.  Hence, the total number of intensity evaluations is at most $(C+1)J + 2I$, which $\approx (C+1)MI + 2I$ in expectation.

Dividing by $I$, 
we see that making $(M+1)(C+1) \leq \rho K$ suffices to make NCE's stochastic objective take less work per observed stream than MLE's stochastic objective.  $M=1$ and $C=1$ is a valid choice.  But NCE's objective is less informed for smaller $M$, so its stochastic gradient carries less information about $\param^*$.
In \cref{sec:exp}, we empirically investigate the effect of $M$ and $C$ on NCE and compare to MLE with different $\rho$.\looseness=-1

\subsection{Theoretical Guarantees: Optimality, Consistency and Efficiency}\label{sec:theory}\label{sec:why}\label{sec:sketch}\label{sec:consistency}\label{sec:efficiency}

The following theorem implies that stochastic gradient ascent on NCE converges to a correct $\param$ (if one exists):
\begin{restatable}[Optimality]{theorem}{optimality}\label{thm:optimality}
	Under \cref{asmp:inten_cont,asmp:exist},
	${\param} \in \argmax_{\param} \nce(\param)$ if and only if $\model_{\param} = \data$.
\end{restatable}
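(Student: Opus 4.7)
The plan is to rewrite $\nce(\param)$ as an expected integral over time and event types via the point-process Campbell (compensator) identity, maximize the resulting integrand pointwise in $\inten{k}{}$, and then leverage \cref{asmp:inten_cont} to upgrade an almost-everywhere equality of intensity functions into the equality of laws $\model_\param = \data$.

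First I would expand both sums in \cref{eqn:nce_inten} using the identity $\E{\sum_{t : x_t = k} f_k(t, x_{[0,t)})} = \E{\int_0^T f_k(t, x_{[0,t)})\,\mu_k(t \mid x_{[0,t)})\,\dt}$, valid for any predictable functional $f_k$ and any point process of intensity $\mu_k$. Applying it to the observed stream (intensity $\intend{k}{}$) and to each of the $M$ noise streams (which, given $\esm{x}{[0,T)}{0}$, all share intensity $\intenq{k}{t \mid \esmh{x}{t}{0}}$ against the \emph{observed} history, per \cref{sec:sample_q}) yields, after abbreviating the common argument $(t \mid \esmh{x}{t}{0})$,
\begin{align*}
\nce(\param) \;=\; \E[\esm{x}{[0,T)}{0} \sim \data]{\, \int_0^T \sum_{k=1}^K \left[\, \intend{k}{}\,\log \frac{\inten{k}{}}{\underline{\lambda}_k} \;+\; M\, \intenq{k}{}\,\log \frac{\intenq{k}{}}{\underline{\lambda}_k} \,\right] \dt \,}.
\end{align*}

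Second I would maximize the $k$-th bracketed term pointwise in $\inten{k}{}$. Writing $a = \intend{k}{}$, $b = M\intenq{k}{}$, and $x = \inten{k}{}$, this term equals $a \log x - (a+b)\log(x+b)$ plus $x$-independent constants. Its derivative simplifies to $b(a-x)/[x(x+b)]$, which is positive for $x < a$ and negative for $x > a$, so whenever $b > 0$ the term attains a unique global maximum at $x = a$. The "if" direction is then immediate: if $\model_\param = \data$, the intensities of $\model_\param$ agree almost everywhere with $\intend{k}{}$, every pointwise summand is at its maximum, and so is $\nce(\param)$.

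For the "only if" direction, suppose $\param \in \argmax_\param \nce(\param)$. The pointwise argument forces $\inten{k}{t \mid \esmh{x}{t}{0};\param} = \intend{k}{t \mid \esmh{x}{t}{0}}$ for almost every $(t, \esmh{x}{t}{0})$ under the joint law induced by $\data$, and every $k$ with $\intenq{k}{} > 0$; as usual for NCE, I assume $\noise$ has positive intensity wherever $\data$ does, so the conclusion covers every relevant $k$. Invoking \cref{asmp:inten_cont}, both intensities are continuous in $t$ almost everywhere, so this a.e.\@ equality upgrades to equality at every continuity point along every history of positive probability under $\data$; the density formula \cref{eqn:loglik} then gives $\model_\param = \data$. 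The main obstacle is precisely this last step: strict monotonicity of the pointwise objective only yields almost-everywhere equality of the intensity functions, and it is \cref{asmp:inten_cont} that bridges from there to equality of the full process laws, in exact analogy with the argument sketched for MLE just after \cref{eqn:mle}.
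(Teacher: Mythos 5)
Your proposal is correct, and its first half coincides with the paper's own argument: your Campbell-identity rewriting of \cref{eqn:nce_inten} is exactly the rearrangement derived in \cref{app:derivation} (culminating in \cref{eqn:nce_entropy}), and your calculus observation that $a \log x - (a+b)\log(x+b)$ has a unique maximum at $x=a$ when $b>0$ is the same fact the paper phrases as a cross-entropy bound between the two distributions in \cref{eqn:dist}. Where you genuinely diverge is the ``only if'' half. The paper argues contrapositively: if $\model_{\bar{\param}} \neq \data$ then the intensities differ at \emph{some} single history and time, and its \cref{lem:nonzero} (the new continuous-time ingredient, proved by jittering that history's event times within small disjoint intervals) upgrades this to a disagreement over an interval of times and a \emph{positive-measure set} of histories, so the objective strictly drops. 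You argue forward instead: any maximizer must attain the pointwise optimum almost everywhere under $\data \otimes \mathrm{Leb}$, and a.e.\ equality of intensities identifies the law. This buys a shorter proof that sidesteps the jittering construction entirely, but it relocates the continuous-time subtlety into your final sentence, which is the one place you are too brisk: ``every history of positive probability'' is vacuous (nontrivial histories have zero probability), and a.e.-in-$t$ equality does not by itself control the intensities evaluated \emph{at the event times} in the product term of \cref{eqn:loglik}, since those form a Lebesgue-null set of times. You need either the remark that, conditionally on each prefix, the next event time has a density and therefore almost surely avoids the null set of bad times, or a more careful use of \cref{asmp:inten_cont} (a.e.\ equality plus continuity at a common continuity point gives equality there, and the remaining discontinuity times are a.s.\ avoided); with that patch your route is sound. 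A final caveat, shared with the paper: uniqueness of the pointwise maximizer requires $\intenq{k}{t \mid \esmh{x}{t}{0}} > 0$, which you state explicitly while the theorem leaves it implicit (the paper's claim that the distributions in \cref{eqn:dist} are distinct silently uses it as well).
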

This theorem falls out naturally when we rearrange the NCE objective in \cref{eqn:nce_inten} as
\begin{align*}%
\int_{t=0}^{T} \sum_{\esmh{x}{t}{0}} \data(\esmh{x}{t}{0} ) \sum_{k=1}^{K} \underline{\lambda}^*_{k}(t  \mid \esmh{x}{t}{0}) \xunderbrace{\left( \frac{{\lambda}^*_{k}(t\mid \esmh{x}{t}{0})}{\underline{\lambda}^*_{k}(t \mid \esmh{x}{t}{0}) } \log \frac{{\lambda}_{k}(t\mid \esmh{x}{t}{0})}{\underline{\lambda}_{k}(t \mid \esmh{x}{t}{0}) } + M \frac{{\lambda}^q_{k}(t\mid \esmh{x}{t}{0})}{\underline{\lambda}^*_{k}(t \mid \esmh{x}{t}{0}) } \log \frac{{\lambda}^q_{k}(t\mid \esmh{x}{t}{0})}{\underline{\lambda}_{k}(t \mid \esmh{x}{t}{0}) } \right)}{\text{a negative cross entropy}} \dt
\end{align*}
where ${\lambda}^*_{k}$ is the intensity under $\data$ and $\underline{\lambda}^*_{k}$ is defined analogously to $\underline{\lambda}_{k}$: see full derivation in \cref{app:derivation}. 
Obviously, $\model_{\param} = \data$ is \emph{sufficient} to maximize the negative cross-entropy for any $k$ given any history and thus maximize $\nce(\param)$. 
It turns out to be also \emph{necessary} because any $\param$ for which $\model_{\param} \neq \data$ would, given \cref{asmp:inten_cont}, end up decreasing the negative cross-entropy for some $k$ over some interval $(t, t')$ given a set of histories with non-zero measure. 
A full proof can be found in \cref{app:optimal}: as we'll see there, although it resembles Theorem~3.2 of \citet{ma-18-nce}, the proof of our \cref{thm:optimality} requires new analysis to handle continuous time, since \citet{ma-18-nce} only worked on discrete-time sequential data. 

Moreover, our NCE method is strongly consistent for any $M \geq 1$ and approaches \emph{Fisher efficiency} when $M$ is large. 
These properties are the same as in \citet{ma-18-nce} and the proofs are also similar. 
Therefore, we leave the related theorems together with their assumptions and proofs to \cref{app:consistency,app:efficiency}. 

\section{Related Work}\label{sec:related}

The original ``binary classification'' NCE principle was proposed by \citet{gutmann-10-nce} to estimate parameters for joint models of the form $\model_{\param}(x) \propto \exp(\text{score}(x,\param))$.  \citet{gutmann-12-nce} applied it to natural image statistics.
It was then widely applied to natural language processing problems such as language modeling \citep{mnih-12-nce}, learning word representations \citep{mikolov-13-distributed} and machine translation \citep{vaswani-13-mt}. 
The ``ranking-based'' variant \citep{jozefowicz-16-lm}\footnote{\citet{jozefowicz-16-lm} considered it a competitor to NCE; \citet{ma-18-nce} argued for regarding it as a variant.}
is better suited for conditional distributions \citep{ma-18-nce}, including
those used in autoregressive models, and has shown strong
performance in large-scale language modeling with recurrent neural
networks.  

\citet{guo-18-nce} tried NCE on (univariate) point
processes but used the binary classification version.  They used
discrimination problems of the form: ``Is event $k$ at time $t'$ the true
next event following history $\eshclosed{x}{t}$, or was it generated from a noise distribution?''
Their classification-based NCE variant is \emph{not} well-suited to conditional distributions \citep{ma-18-nce}: this complicates their method since they needed to build a parametric model of the local normalizing constant, giving them weaker theoretical guarantees and worse performance (see \cref{sec:exp}).
In contrast, we choose the ranking-based variant: our key idea of how to apply this to continuous time is new (see \cref{sec:nce_obj}) and requires new analysis (see \cref{app:mle_proof_details,app:nce_proof_details}).

\section{Experiments}\label{sec:exp}

We evaluate our NCE method on several synthetic and real-world datasets, with comparison to MLE, \citet{guo-18-nce} (denoted as b-NCE), and least-squares estimation (LSE) \citep{eichler-et-al-2017}.
b-NCE has the same hyper-parameter $M$ as our NCE, namely the number of noise events.
LSE's objective involves an integral over times $[0,T)$, so it has the same hyper-parameter $\rho$ as MLE.

On each of the datasets, we will show the estimated log-likelihood  on the held-out data achieved by the models trained on the NCE, b-NCE, MLE and LSE objectives, as training consumes increasing amounts of computation---measured by the number of intensity evaluations and the elapsed wall-clock time (in seconds).\footnote{Our code is written in PyTorch \citep{paszke-17-pytorch} and will be released upon paper acceptance.  Our experiments were run on NVIDIA Tesla K80.}  We always set the minibatch size $B$ to exhaust the GPU capacity, so smaller $\rho$ or $M$ allows larger $B$.  Larger $B$ in turn increases the number of epochs per unit time (but decreases the possibly beneficial variance in the stochastic gradient updates).

\subsection{Synthetic Datasets}\label{sec:synthetic}

In this section, we work on two synthetic datasets with $K=10000$ event types. 
We choose the \defn{neural Hawkes process (NHP)} \citep{mei-17-neuralhawkes} to be our model $\model_{\param}$.\footnote{We use the public PyTorch implementation.  NHP is a thoughtfully
designed framework that has been demonstrated effective on temporal data, but our method can also be used for other models with parametric intensity functions.}
For the noise distribution $\noise$, we choose $C=1$ and also parametrize its intensity function as a neural Hawkes process. 

The first dataset has sequences drawn from the randomly initialized $\noise$ such that we can check how well our NCE method could perform with the ``ground-truth'' noise distribution $\noise = \data$; the sequences of the second dataset were drawn from a randomly initialized neural Hawkes process to evaluate both methods in the case that the model family $\model_{\param}$ is well-specified. 
We show (the zoomed-in views of the interesting parts of) multiple learning curves on each dataset in \cref{fig:nhp}: NCE is observed to consume substantially fewer intensity evaluations and less wall-clock time than MLE to achieve competitive log-likelihood, while b-NCE and LSE are slower and only converge to lower log-likelihood. 
Note that the wall-clock time may not be proportional to the number of intensities because computing intensities is not all of the work (e.g., there are LSTM states of both $\model_{\param}$ and $\noise$ to compute and store on GPU). 

We also observed that models that achieved comparable log-likelihood---no matter how they were trained---achieved comparable prediction accuracies (measured by root-mean-square-error for time and error rate for type). Therefore, our NCE still beats other methods at converging quickly to the highest prediction accuracy.

\begin{figure*}%
	\begin{center}
		\begin{subfigure}[b]{0.49\linewidth}
			\begin{center}
				\includegraphics[width=0.48\linewidth]{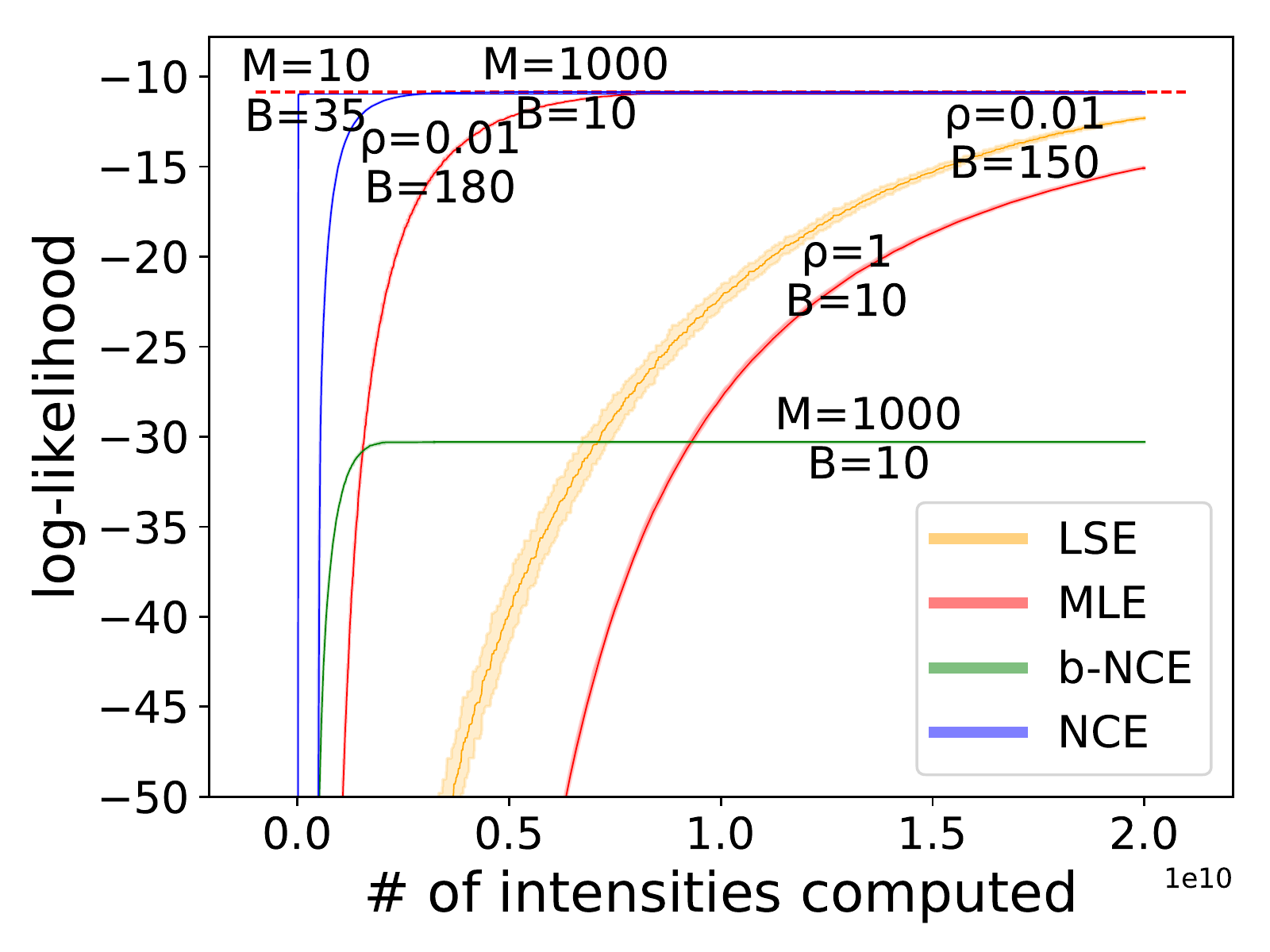}
				\includegraphics[width=0.48\linewidth]{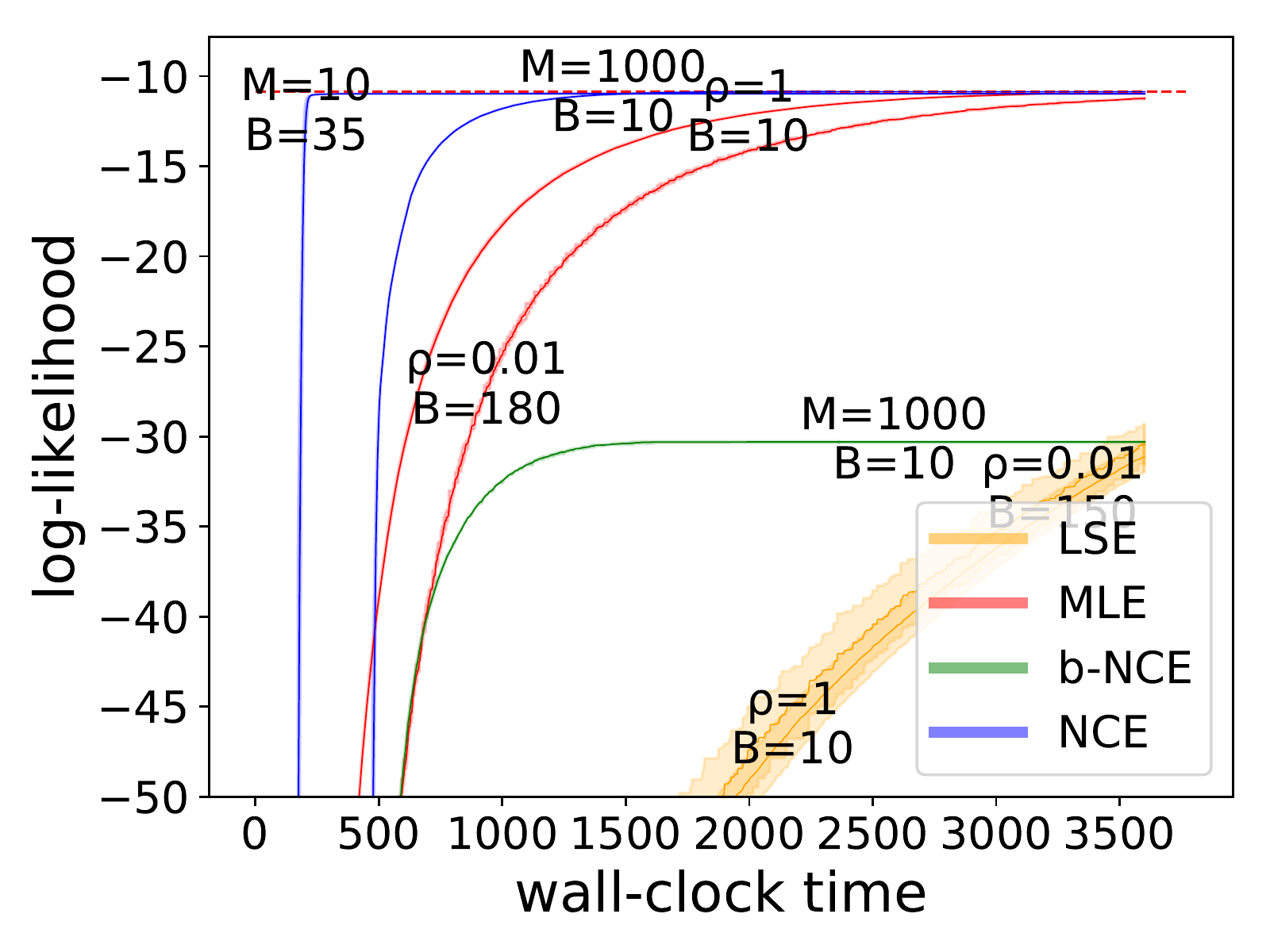}
				\vspace{-2pt}
				\caption{Synthetic-1: $\data = \noise$. }\label{fig:nhp_qeqp}
			\end{center}
		\end{subfigure}
		~
		\begin{subfigure}[b]{0.49\linewidth}
			\begin{center}
				\includegraphics[width=0.48\linewidth]{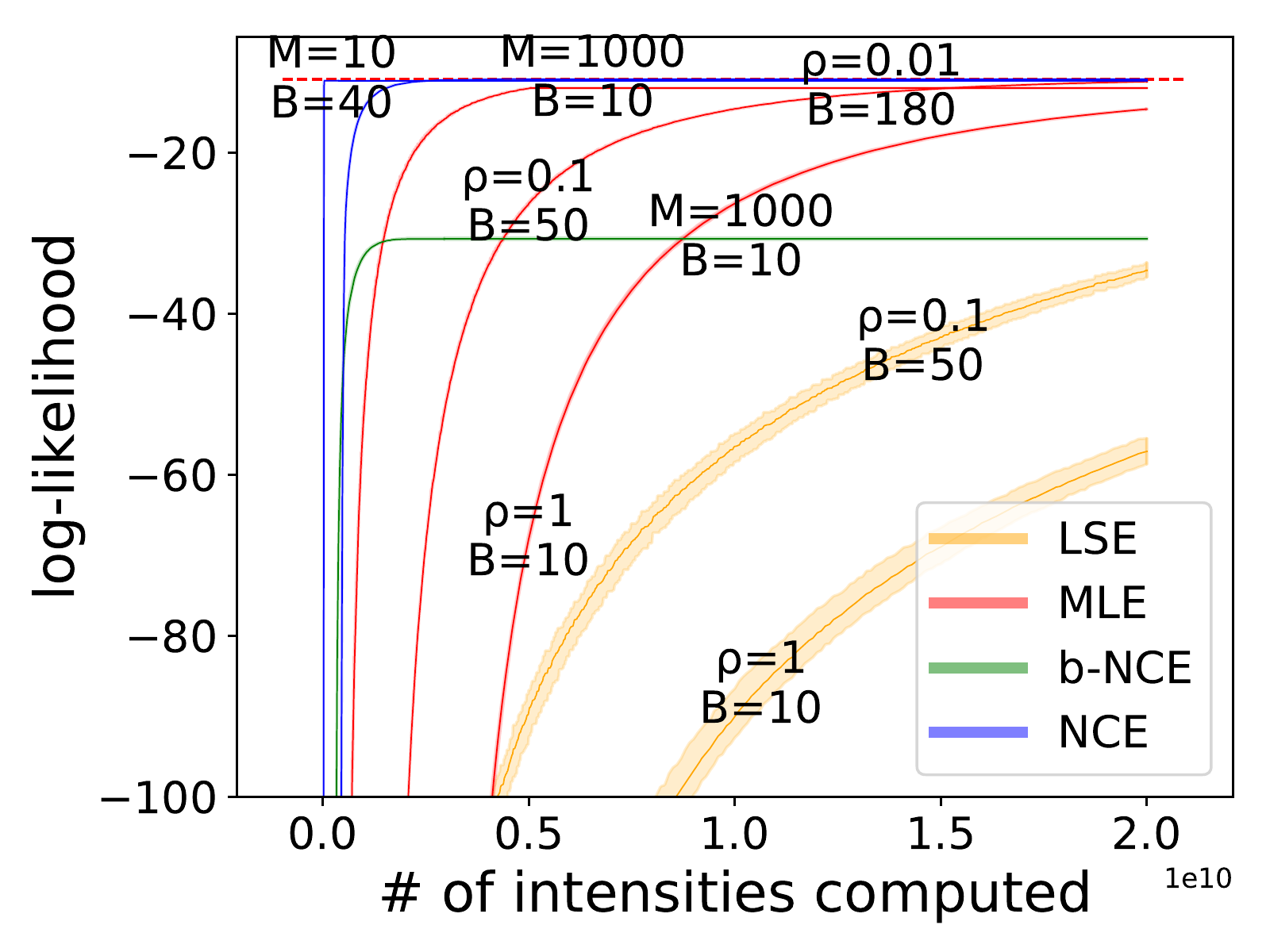}
				\includegraphics[width=0.48\linewidth]{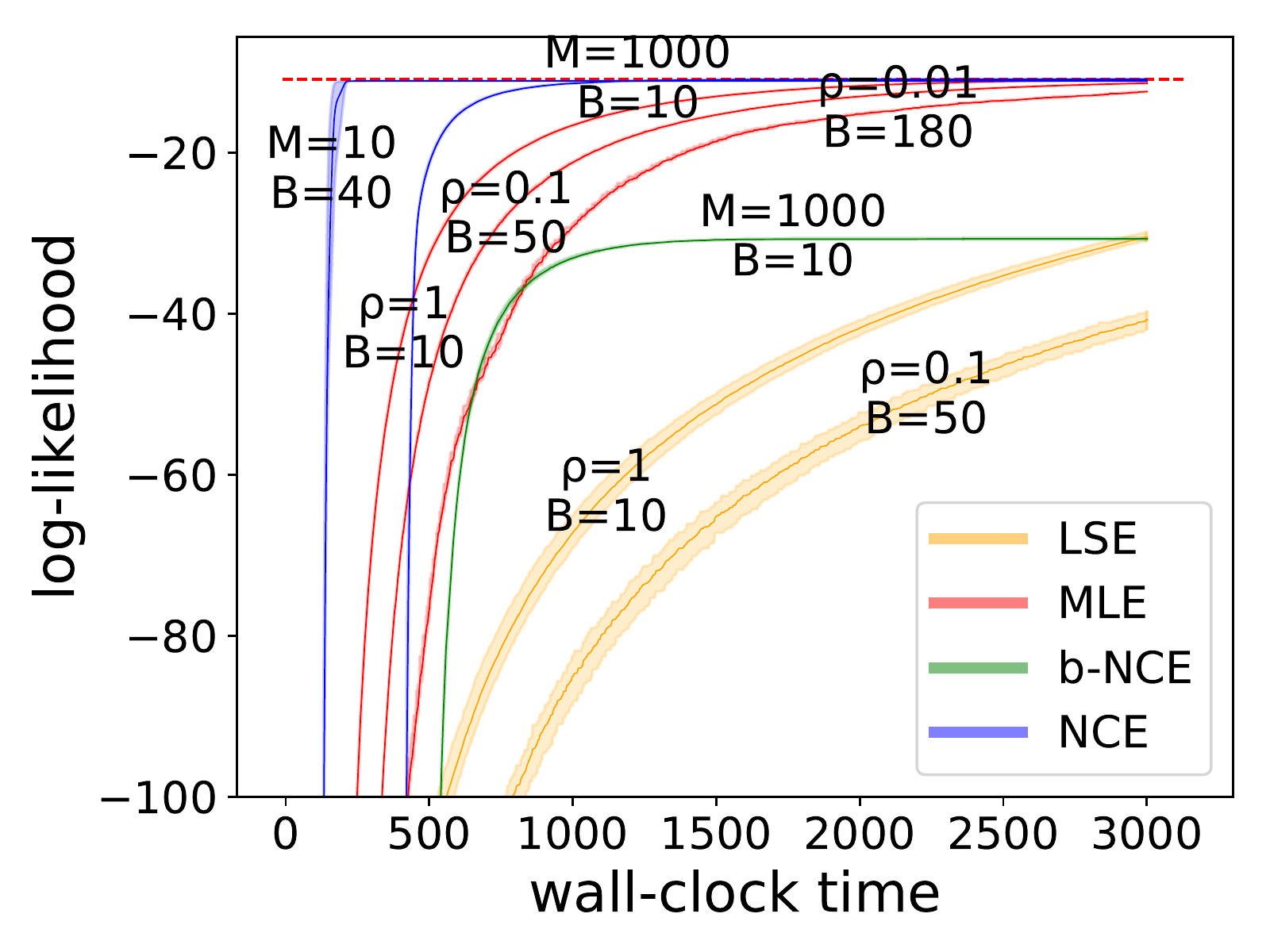}
				\vspace{-2pt}
				\caption{Synthetic-2: $\data$ and $\model_{\param}$ are of the same family.}\label{fig:nhp_qneqp_train}
			\end{center}
		\end{subfigure}
		\vspace{-12pt}
		\caption{Learning curves of MLE and NCE on synthetic datasets. The displayed $\rho$ and $M$ values are among the better ones that we found during hyperparameter search. The horizontal red line marks the highest held-out log-likelihood achieved by MLE. The shaded area of each curve shows the range of log-likelihood of three independent runs; most of them are too narrow to be easily noticed.}
		\label{fig:nhp}
	\end{center}
\end{figure*}

{\bfseries Ablation Study I: Always or Never Redraw Noise Samples.}
During training, for each observed data, we can choose to either redraw a new set of noise samples every time we train on it or keep reusing the old samples: we did the latter for \cref{fig:nhp}. 
In experiments doing the former, we observed better generation  for tiny $M$ (e.g., $M=1$) but substantial slow-down (because of sampling) with no improved generalization for large $M$ (e.g, $1000$). Such results suggest that we always reuse old samples as long as $M$ is reasonably large: it is then what we do for all other experiments throughout the paper. See \cref{app:p0p1} for more details of this ablation study, including learning curves of the ``always redraw'' strategy in \cref{fig:nhp_p0p1}. 

\subsection{Real-World Social Interaction Datasets with Large $K$}\label{sec:real_large}

We also evaluate the methods on several real-world social interaction datasets that have many event types: see \cref{app:data_details} for details (e.g, data statistics, pre-processing, data splits, etc). 
In this section, we show the learning curves on two particularly interesting datasets (explained below) in \cref{fig:social} and leave those on the other datasets (which look similar) to \cref{app:real_large}. 

{\bfseries EuroEmail} \citep{paranjape-17-motif}. This dataset contains time-stamped emails between anonymized members of a European research institute. We work on a subset of $100$ most active members   and then end up with $K=10000$ possible event types and $50000$ training event tokens. 

{\bfseries BitcoinOTC} \citep{kumar-16-edge}. This dataset contains time-stamped rating (positive/negative) records between anonymized users on the BitcoinOTC trading platform. We work on a subset of 100 most active users and then end up with $K=19800$ (self-rating not allowed) possible event types but only $1000$ training event tokens: this is an extremely data-sparse setting. 

On these datasets, our model $\model_{\param}$ is still a neural Hawkes process.  For the noise distribution $\noise$, we experiment with not only the coarse-to-fine neural process with $C=1$ but also a homogeneous Poisson process.  
As shown in \cref{fig:social}, our NCE tends to perform better with the neural $\noise$: this is because a neural model can better fit the data and thus provide better training signals, analogous to how a good generator can benefit the discriminator in the generative adversarial framework \citep{goodfellow-14-gan}. 
NCE with Poisson $\noise$ also shows benefits through the early and middle training stages, but it might suffer larger variance (e.g., \cref{fig:email_train_poisson}) and end up with slightly worse generalization (e.g., \cref{fig:bitcoin_train_poisson}). 
MLE with different $\rho$ values all eventually achieve the highest log-likelihood ($\approx-10$ on EuroEmail and $\approx-15$ on BitcoinOTC), but most of these runs are so slow that their peaks are out of the current views. 
The b-NCE runs with different $M$ values are slower, achieve worse generalization and suffer larger variance than our NCE; interestingly, b-NCE prefers Poisson $\noise$ to neural $\noise$ (better generalization on EuroEmail and smaller variance on BitcoinOTC).
In general, LSE is the slowest, and the highest log-likelihood it can achieve ($\approx-30$ on EuroEmail and $\approx-25$ on BitcoinOTC) is lower than that of MLE and our NCE. 

\begin{figure*}%
	\begin{center}
		\begin{subfigure}[t]{0.49\linewidth}
			\renewcommand\thesubfigure{\alph{subfigure}1}
			\begin{center}
				\includegraphics[width=0.49\linewidth]{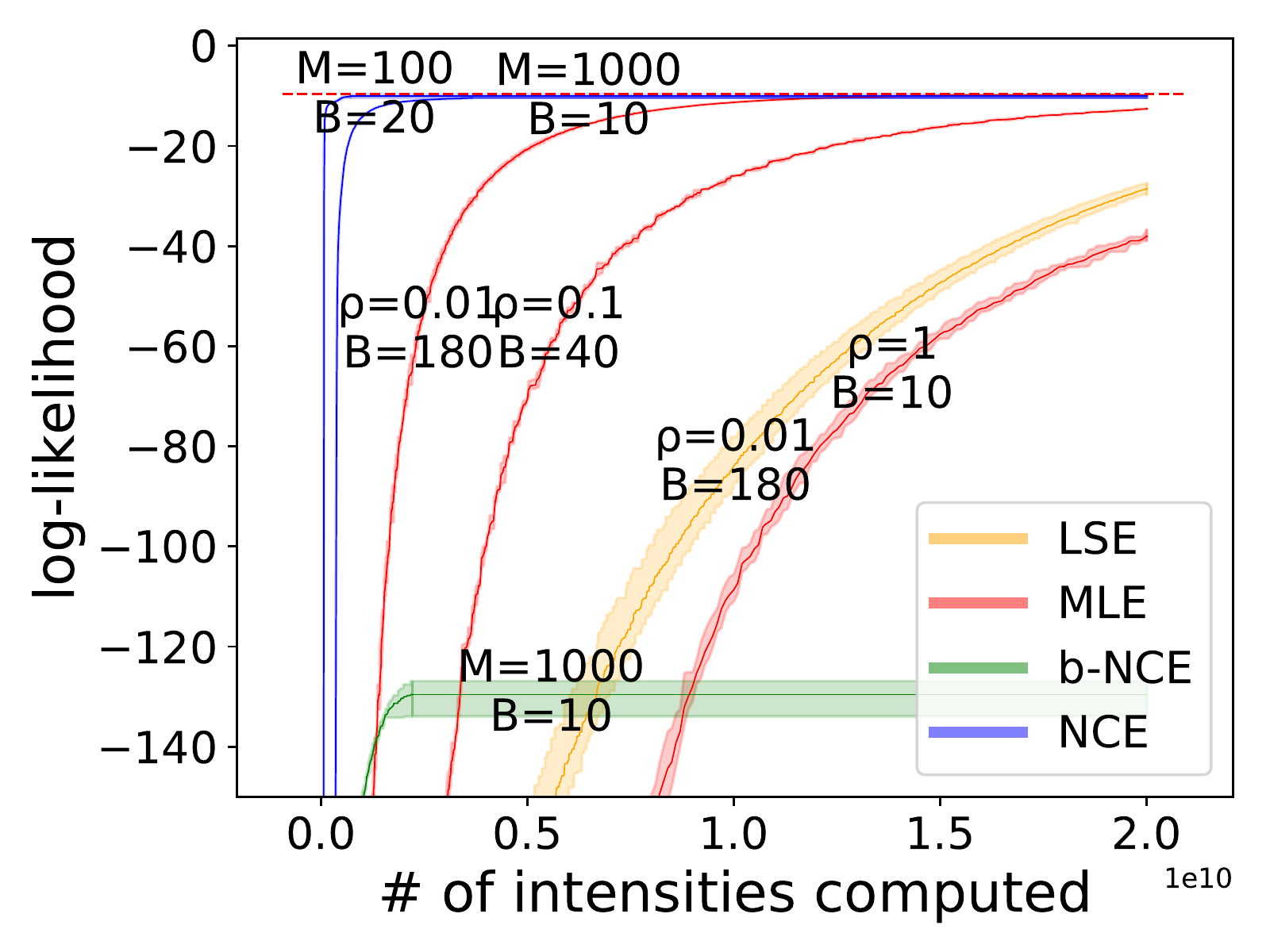}
				\includegraphics[width=0.49\linewidth]{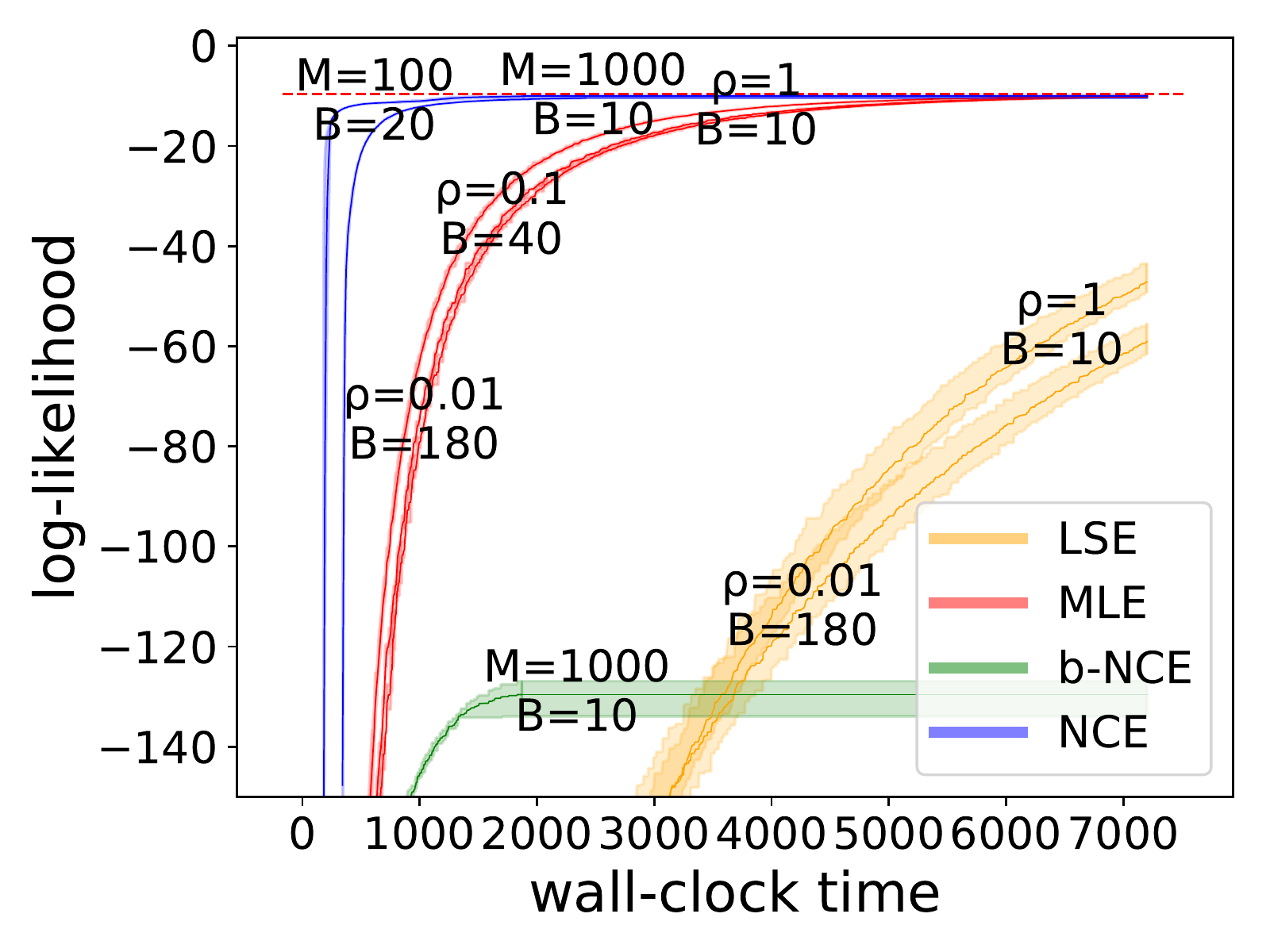}
				\vspace{-16pt}
				\caption{EuroEmail: neural $\noise$}\label{fig:email_train_neural}
			\end{center}
		\end{subfigure}
		~
		\begin{subfigure}[t]{0.49\linewidth}
			\addtocounter{subfigure}{-1}
			\renewcommand\thesubfigure{\alph{subfigure}2}
			\begin{center}
				\includegraphics[width=0.49\linewidth]{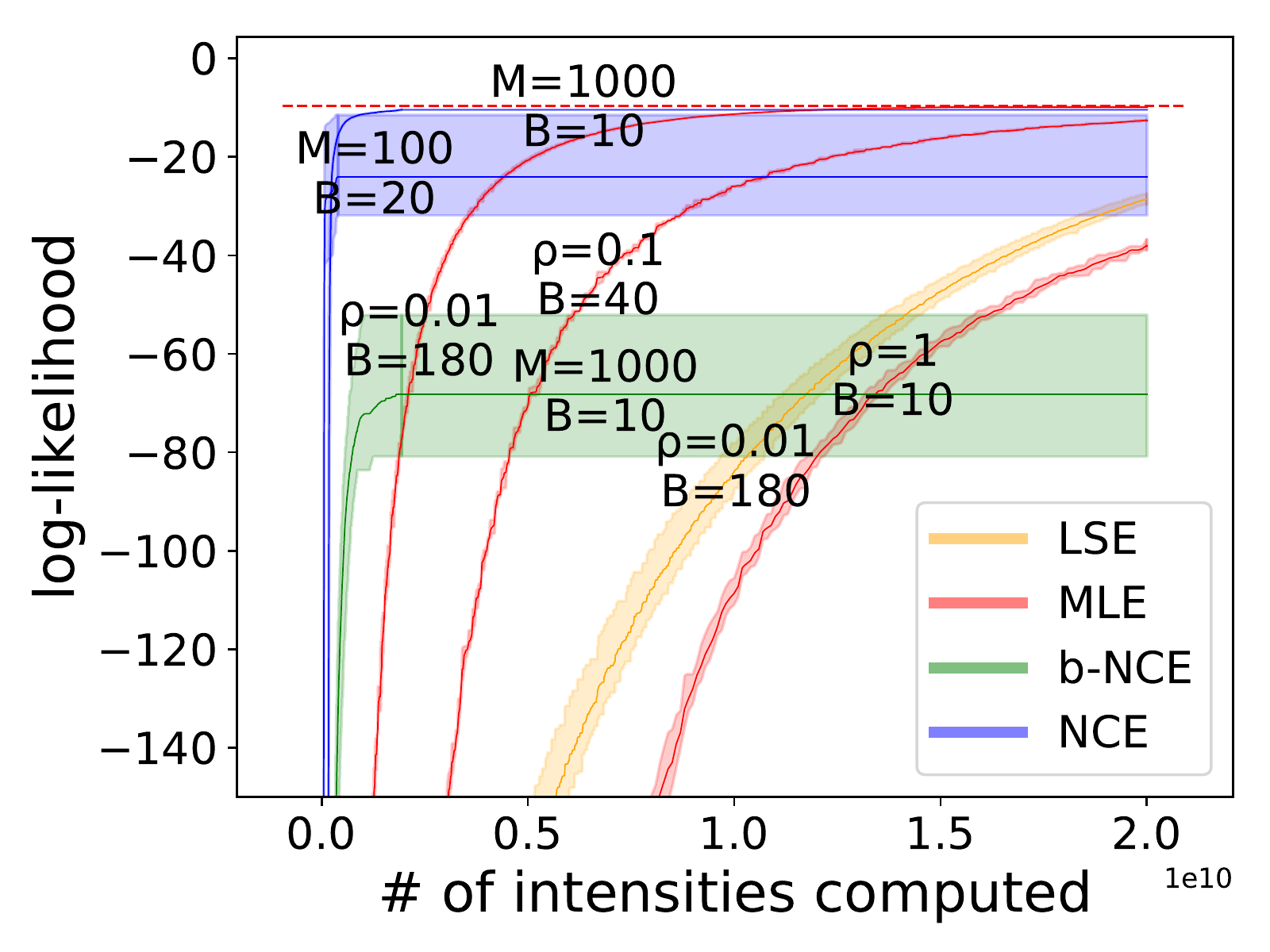}
				\includegraphics[width=0.49\linewidth]{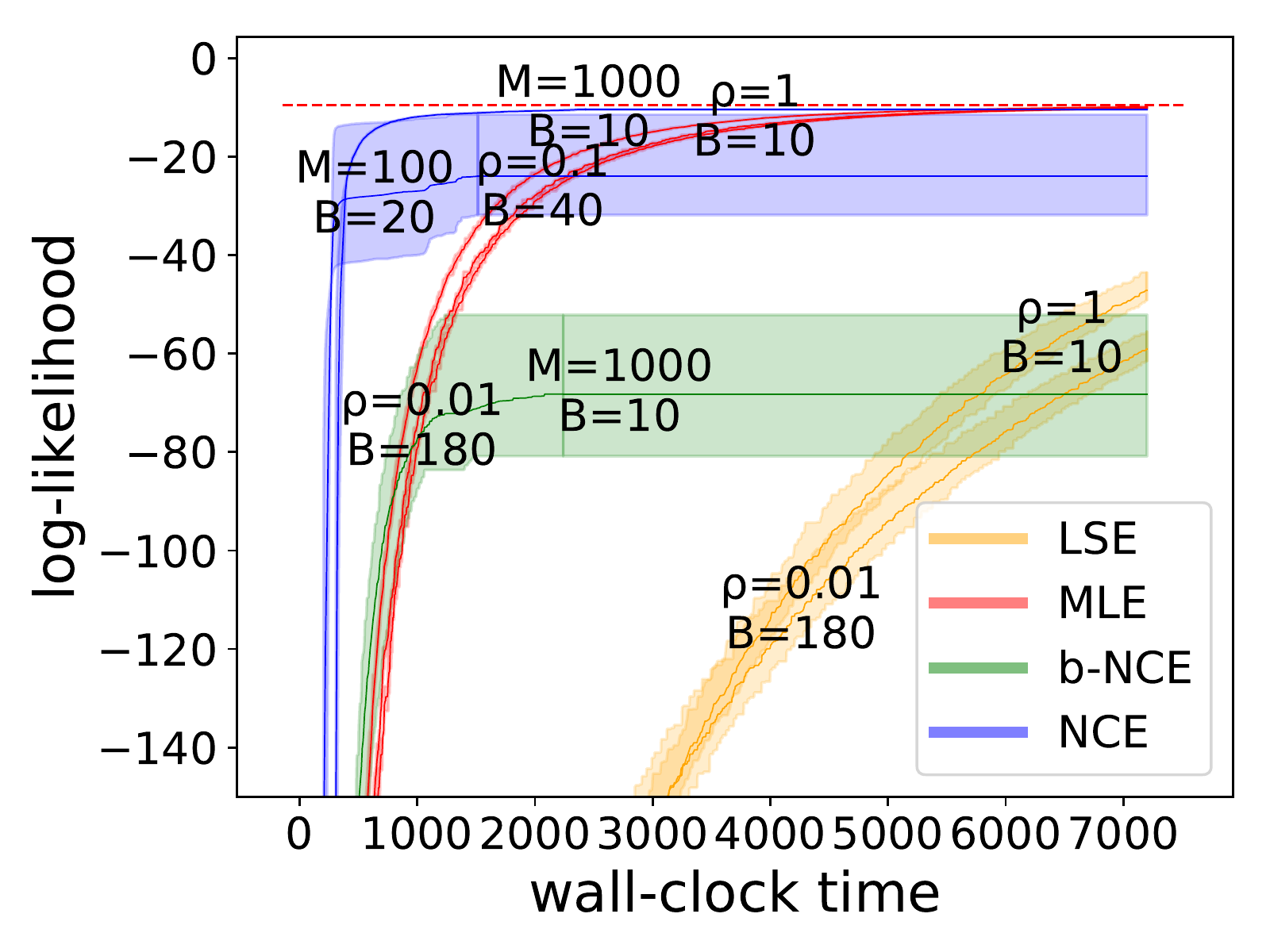}
				\vspace{-16pt}
				\caption{EuroEmail: Poisson $\noise$}\label{fig:email_train_poisson}
			\end{center}
		\end{subfigure}
		
		\begin{subfigure}[b]{0.49\linewidth}
			\renewcommand\thesubfigure{\alph{subfigure}1}
			\begin{center}
				\includegraphics[width=0.49\linewidth]{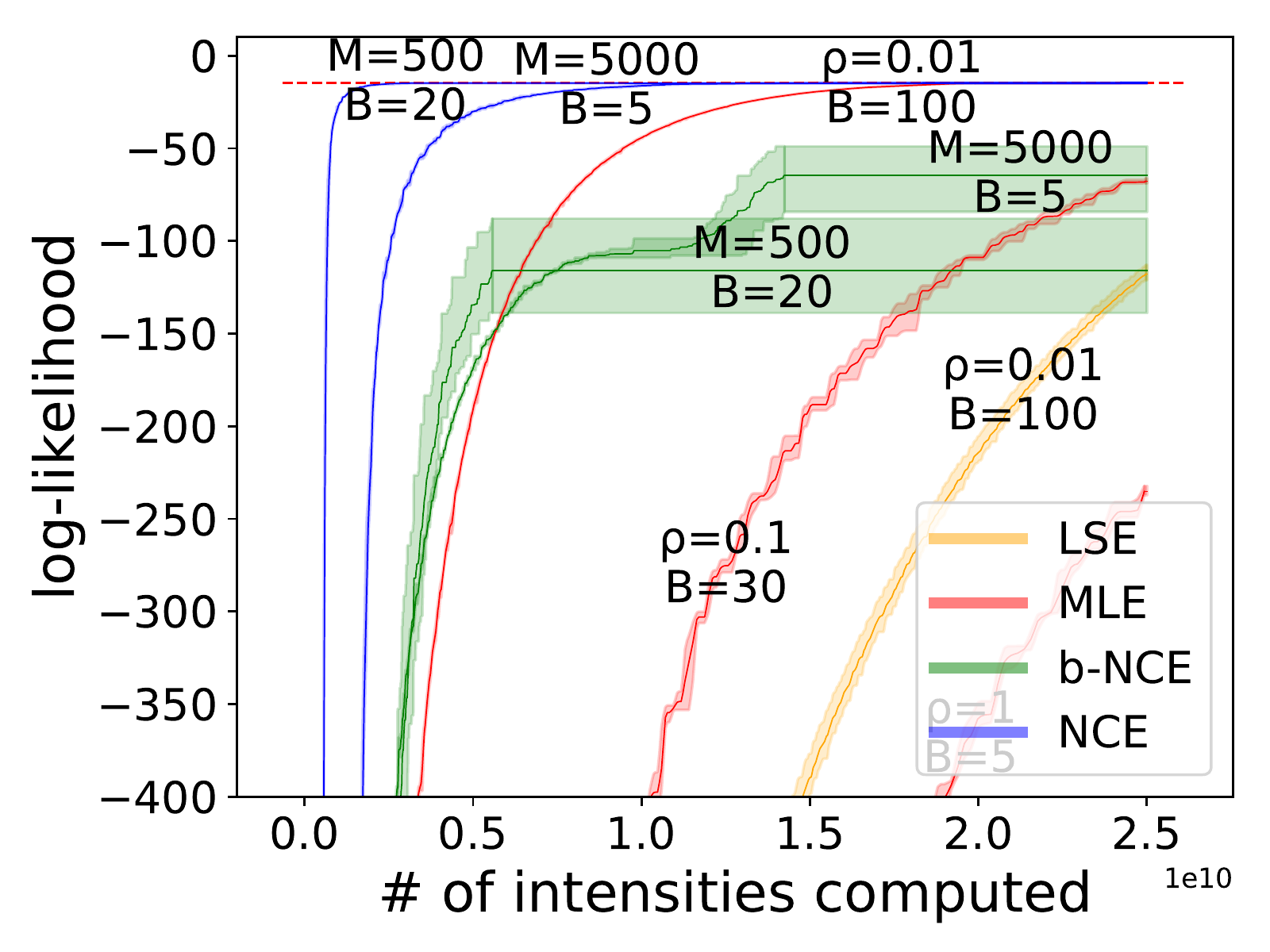}
				\includegraphics[width=0.49\linewidth]{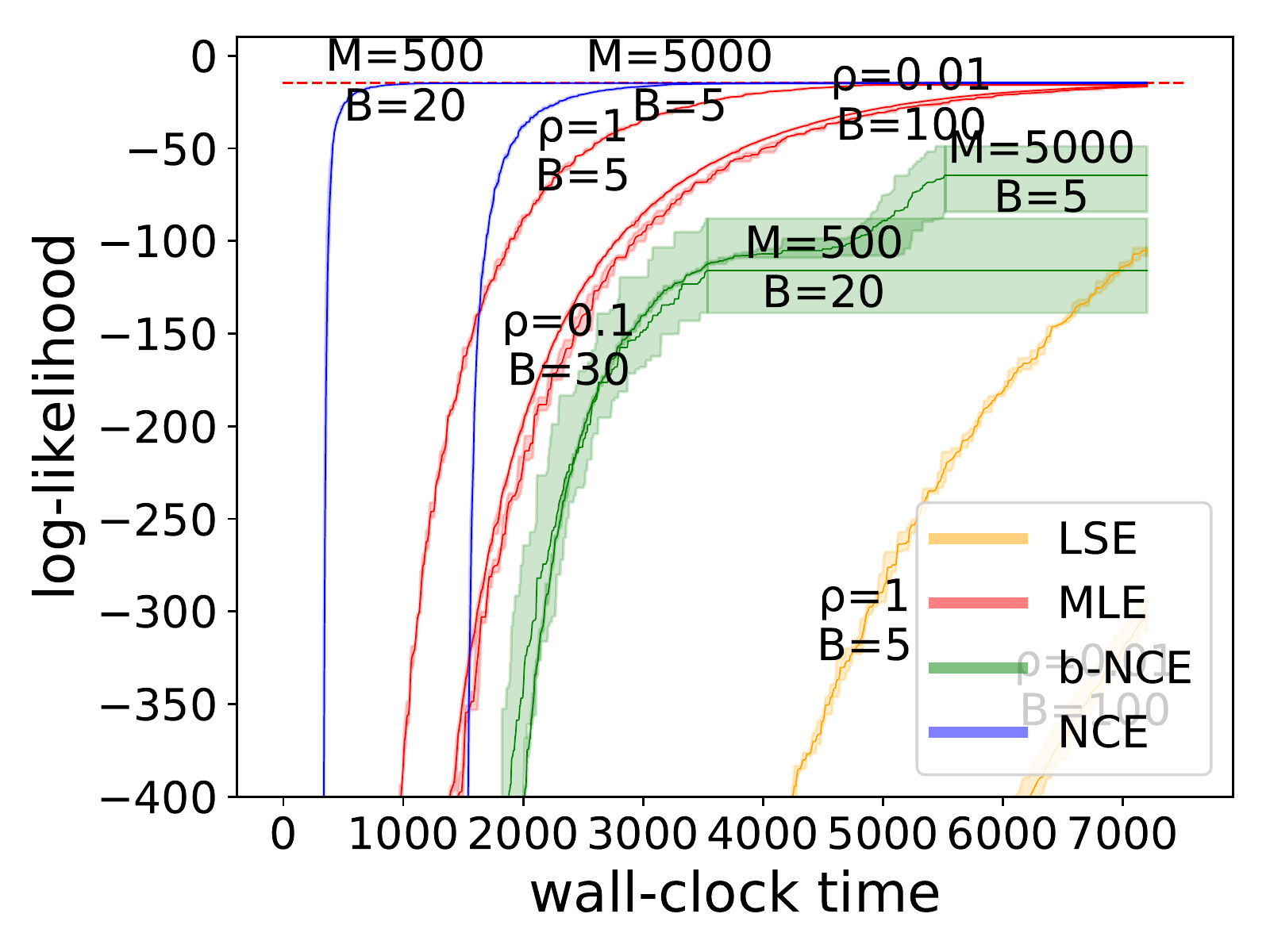}
				\vspace{-16pt}
				\caption{BitcoinOTC: neural $\noise$.}\label{fig:bitcoin_train_neural}
			\end{center}
		\end{subfigure}
		~
		\begin{subfigure}[b]{0.49\linewidth}
			\addtocounter{subfigure}{-1}
			\renewcommand\thesubfigure{\alph{subfigure}2}
			\begin{center}
				\includegraphics[width=0.49\linewidth]{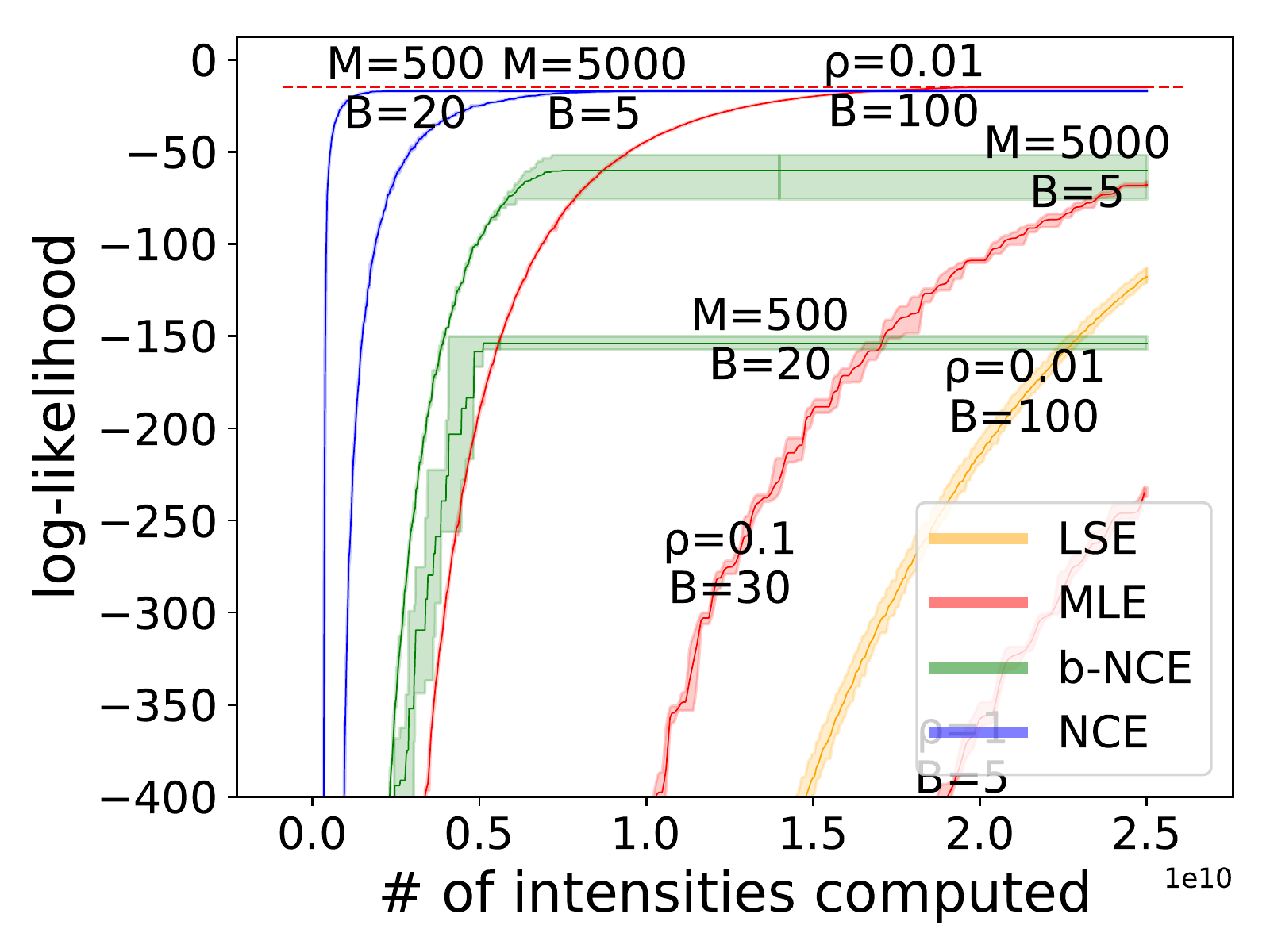}
				\includegraphics[width=0.49\linewidth]{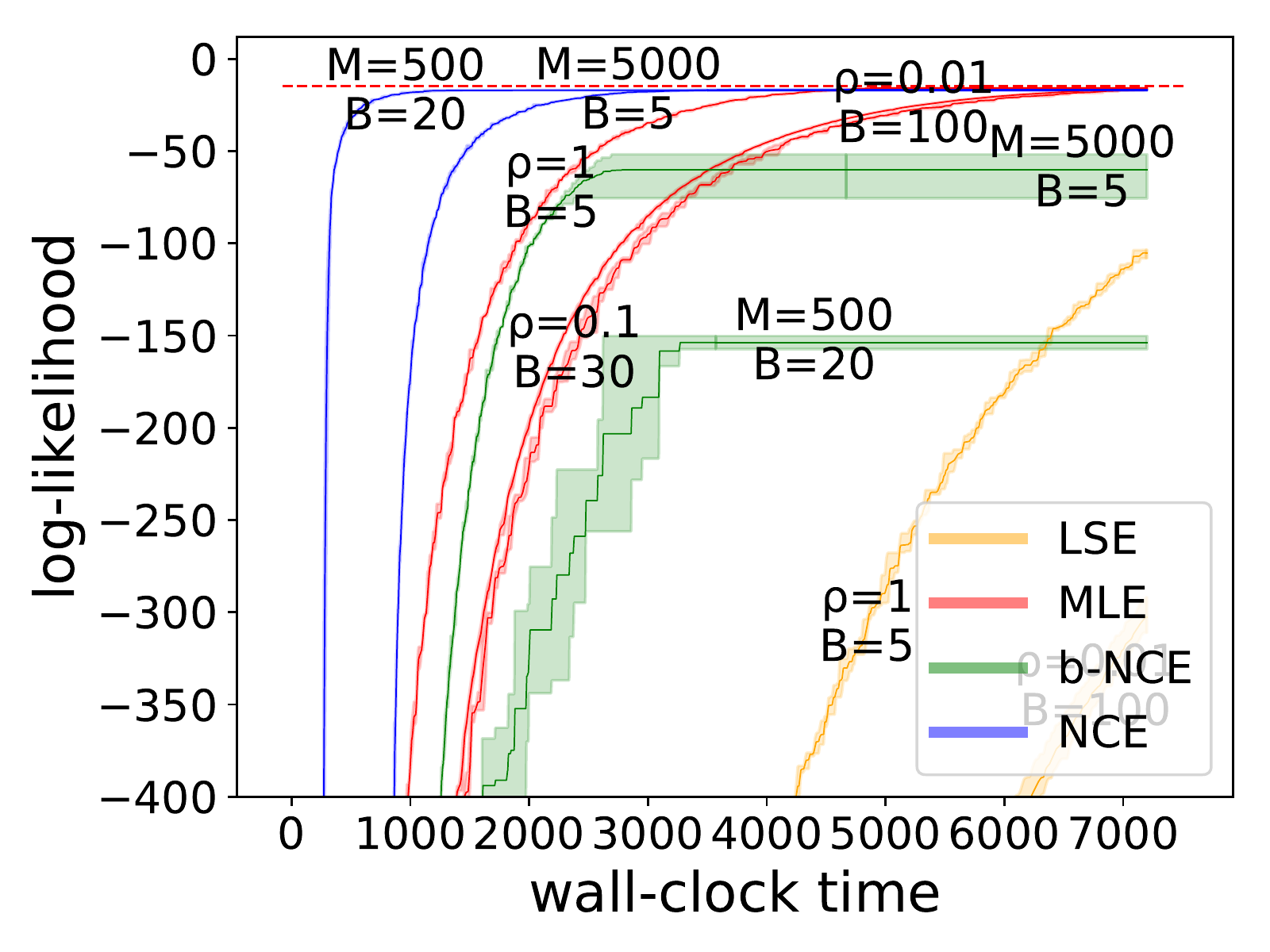}
				\vspace{-16pt}
				\caption{BitcoinOTC: Poisson $\noise$.}\label{fig:bitcoin_train_poisson}
			\end{center}
		\end{subfigure}
		\vspace{-16pt}
		\caption{Learning curves of MLE and NCE on the real-world social interaction datasets.}
		\label{fig:social}
	\end{center}
\end{figure*}

{\bfseries Ablation Study II: Trained vs. Untrained $\noise$.}
The noise distributions (except the ground-truth $\noise$ for Synthetic-1) that we have used so far were all pretrained on the same data as we train $\model_{\param}$. The training cost is cheap: e.g., on the datasets in this section, the actual wall-clock training time for the neural $\noise$ is less than 2\% of what is needed to train $\model_{\param}$, and training the Poisson $\noise$ costs even less.\footnote{We train $\noise$ by MLE: summing $C$ intensities is not expensive when $C$ is small. In \cref{app:train_q}, we document an alternative strategy that uses $\noise$ as the noise distribution to train itself by NCE.}\footnote{For the experiments in \cref{sec:real_dyna}, training the neural $\noise$ takes only $< 1/100$ of what needed to train $\model_{\param}$.}
We also experimented with untrained noise distributions and they were observed to perform worse (e.g., worse generalization, slower convergence and larger variance). 
See \cref{app:random_q} for more details, including learning curves (\cref{fig:social_random}).

\subsection{Real-World Dataset with Dynamic Facts}\label{sec:real_dyna}\label{sec:robocup}\label{sec:iptv}

In this section, we let $\model_{\param}$ be a \defn{neural Datalog through time (NDTT)} model \citep{mei-20-neuraldatalog}.  Such a model can be used in a domain in which new events dynamically update the set of event types and the structure of their intensity functions. We evaluate our method on training the domain-specific models presented by \citet{mei-20-neuraldatalog}, on the same datasets they used:

\begin{figure*}%
	\begin{center}
		\begin{subfigure}[b]{0.49\linewidth}
			\begin{center}
				\includegraphics[width=0.49\linewidth]{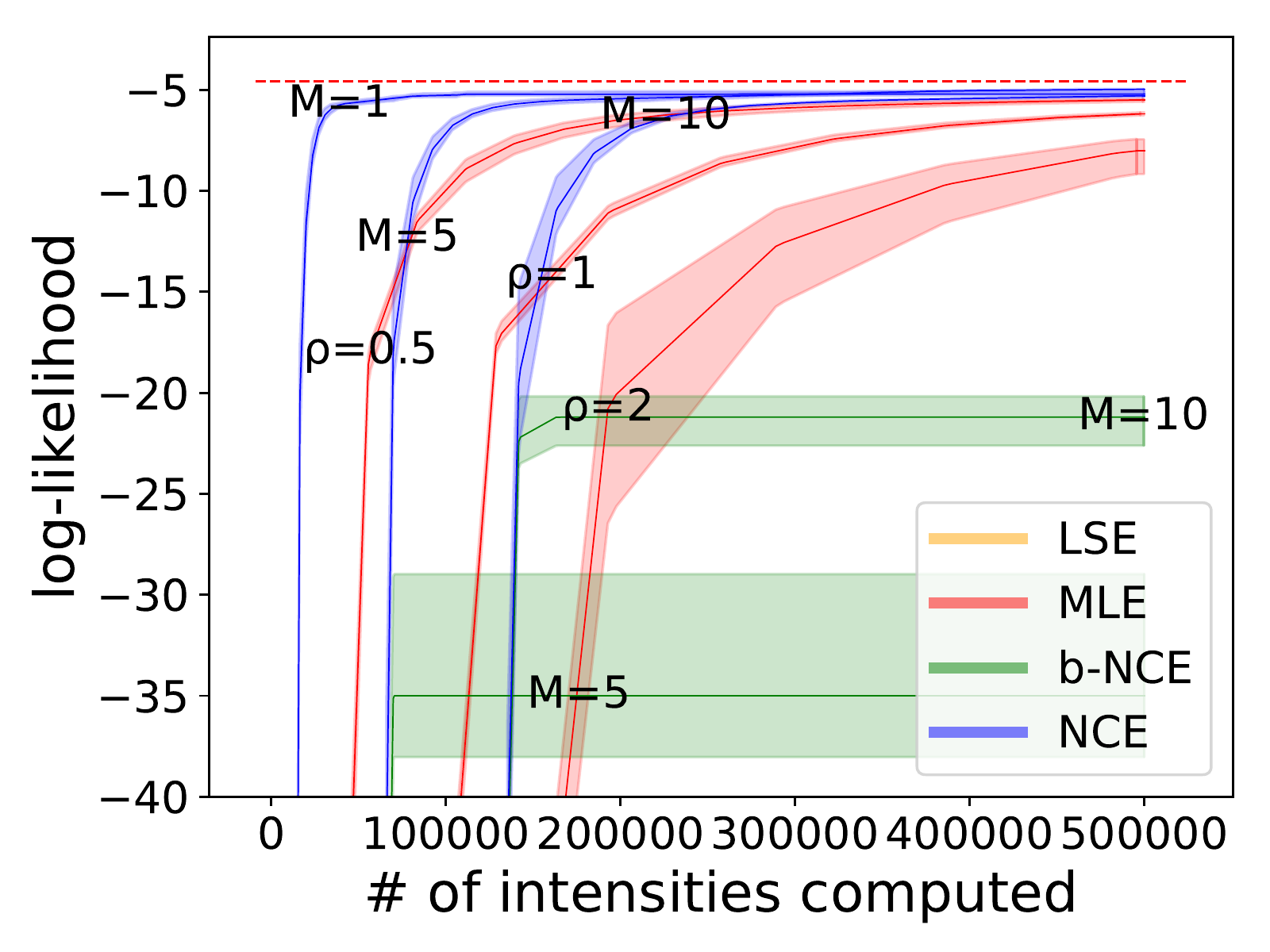}
				\includegraphics[width=0.49\linewidth]{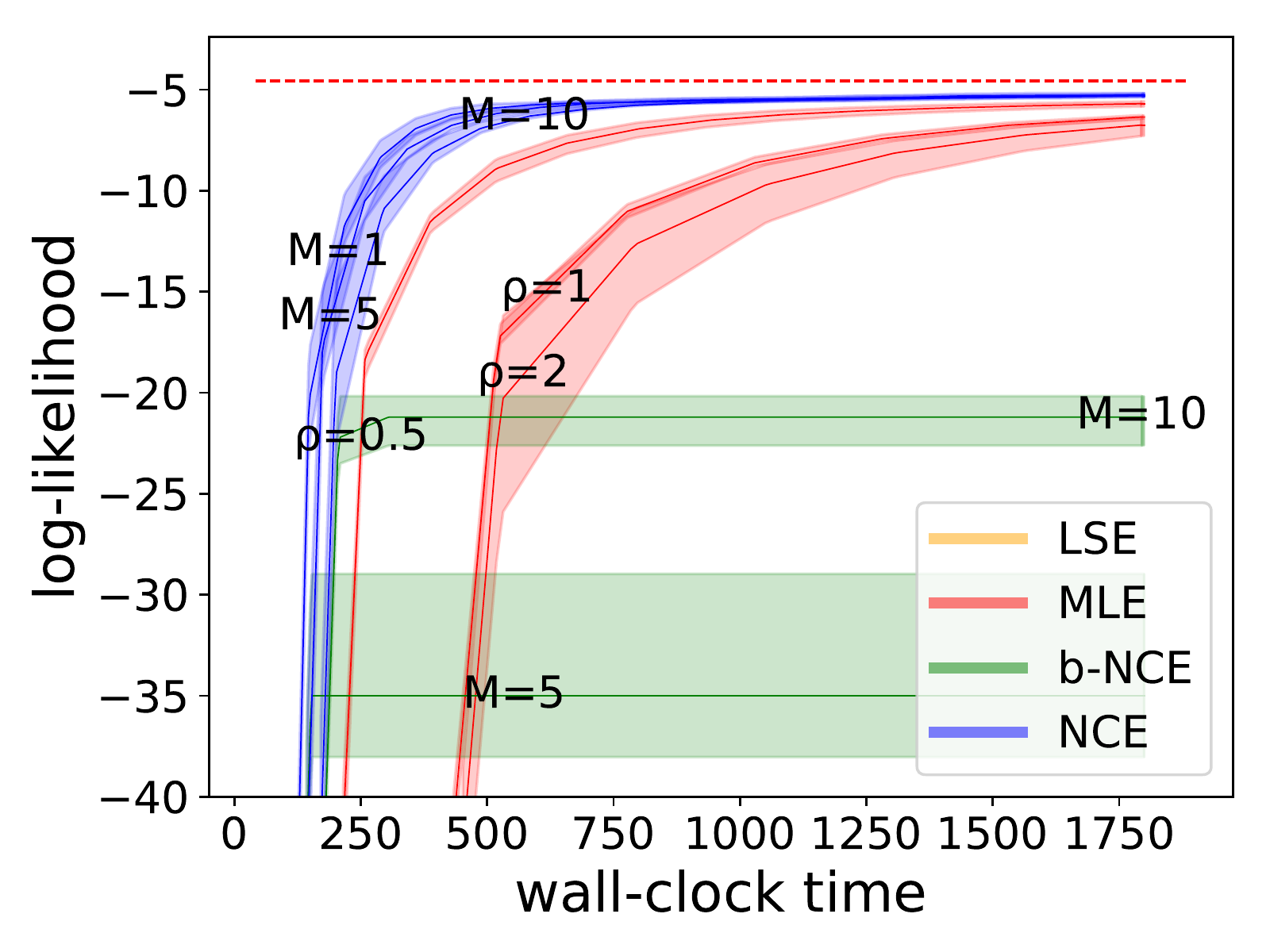}
				\vspace{-16pt}
				\caption{RoboCup: $C=5$}\label{fig:robocup_c5}
			\end{center}
		\end{subfigure}
		~
		\begin{subfigure}[b]{0.49\linewidth}
			\begin{center}
				\includegraphics[width=0.49\linewidth]{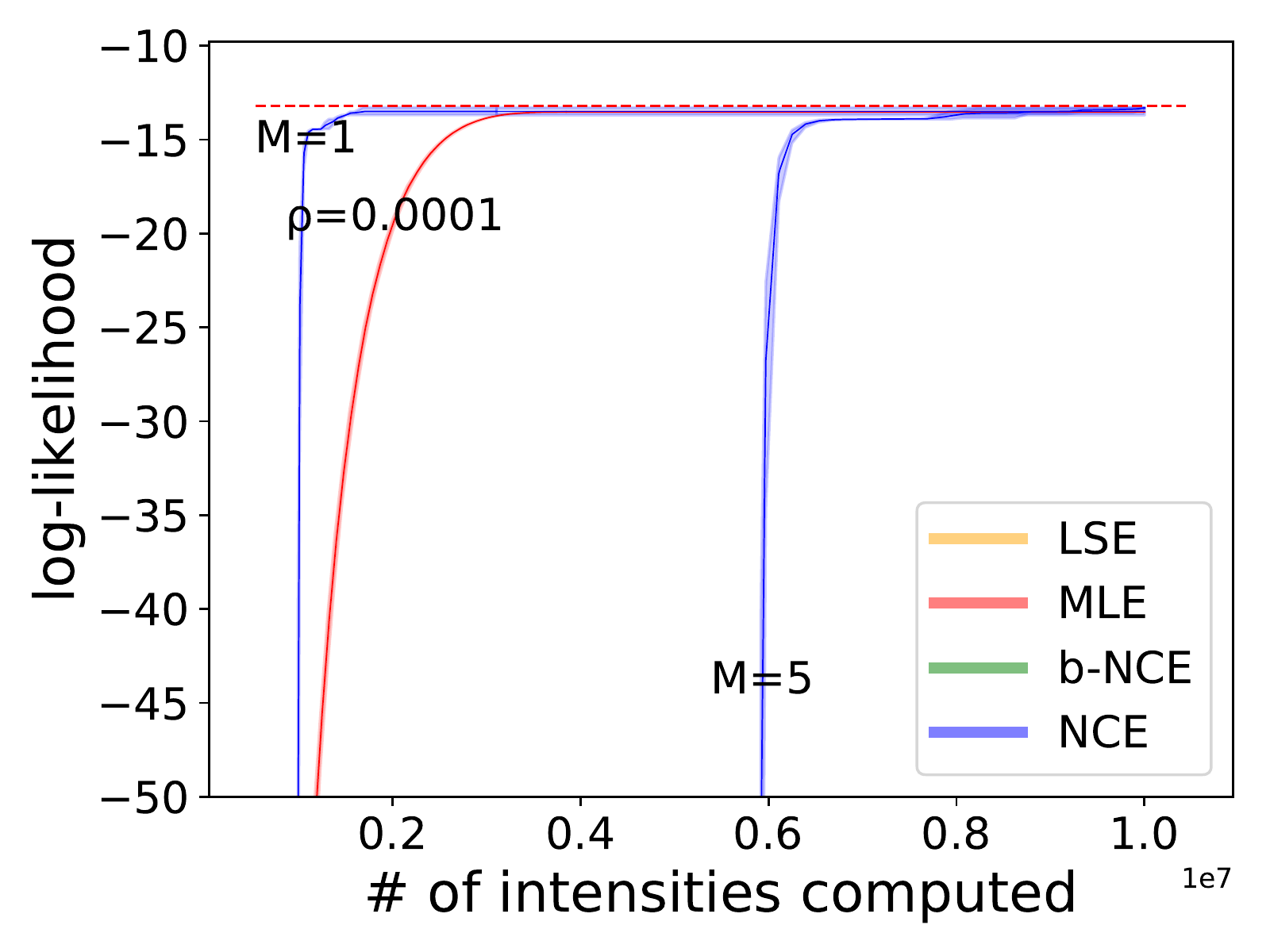}
				\includegraphics[width=0.49\linewidth]{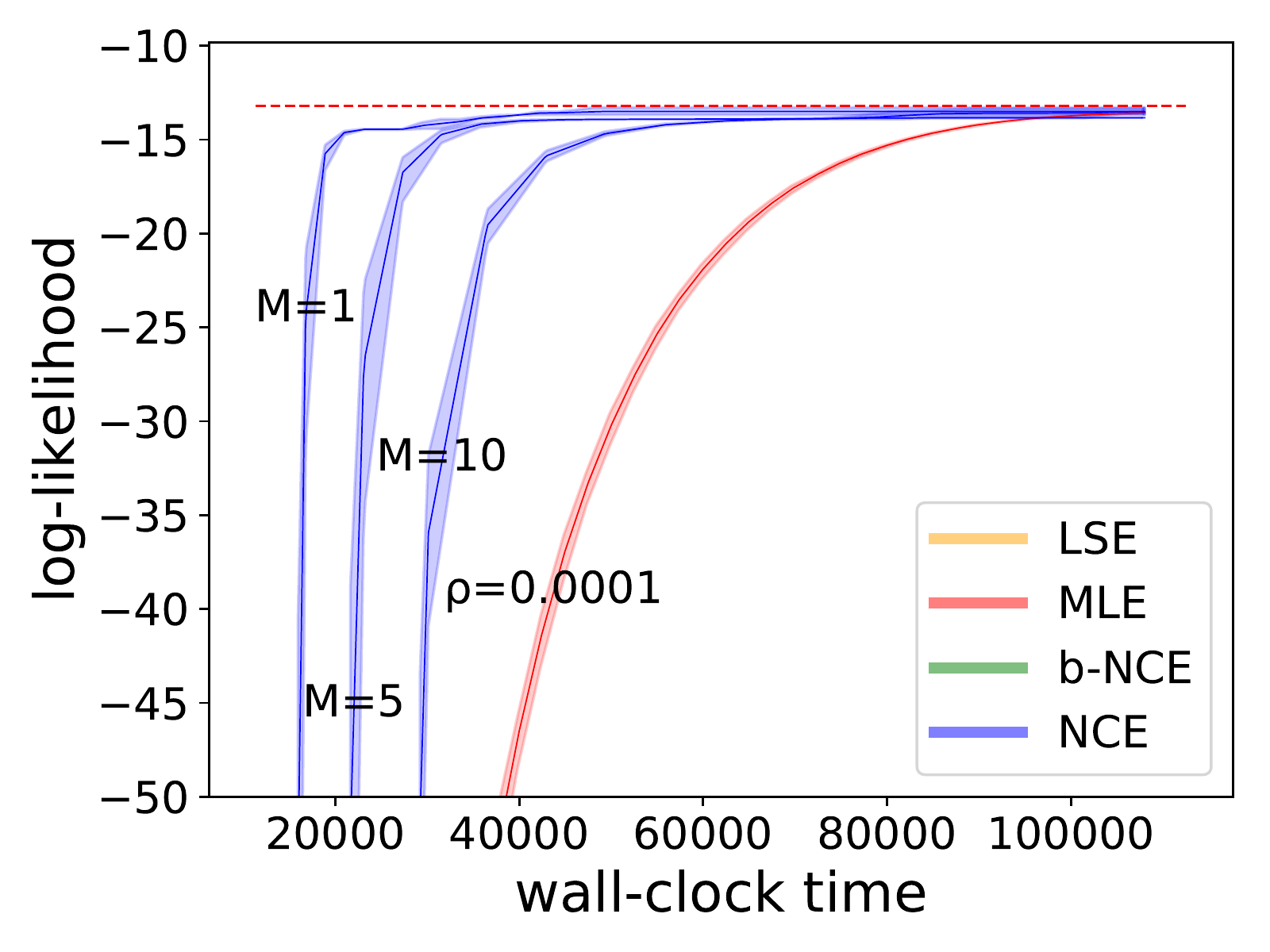}
				\vspace{-16pt}
				\caption{IPTV: $C=49$}\label{fig:iptv_c49}
			\end{center}
		\end{subfigure}
		\vspace{-16pt}
		\caption{Learning curves of MLE and NCE on RoboCup and IPTV datasets.}
		\label{fig:dyna}
	\end{center}
\end{figure*}

{\bfseries RoboCup} \citep{chen-08-robocup}. This dataset logs actions of robot players during RoboCup soccer games.
The set of possible event types dynamically changes over time (e.g., only ball possessor can kick or pass) as the ball is frequently transferred between players (by passing or stealing). 
There are $K=528$ event types over all time, but only about $20$ of them are possible at any given time. 

{\bfseries IPTV} \citep{xu-18-online}. This dataset contains time-stamped records of 1000 users watching 49 TV programs over 2012.  
The users are not able to watch a program until it is released, so the number of event types grows from $K=0$ to $K=49000$ as programs are released one after another. 

The learning curves are displayed in \cref{fig:dyna}. 
On RoboCup, NCE only progresses faster than MLE at the early to middle training stages: $M=5$ and $M=10$ eventually achieved the highest log-likelihood at the same time as MLE and $M=1$ ended up with worse generalization. 
On IPTV, NCE with $M=1$ turned out to learn as well as and much faster than MLE. 
The dynamic architecture makes it hard to parallelize the intensity computation;  MLE in particular performs poorly in wall-clock time, and we needed a remarkably small $\rho$ to let MLE finish within the shown time range.
On both datasets, b-NCE and LSE drastically underperform MLE and NCE: their learning curves increase so slowly and achieve such poor generalization that only b-NCE with $M=5$ and $M=10$ are visible on the graphs.

{\bfseries Ablation Study III:  Effect of $C$.}
In the above figures, we used the coarse-to-fine neural model as $\noise$.  On RoboCup, each action (kick, pass, etc.) has a coarse-grained intensity, so $C=5$.  On IPTV, we partition the event vocabulary by TV program, so $C=49$.
We also experimented with $C=1$: this reduces the number of intensities computed during sampling on both datasets, but has (slightly) worse generalization on RoboCup (since $\noise$ becomes less expressive). 
See \cref{app:diff_c} for more details, including learning curves (\cref{fig:c1}).

\section{Conclusion}\label{sec:conclusion}\label{sec:future}

We have introduced a novel instantiation of the general NCE principle for training a multivariate point process model.
Our objective has the same optimal parameters as the log-likelihood objective (if the model is well-specified), but needs fewer expensive function evaluations and much less wall-clock time in practice. 
This benefit is demonstrated on several synthetic and real-world datasets. 
Moreover, our method is provably consistent and efficient under mild assumptions. 

\section*{Broader Impact}

Our method is designed to train a multivariate point process for probabilistic modeling of event streams.  By describing this method and releasing code, we hope to facilitate probabilistic modeling of continuous-time sequential data in many domains.  Good probabilistic models make it possible to impute missing events, anticipate possible future events, and react accordingly.  They can also be used in exploratory data analysis.

In addition to making it more feasible and more convenient for domain experts to train complex models with many event types, our method reduces the energy cost necessary to do so.

Examples of event streams with potential social impact include a person's detailed food/exercise/sleep/medical event log, their social media interactions, their interactions with educational exercises or games, or their educational or workplace events (for time management and career planning); a customer's interactions with a particular company or its website or other user interface; a company's sales and purchases; geopolitical events, financial events, human activity modeling, music modeling, and dynamic resource requests.  

We are not aware of any negative broader impacts that might stem from publishing this work.

\section*{Disclosure of Funding Sources}

This work was supported by a Ph.D.\ Fellowship Award to the first author by Bloomberg L.P. and a National Science Foundation Grant No.\@ 1718846 to the last author, as well as two Titan X Pascal GPUs donated by NVIDIA Corporation and compute cycles from the Maryland Advanced Research Computing Center.

\section*{Acknowledgments}
We thank the anonymous NeurIPS reviewers and meta-reviewer as well as Hongteng Xu for helpful comments on this paper. 

\bibliographystyle{icml2020_url}
\vspace{0pt}
\bibliography{nce-pointprocess}

\begin{thebibliography}{28}
\providecommand{\natexlab}[1]{#1}
\providecommand{\url}[1]{\texttt{#1}}
\expandafter\ifx\csname urlstyle\endcsname\relax
  \providecommand{\doi}[1]{doi: #1}\else
  \providecommand{\doi}{doi: \begingroup \urlstyle{rm}\Url}\fi

\bibitem[Baran et~al.(2008)Baran, Demaine, and Katz]{baran2008optimally}
Baran, I., Demaine, E.~D., and Katz, D.~A.
\newblock \href {https://link.springer.com/article/10.1007/s00453-007-9093-7}
  {Optimally adaptive integration of univariate {L}ipschitz functions}.
\newblock \emph{Algorithmica}, 2008.

\bibitem[Chen \& Mooney(2008)Chen and Mooney]{chen-08-robocup}
Chen, D.~L. and Mooney, R.~J.
\newblock \href
  {https://courses.cs.washington.edu/courses/cse591f/08au/GroupPapers/LearningToSportscast.pdf}
  {Learning to sportscast: {A} test of grounded language acquisition}.
\newblock In \emph{Proceedings of the International Conference on Machine
  Learning (ICML)}, 2008.

\bibitem[Daley \& Vere-Jones(2007)Daley and Vere-Jones]{daley-07-poisson}
Daley, D.~J. and Vere-Jones, D.
\newblock \href {https://link.springer.com/book/10.1007/978-0-387-49835-5}
  {\emph{An Introduction to the Theory of Point Processes, Volume {II}: General
  Theory and Structure}}.
\newblock Springer, 2007.

\bibitem[Du et~al.(2016)Du, Dai, Trivedi, Upadhyay, Gomez-Rodriguez, and
  Song]{du-16-recurrent}
Du, N., Dai, H., Trivedi, R., Upadhyay, U., Gomez-Rodriguez, M., and Song, L.
\newblock \href {https://www.kdd.org/kdd2016/papers/files/rpp1081-duA.pdf}
  {Recurrent marked temporal point processes: Embedding event history to
  vector}.
\newblock In \emph{Proceedings of the 22nd ACM SIGKDD International Conference
  on Knowledge Discovery and Data Mining}, 2016.

\bibitem[Eichler et~al.(2017)Eichler, Dahlhaus, and Dueck]{eichler-et-al-2017}
Eichler, M., Dahlhaus, R., and Dueck, J.
\newblock \href {https://arxiv.org/abs/1605.06759} {Graphical modeling for
  multivariate {H}awkes processes with nonparametric link functions}.
\newblock \emph{Journal of Time Series Analysis}, 38\penalty0 (2):\penalty0
  225--242, March 2017.

\bibitem[Ferguson(1996)]{ferguson-96-course}
Ferguson, T.~S.
\newblock \href {https://doi.org/10.1201/9781315136288} {\emph{A Course in
  Large Sample Theory}}.
\newblock Chapman and Hall, 1996.

\bibitem[Goodfellow et~al.(2014)Goodfellow, Pouget-Abadie, Mirza, Xu,
  Warde-Farley, Ozair, Courville, and Bengio]{goodfellow-14-gan}
Goodfellow, I., Pouget-Abadie, J., Mirza, M., Xu, B., Warde-Farley, D., Ozair,
  S., Courville, A., and Bengio, Y.
\newblock \href
  {https://papers.nips.cc/paper/5423-generative-adversarial-nets.pdf}
  {Generative adversarial nets}.
\newblock In \emph{Advances in Neural Information Processing Systems
  (NeurIPS)}, 2014.

\bibitem[Guo et~al.(2018)Guo, Li, and Liu]{guo-18-nce}
Guo, R., Li, J., and Liu, H.
\newblock \href {https://doi.org/10.24963/ijcai.2018/303} {{INITIATOR}:
  Noise-contrastive estimation for marked temporal point process}.
\newblock In \emph{Proceedings of the International Joint Conference on
  Artificial Intelligence (IJCAI)}, 2018.

\bibitem[Gutmann \& Hyv{\"a}rinen(2010)Gutmann and
  Hyv{\"a}rinen]{gutmann-10-nce}
Gutmann, M. and Hyv{\"a}rinen, A.
\newblock \href {http://proceedings.mlr.press/v9/gutmann10a/gutmann10a.pdf}
  {Noise-contrastive estimation: {A} new estimation principle for unnormalized
  statistical models}.
\newblock In \emph{Proceedings of the International Conference on Artificial
  Intelligence and Statistics (AISTATS)}, 2010.

\bibitem[Gutmann \& Hyv{\"a}rinen(2012)Gutmann and
  Hyv{\"a}rinen]{gutmann-12-nce}
Gutmann, M.~U. and Hyv{\"a}rinen, A.
\newblock \href
  {https://www.jmlr.org/papers/volume13/gutmann12a/gutmann12a.pdf}
  {Noise-contrastive estimation of unnormalized statistical models, with
  applications to natural image statistics}.
\newblock \emph{Journal of Machine Learning Research}, 2012.

\bibitem[Hawkes(1971)]{hawkes-71}
Hawkes, A.~G.
\newblock \href {https://www.jstor.org/stable/2334319} {Spectra of some
  self-exciting and mutually exciting point processes}.
\newblock \emph{Biometrika}, 1971.

\bibitem[Jozefowicz et~al.(2016)Jozefowicz, Vinyals, Schuster, Shazeer, and
  Wu]{jozefowicz-16-lm}
Jozefowicz, R., Vinyals, O., Schuster, M., Shazeer, N., and Wu, Y.
\newblock \href {https://arxiv.org/abs/1602.02410} {Exploring the limits of
  language modeling}.
\newblock \emph{arXiv preprint arXiv:1602.02410}, 2016.

\bibitem[Kingma \& Ba(2015)Kingma and Ba]{kingma-15}
Kingma, D. and Ba, J.
\newblock \href {https://arxiv.org/abs/1412.6980} {{Adam}: A method for
  stochastic optimization}.
\newblock In \emph{Proceedings of the International Conference on Learning
  Representations (ICLR)}, 2015.

\bibitem[Kumar et~al.(2016)Kumar, Spezzano, Subrahmanian, and
  Faloutsos]{kumar-16-edge}
Kumar, S., Spezzano, F., Subrahmanian, V., and Faloutsos, C.
\newblock \href {https://cs.stanford.edu/~srijan/pubs/wsn-icdm16.pdf} {Edge
  weight prediction in weighted signed networks}.
\newblock In \emph{Proceedings of the International Conference on Data Mining
  (ICDM)}, 2016.

\bibitem[Leskovec et~al.(2010)Leskovec, Huttenlocher, and
  Kleinberg]{leskovec-10-governance}
Leskovec, J., Huttenlocher, D., and Kleinberg, J.
\newblock \href {https://www.cs.cornell.edu/home/kleinber/icwsm10-govern.pdf}
  {Governance in social media: {A} case study of the {W}ikipedia promotion
  process}.
\newblock In \emph{Proceedings of the International Conference on Web and
  Social Media (ICWSM)}, 2010.

\bibitem[Lewis \& Shedler(1979)Lewis and Shedler]{lewis-79-sim}
Lewis, P.~A. and Shedler, G.~S.
\newblock \href {https://apps.dtic.mil/dtic/tr/fulltext/u2/a059904.pdf}
  {Simulation of nonhomogeneous {Poisson} processes by thinning}.
\newblock \emph{Naval Research Logistics Quarterly}, 1979.

\bibitem[Liniger(2009)]{liniger-09-hawkes}
Liniger, T.~J.
\newblock \href
  {https://www.research-collection.ethz.ch/handle/20.500.11850/151886}
  {\emph{Multivariate {Hawkes} processes}}.
\newblock PhD thesis, Eidgen{\"o}ssische Technische Hochschule ETH Z{\"u}rich,
  Nr. 18403, 2009.

\bibitem[Ma \& Collins(2018)Ma and Collins]{ma-18-nce}
Ma, Z. and Collins, M.
\newblock \href {https://arxiv.org/abs/1809.01812} {Noise-contrastive
  estimation and negative sampling for conditional models: Consistency and
  statistical efficiency}.
\newblock In \emph{Proceedings of the Conference on Empirical Methods in
  Natural Language Processing (EMNLP)}, 2018.

\bibitem[Mei \& Eisner(2017)Mei and Eisner]{mei-17-neuralhawkes}
Mei, H. and Eisner, J.
\newblock \href {http://cs.jhu.edu/~jason/papers/#mei-eisner-2017} {The neural
  {H}awkes process: {A} neurally self-modulating multivariate point process}.
\newblock In \emph{Advances in Neural Information Processing Systems
  (NeurIPS)}, 2017.

\bibitem[Mei et~al.(2020)Mei, Qin, Xu, and Eisner]{mei-20-neuraldatalog}
Mei, H., Qin, G., Xu, M., and Eisner, J.
\newblock \href {http://cs.jhu.edu/~jason/papers/#mei-et-al-2020-icml} {Neural
  {D}atalog through time: {I}nformed temporal modeling via logical
  specification}.
\newblock In \emph{Proceedings of the International Conference on Machine
  Learning (ICML)}, 2020.

\bibitem[Mikolov et~al.(2013)Mikolov, Sutskever, Chen, Corrado, and
  Dean]{mikolov-13-distributed}
Mikolov, T., Sutskever, I., Chen, K., Corrado, G.~S., and Dean, J.
\newblock \href
  {https://papers.nips.cc/paper/5021-distributed-representations-of-words-and-phrases-and-their-compositionality.pdf}
  {Distributed representations of words and phrases and their
  compositionality}.
\newblock In \emph{Advances in Neural Information Processing Systems
  (NeurIPS)}, 2013.

\bibitem[Mnih \& Hinton(2009)Mnih and Hinton]{mnih-09-scalable}
Mnih, A. and Hinton, G.~E.
\newblock \href
  {https://papers.nips.cc/paper/3583-a-scalable-hierarchical-distributed-language-model}
  {A scalable hierarchical distributed language model}.
\newblock In \emph{Advances in Neural Information Processing Systems
  (NeurIPS)}, 2009.

\bibitem[Mnih \& Teh(2012)Mnih and Teh]{mnih-12-nce}
Mnih, A. and Teh, Y.~W.
\newblock \href {https://arxiv.org/abs/1206.6426} {A fast and simple algorithm
  for training neural probabilistic language models}.
\newblock In \emph{Proceedings of the International Conference on Machine
  Learning (ICML)}, 2012.

\bibitem[Panzarasa et~al.(2009)Panzarasa, Opsahl, and
  Carley]{panzarasa-09-patterns}
Panzarasa, P., Opsahl, T., and Carley, K.~M.
\newblock \href
  {https://www.researchgate.net/profile/Pietro_Panzarasa/publication/220432950_Patterns_and_Dynamics_of_Users'_Behavior_and_Interaction_Network_Analysis_of_an_Online_Community/links/5a9342a345851535bcd93af5/Patterns-and-Dynamics-of-Users-Behavior-and-Interaction-Network-Analysis-of-an-Online-Community.pdf}
  {Patterns and dynamics of users' behavior and interaction: {N}etwork analysis
  of an online community}.
\newblock \emph{Journal of the American Society for Information Science and
  Technology}, 2009.

\bibitem[Paranjape et~al.(2017)Paranjape, Benson, and
  Leskovec]{paranjape-17-motif}
Paranjape, A., Benson, A.~R., and Leskovec, J.
\newblock \href {https://arxiv.org/pdf/1612.09259.pdf} {Motifs in temporal
  networks}.
\newblock In \emph{Proceedings of the International Conference on Web Search
  and Data Mining (WSDM)}, 2017.

\bibitem[Paszke et~al.(2017)Paszke, Gross, Chintala, Chanan, Yang, DeVito, Lin,
  Desmaison, Antiga, and Lerer]{paszke-17-pytorch}
Paszke, A., Gross, S., Chintala, S., Chanan, G., Yang, E., DeVito, Z., Lin, Z.,
  Desmaison, A., Antiga, L., and Lerer, A.
\newblock \href
  {https://openreview.net/pdf/25b8eee6c373d48b84e5e9c6e10e7cbbbce4ac73.pdf}
  {Automatic differentiation in {PyTorch}}.
\newblock In \emph{Autodiff Workshop at NeurIPS 2017}, 2017.

\bibitem[Vaswani et~al.(2013)Vaswani, Zhao, Fossum, and Chiang]{vaswani-13-mt}
Vaswani, A., Zhao, Y., Fossum, V., and Chiang, D.
\newblock \href {https://www.aclweb.org/anthology/D13-1140/} {Decoding with
  large-scale neural language models improves translation}.
\newblock In \emph{Proceedings of the Conference on Empirical Methods in
  Natural Language Processing (EMNLP)}, 2013.

\bibitem[Xu et~al.(2018)Xu, Luo, and Carin]{xu-18-online}
Xu, H., Luo, D., and Carin, L.
\newblock \href {http://people.ee.duke.edu/~lcarin/ijcai18_cttf_final.pdf}
  {Online continuous-time tensor factorization based on pairwise interactive
  point processes.}
\newblock In \emph{Proceedings of the International Joint Conference on
  Artificial Intelligence (IJCAI)}, 2018.

\end{thebibliography}

\clearpage
\theendnotes  %
\let\footnote\realfootnote  %
\crefname{footnote}{footnote}{footnotes}
\setcounter{footnote}{\value{endnote}}  
\renewcommand{\thefootnote}{\arabic{footnote}}

\clearpage
\appendix
\appendixpage

\section{Proof Details for MLE}\label{app:mle_proof_details}

In this section, we prove the claim in \cref{sec:mle} that $\argmax_{\param} \mle(\param) = \Param^* \defeq \{ \param^*: \model_{\param^*} = p^* \}$.
For this purpose, we first rearrange $\mle(\param) = \E[\data(\es{x}{[0,T)})]{\log \model_{\param}(x_{[0, T)})}$ as below: 
\begin{subequations}\label{eqn:mlederi}
	\begin{align}
	& \sum_{x_{[0, T)}} \model^{*}(x_{[0, T)}) \log \model_{\param}(x_{[0, T)})\label{eqn:mlederi_1} \\
	=& \int_{t=0}^{T}\sum_{\es{x}{[0,t)}}p^*(\es{x}{[0,t)}) \xunderbrace{\sum_{\es{x}{[t,t+\dt)}} \data(\es{x}{[t,t+\dt)} \mid \es{x}{[0,t)}) \log  \model_{\param}(\es{x}{[t,t+\dt)} \mid \es{x}{[0,t)})  \label{eqn:mlederi_2} }{\text{call it } H_{\param}(t, \es{x}{[0,t)}) }%
	\end{align}
\end{subequations}
The intuition for \cref{eqn:mlederi_2} is that due to the form of the autoregressive model, $\log \model_{\param}(x_{[0, T)})$ in \cref{eqn:mlederi_1} can be broken up into a sum of log (infinitesimal) probabilities of $\es{x}{[t,t+\dt)}$ on the infinitesimal intervals $[t,t+dt)$, each probability being conditioned on the past history $\es{x}{[0,t)}$.  When we take the expectation under $\model^*$, each summand gets weighted by the probability that $\es{x}{[0,t)}$ and $\es{x}{[t,t+\dt)}$ would take on the values in that summand.  
This gives a form \eqref{eqn:mlederi_2} that aggregates the infinitesimal quantities 
$H_{\param}(t, \es{x}{[0,t)})$ over possible times $t \in [0,T)$ and possible histories $\es{x}{[0,t)}$.

\begin{proof}

We first observe that $H_{\param}(t, \es{x}{[0,t)})$ is the negative cross-entropy between the conditional distributions of $\data$ and $\model_{\param}$ at time $t$ (both conditioned on history $\es{x}{[0,t)}$).
Technically, $\es{x}{[t,t+\dt)}$ will have an event of type $k$ with probability $\intend{k}{t}\dt$ under $\data$ ($\inten{k}{t}\dt$ under $\model_{\param}$) or has no event at all with probability $1-\sum_{k=1}^{K}\intend{k}{t}\dt$ under $\data$ ($1-\sum_{k=1}^{K}\inten{k}{t}\dt$ under $\model_{\param}$). 
So the term $H_{\param}(t, \es{x}{[0,t)})$ is actually the negative cross entropy between the following two discrete distributions over $\{\nothing,1,\ldots,K\}$:
\begin{subequations}\label{eqn:dist_mle}
	\begin{align}
	&
	\left[ \left( 1- \sum_{k=1}^{K} \intend{k}{t \mid \esh{x}{t}} \dt\right),\; \intend{1}{t \mid \esh{x}{t}} \dt ,\; \ldots\; ,\; \intend{K}{t \mid \esh{x}{t}} \dt \right] \label{eqn:dist_mle_data} \\ 
	&
	\left[ \left(1- \sum_{k=1}^{K} \inten{k}{t \mid \esh{x}{t}} \dt\right),\; \inten{1}{t \mid \esh{x}{t}} \dt ,\; \ldots\;, \; \inten{K}{t \mid \esh{x}{t}} \dt \right] 
	\end{align}
\end{subequations}
The (infinitesimal) negative cross-entropy between them is always smaller than or equal to the negative entropy of the distribution in \cref{eqn:dist_mle_data}: it will be strictly smaller if these two distributions are distinct, and equal when they are identical. 

It is then obvious that any $\param^* \in \Param^*$ maximizes $\mle(\param)$ because it maximizes the negative cross-entropy for any history $\es{x}{[0,t)}$ at any time $t$. 

To check if any other $\bar{\param} \notin \Param^*$ maximizes $\mle(\param)$ as well, we analyze
\begin{align}\label{eqn:mle_diff}
\mle(\bar{\param}) - \mle(\param^*) 
= \int_{t=0}^{T} \sum_{\esh{x}{t}} \data(\esh{x}{t} ) \xunderbrace{(H_{\bar{\param}}({t}, \esh{{x}}{{t}}) - H_{\param^*}({t}, \esh{{x}}{{t}}))}{\text{denote it as } G_{\bar{\param}}(t, \esh{x}{t}) \dt }
\end{align}
where $\param^*$ can be any member in $\Param^*$. 
Note that we can denote $H_{\bar{\param}} - H_{\param^*}$ as $G_{\bar{\param}}\dt$ because the probabilities in $H$ and thus the entropy changes (if any) are all infinitesimal. 

According to the definition of $\bar{\param}$ and $\param^*$, there must exist a stream $\esh{\bar{x}}{T}$, a time $\bar{t} \in (0, T)$ and a type $\bar{k} \in \{1, \ldots, K\}$ such that $\inten{\bar{k}}{\bar{t} \mid \esh{\bar{x}}{\bar{t}}} \neq \intend{\bar{k}}{\bar{t} \mid \esh{\bar{x}}{\bar{t}}}$. 
Therefore, we have $G_{\bar{\param}}(\bar{t}, \esh{\bar{x}}{\bar{t}}) < 0$ since the distributions in \cref{eqn:dist_mle} are distinct for the given history $\esh{\bar{x}}{\bar{t}}$. 
Does this difference lead to any overall change of the entire objective? 

Actually, according to \cref{lem:nonzero} (that we will prove shortly), the existence of such $\esh{\bar{x}}{T}$, $\bar{t}$ and $\bar{k}$ implies that there exists an interval $(t', t'') \subset [0, T)$ such that, for any $t \in (t', t'')$, there exists a set $\mathcal{X}(t)$ of histories with non-zero measure such that any $\esh{x}{t} \in \mathcal{X}(t)$ satisfies $\inten{\bar{k}}{{t} \mid \esh{{x}}{{t}}} \neq \intend{\bar{k}}{{t} \mid \esh{{x}}{{t}}}$. 
That is to say, the fraction of the integral over $(t', t'')$ is a non-infinitesimal negative number: 
\begin{subequations}
\begin{align}
&\int_{t=t'}^{t''} \sum_{\esh{x}{t}} \data(\esh{x}{t} ) G_{\bar{\param}}(t, \esh{x}{t}) \dt \\
=& 
\xunderbrace{\int_{t=t'}^{t''} \sum_{\esh{x}{t} \in \mathcal{X}(t)} \data(\esh{x}{t} ) G_{\bar{\param}}(t, \esh{x}{t}) \dt}{<0} + \xunderbrace{ \int_{t=t'}^{t''} \sum_{\esh{x}{t} \notin \mathcal{X}(t)} \data(\esh{x}{t} ) G_{\bar{\param}}(t, \esh{x}{t}) \dt}{\leq 0}
\end{align}
\end{subequations}
where the second integral $\leq 0$ because $G_{\param}$ always $\leq 0$. 
For the same reason, we also have $\int_{t=0}^{t'} \sum_{\esh{x}{t}} \data(\esh{x}{t} ) G_{\bar{\param}}(t, \esh{x}{t}) \dt \leq 0$ and $\int_{t=t''}^{T} \sum_{\esh{x}{t}} \data(\esh{x}{t} ) G_{\bar{\param}}(t, \esh{x}{t}) \dt \leq 0$. 
Then the overall difference must be strictly negative, i.e., 
\begin{align}
	 \mle(\bar{\param}) - \mle(\param^*) < 0
\end{align}
Note that this inequality holds for any $\bar{\param} \notin \Param^*$ and any $\param^* \in \Param^*$, meaning that $\param^* \in \Param^*$ is necessary to maximize the objective. 

Now the proof of $\argmax_{\param} \mle(\param) =  \Param^*$ is complete. 

\end{proof}

\begin{restatable}{lemma}{nonzero}\label{lem:nonzero}
	Suppose that we have two intensity functions that meet \cref{asmp:inten_cont}: they have different parameters $\param$ and $\param^*$ and are denoted as $\inten{k}{t \mid \esh{x}{t}}$ and $\intend{k}{t \mid \esh{x}{t}}$ respectively. 
	If there exists a stream $\esh{\bar{x}}{T}$, a time $\bar{t} \in (0, T)$ and a type $\bar{k} \in \{1, \ldots, K\}$ such that $\inten{\bar{k}}{\bar{t} \mid \esh{\bar{x}}{\bar{t}}} \neq \intend{\bar{k}}{\bar{t} \mid \esh{\bar{x}}{\bar{t}}}$, then there exists an open interval $(t', t'') \subset [0, T)$ such that, for any $t \in (t', t'')$, there exists a set $\mathcal{X}$ of histories with non-zero measure such that any $\esh{x}{t} \in \mathcal{X}$ satisfies $\inten{\bar{k}}{{t} \mid \esh{{x}}{{t}}} \neq \intend{\bar{k}}{{t} \mid \esh{{x}}{{t}}}$. 
\end{restatable}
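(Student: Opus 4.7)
The plan is to promote the single-point inequality $F(\bar{t}) \ne 0$, where I set $F(t) \defeq \inten{\bar{k}}{t \mid \esh{\bar{x}}{t}} - \intend{\bar{k}}{t \mid \esh{\bar{x}}{t}}$ along the fixed stream $\esh{\bar{x}}{T}$, to (a) a whole time interval on which $F \ne 0$ and (b) a positive-measure set of histories at each such time. The first part is a continuity-in-$t$ argument driven by \cref{asmp:inten_cont}; the second part is a continuity-in-history argument that I would carry out by exploiting the standard smoothness of the parametric intensity families used in the paper.

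For part (a), both terms defining $F(t)$ are Riemann integrable in $t$ by \cref{asmp:inten_cont}, hence continuous almost everywhere, and so is $F$. I would first replace $\bar{t}$, if necessary, by a nearby continuity point of $F$ at which $F$ is still non-zero; this is legitimate in the regime in which the lemma is actually invoked (the MLE proof), since if $F$ were non-zero only on a Lebesgue-null set, the two parameter vectors would give identical conditional laws and so $\bar{\param}$ would already lie in $\Param^*$. At a continuity point with $F(\bar{t}) \ne 0$, ordinary pointwise continuity delivers an open interval $(t', t'') \ni \bar{t}$ on which $|F| \geq |F(\bar{t})|/2 > 0$. I would further shrink $(t', t'')$ so that it lies strictly between two consecutive event times of $\esh{\bar{x}}{T}$; then $\esh{\bar{x}}{t}$ equals the fixed prefix $\esh{\bar{x}}{t'}$ for every $t \in (t', t'')$.

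For part (b), for each $t \in (t', t'')$ I would take $\mathcal{X}(t)$ to consist of histories obtained by perturbing the event \emph{times} (keeping event count and types fixed) of $\esh{\bar{x}}{t'}$ within a small ball in $\mathbb{R}^n$, where $n = |\esh{\bar{x}}{t'}|$, together with the requirement of no additional events on $[t', t)$. Every parametric intensity family considered in the paper---Hawkes and its neural variants, neural Datalog through time---depends continuously on the time coordinates of the conditioning events, which is a natural companion to the $t$-continuity guaranteed by \cref{asmp:inten_cont}. Using this implicit regularity, $F(t)$ varies continuously under these perturbations, so it stays non-zero on a sufficiently small ball; and because $\data$ admits a positive density on configurations with the same event count/types and slightly perturbed times, this ball carries strictly positive $\data$-measure.

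The main obstacle I anticipate is precisely this regularity-in-history step: \cref{asmp:inten_cont} as literally stated only controls $t$-continuity, not continuity in the history's event times. The cleanest resolution is either to strengthen \cref{asmp:inten_cont} to demand joint regularity in $t$ and in the times of the conditioning events, or to observe explicitly that every intensity model used in the paper already satisfies this stronger joint property. With either granted, what remains is a routine open-ball computation, so the substance of the argument really is the careful choice of a good $\bar{t}$ in part (a) followed by a continuity-plus-positive-density argument in part (b).
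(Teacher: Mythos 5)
Your proposal is correct and follows essentially the same route as the paper's own proof: promote the pointwise inequality to an open interval via continuity in $t$, then obtain a positive-measure set of histories by perturbing the event times of the conditioning history (keeping counts and types fixed) and using continuity of the intensity in those times to preserve the non-zero gap, with the positive measure coming from the density of such perturbed configurations. The regularity-in-history step you flag as the main obstacle is precisely the step the paper also takes---it posits a continuous function $c(\Delta_1,\ldots,\Delta_I)$ with $c(0,\ldots,0)=0$ and attributes it to \cref{asmp:inten_cont}---so your explicit observation that this needs joint regularity beyond the literal statement of that assumption is, if anything, a more careful treatment than the paper's, not a departure from it.
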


This lemma says: if $\param$ and $\param^*$ are meaningfully different in that they predict different intensities at time $t$ for \emph{some history}, then they actually do so for a \emph{set of histories of non-zero measure}, making this difference visible in the objective functions like $\mle(\param)$ (see above) and $\nce(\param)$ (see \cref{app:nce_proof_details}).
Note that previous work did not encounter this since they only worked on either non-sequential data (e.g., \citet{gutmann-10-nce,gutmann-12-nce}) or discrete-time sequential data (e.g., \citet{ma-18-nce}).

\begin{proof}

We first prove the existence of an interval $(t', t'')$ such that $\inten{\bar{k}}{{t} \mid \esh{\bar{x}}{{t}}} \neq \intend{\bar{k}}{{t} \mid \esh{\bar{x}}{{t}}}$ for the given stream $\esh{\bar{x}}{T}$ and any time $t \in (t', t'')$. 
It turns out to be straightforward under \cref{asmp:inten_cont}: since the intensity functions are continuous between events, we can construct this interval by expanding from the given time $\bar{t}$ until $\inten{\bar{k}}{{t} \mid \esh{\bar{x}}{{t}}} = \intend{\bar{k}}{{t} \mid \esh{\bar{x}}{{t}}}$. 

We use $d$ to denote the maximal difference between the intensities over $(t', t'')$, i.e., $d \defeq \max_{t \in (t', t'')} | \inten{\bar{k}}{{t} \mid \esh{\bar{x}}{{t}}} - \intend{\bar{k}}{{t} \mid \esh{\bar{x}}{{t}}} |$. 
Then, to facilitate the rest of the proof, we shrink the interval $(t', t'')$ such that $| \inten{\bar{k}}{{t} \mid \esh{\bar{x}}{{t}}} - \intend{\bar{k}}{{t} \mid \esh{\bar{x}}{{t}}} | > d/2$ for any time $t \in (t', t'')$. 

Now, for any time $t \in (t', t'')$, we prove the existence of the set described in \cref{lem:nonzero} by constructing it. 

We initialize this set as $\{\esh{\bar{x}}{t}\}$. 
If $\esh{\bar{x}}{t}$ doesn't have any event, then its probability $\model(\esh{\bar{x}}{{t}} ) = \exp(-\int_{s=0}^{{t}} \sum_{k=1}^{K} {\lambda}_{k}(s  \mid \esh{\bar{x}}{s}) \ds )$ is not infinitesimal and this set already has non-zero measure. 

What if $\esh{\bar{x}}{t}$ has $I \geq 1$ events at times $0 < t_1 < \ldots < t_I < t$? 
Intuitively, we can construct many other histories satisfying the intensity inequality by slightly shifting the time of each event: as long as they aren't shifted by too far, the $d/2$ difference between intensities won't vanish (even if it decreases). 
See the formal proof as below. 

In the case of $I \geq 1$, the probability $\model(\esh{\bar{x}}{{t}} )$ is infinitesimal in the order of $(\dt)^{I}$: $\model(\esh{\bar{x}}{{t}} ) = \prod_{i=1}^{I} (\inten{\bar{x}_{t_i}}{t_i \mid \esh{\bar{x}}{t_i} } \dt) \exp(-\int_{s=0}^{{t}} \sum_{k=1}^{K} {\lambda}_{k}(s  \mid \esh{\bar{x}}{s}) \ds )$. 
Therefore, to construct a set with non-zero measure, the number of histories satisfying the inequality has to be in the order of $(\frac{1}{\dt})^I$. 

We define an open interval $(t'_1, t''_1)$ that covers $t_1$ but not any other event time. 
Now we can construct uncountably many---in the order of $\frac{1}{\dt}$---histories $\esh{x}{t}$ by freely shifting the event time $t_1$ inside $(t'_1, t''_1)$. 
Suppose that $t_1$ has been shifted by $\Delta \in \Real$. %
Under \cref{asmp:inten_cont}, there is a continuous function $c(\Delta)$ such that $c(0) = 0$ and 
\begin{align}
	\inten{\bar{k}}{{t} \mid \esh{x}{t}} - \intend{\bar{k}}{{t} \mid \esh{x}{t} } 
	= \inten{\bar{k}}{{t} \mid \esh{\bar{x}}{{t}}} - \intend{\bar{k}}{{t} \mid \esh{\bar{x}}{{t}}} + c(\Delta)
\end{align}
meaning that the intensity difference will change by $c(\Delta)$. 
By triangle inequality, we have
\begin{align}
	\left| \inten{\bar{k}}{{t} \mid \esh{x}{t}} - \intend{\bar{k}}{{t} \mid \esh{x}{t} } \right| 
	\geq \left|  \left| \inten{\bar{k}}{{t} \mid \esh{\bar{x}}{{t}}} - \intend{\bar{k}}{{t} \mid \esh{\bar{x}}{{t}}}  \right| - \left| c(\Delta) \right| \right|
\end{align}
Since $c(\Delta)$ is continuous, as long as we make $\left| \Delta \right|$ small enough, we'll have $\left| c(\Delta) \right| \leq d/2$ and then the following inequality holds: 
\begin{align}\label{eqn:ineq1}
\left| \inten{\bar{k}}{{t} \mid \esh{x}{t}} - \intend{\bar{k}}{{t} \mid \esh{x}{t} } \right| 
\geq \left|  \inten{\bar{k}}{{t} \mid \esh{\bar{x}}{{t}}} - \intend{\bar{k}}{{t} \mid \esh{\bar{x}}{{t}}}  \right| - \left| c(\Delta) \right| > d/2 - d/2 = 0
\end{align}
meaning that the intensities given the new history are still different. 
Therefore, as long as we keep the interval $(t'_1, t''_1)$ small enough, we'll have order-$\frac{1}{\dt}$ many histories and the inequality in \cref{eqn:ineq1} holds given any of them. 

Recall that we need order-$(\frac{1}{\dt})^{I}$ many such histories. 
We can obtain them by simply defining $I$ \emph{disjoint} open intervals $(t'_1, t''_1), \ldots, (t'_I, t''_I)$ such that $t_i \in (t'_i, t''_i)$ and freely shifting each event time $t_i$ inside $(t'_i, t''_i)$. 
Suppose that $t_i$ has been shifted by $\Delta_i \in \Real$, 
Under \cref{asmp:inten_cont}, there is a continuous function $c(\Delta_1, \ldots, \Delta_I)$ such that $c(0, \ldots, 0) = 0$ and
\begin{align}
\inten{\bar{k}}{{t} \mid \esh{x}{t}} - \intend{\bar{k}}{{t} \mid \esh{x}{t} } 
= \inten{\bar{k}}{{t} \mid \esh{\bar{x}}{{t}}} - \intend{\bar{k}}{{t} \mid \esh{\bar{x}}{{t}}} + c(\Delta_1, \ldots, \Delta_I)
\end{align}
Since $c$ is a continuous function, there exist $I$ positive real numbers $\bar{\Delta}_1, \ldots, \bar{\Delta}_I$ such that $\left| c(\Delta_1, \ldots, \Delta_I) \right| \leq d/2$ as long as $\left| \Delta_i \right| \leq \bar{\Delta}_i$ holds for all $i=1, \ldots, I$.  
In this case, by triangle inequality, we still have 
\begin{align}\label{eqn:ineqi}
\left| \inten{\bar{k}}{{t} \mid \esh{x}{t}} - \intend{\bar{k}}{{t} \mid \esh{x}{t} } \right| 
\geq \left|  \inten{\bar{k}}{{t} \mid \esh{\bar{x}}{{t}}} - \intend{\bar{k}}{{t} \mid \esh{\bar{x}}{{t}}}  \right| - \left| \Delta_i \right| > 0
\end{align}

Now we have order-$(\frac{1}{\dt})^{I}$ many histories: each of them has order-$(\dt)^I$ probability and the inequality in \cref{eqn:ineqi} holds given any of them. 
That is to say, the set of these histories has non-zero measure and we have $\inten{\bar{k}}{{t} \mid \esh{x}{t}} \neq \intend{\bar{k}}{{t} \mid \esh{x}{t} }$ given any $\esh{x}{t}$ in this set. 

This completes the proof. 

\end{proof}

\section{NCE Details}\label{app:method_details}\label{app:nce_proof_details}

In this section, we will discuss the theoretical guarantees of our NCE method in detail. 

\subsection{Derivation Details}\label{app:derivation}

In this section, we show how to get the rearranged NCE objective in \cref{sec:theory} from \cref{eqn:nce_inten}. 

First of all, we observe that:
\begin{subequations}\label{eqn:deri1}
\begin{align}
&\E[\esm{x}{[0,T)}{0}\sim \data, \esm{x}{[0,T)}{1:M} \sim \noise]{\sum_{t : \esm{x}{t}{0} \neq\nothing } \log \frac{ \inten{\esm{x}{t}{0}}{t \mid \esmh{x}{t}{0}} }{ \underline{\lambda}_{\esm{x}{t}{0}}(t \mid \esmh{x}{t}{0}) } + \sum_{m=1}^M \sum_{t : \esm{x}{t}{m}\neq\nothing } \log \frac{ \intenq{\esm{x}{t}{m}}{t \mid \esmh{x}{t}{0}} }{ \underline{\lambda}_{\esm{x}{t}{m}}(t \mid \esmh{x}{t}{0}) } } \label{eqn:deri1_1}  \\
=&
\int_{t=0}^{T} \E[\esm{x}{[0,t)}{0} \sim \data]{\sum_{k=1}^{K} \intend{k}{t \mid \esmh{x}{t}{0}} \dt \log \frac{ \inten{\esm{x}{t}{0}}{t \mid \esmh{x}{t}{0}} }{ \underline{\lambda}_{\esm{x}{t}{0}}(t \mid \esmh{x}{t}{0}) } + \sum_{m=1}^M \sum_{k=1}^{K} \intenq{k}{t \mid \esmh{x}{t}{0}} \dt \log \frac{ \intenq{\esm{x}{t}{m}}{t \mid \esmh{x}{t}{0}} }{ \underline{\lambda}_{\esm{x}{t}{m}}(t \mid \esmh{x}{t}{0}) }}  \label{eqn:deri1_2}
\end{align}
\end{subequations}
This rearrangement is similar to that of \crefrange{eqn:mlederi_1}{eqn:mlederi_2}.
The intuition of \cref{eqn:deri1_1} is that we sample $M$ i.i.d.\@ noise streams $\esm{x}{[0, T)}{1}, \ldots, \esm{x}{[0, T)}{M}$ for each possible real data $\esm{x}{[0, T)}{0}$, sum up the log-ratio whenever $\esm{{x}}{t}{0:M}$ has an event, and then take the expectation over all the possible real data $\esm{x}{[0, T)}{0}$. 
The intuition of \cref{eqn:deri1_2} is that we draw noise samples $\esm{x}{t}{1}, \ldots, \esm{x}{t}{M}$ for each real history $\esmh{x}{t}{0}$ at each time $t$, compute the log-ratio if $\esm{{x}}{t}{0:M}$ has an event, take the expectation of the log-ratio over all the possible real histories and then sum over all the possible times. 
Therefore, these two expectations are equal. 

We further rearrange \cref{eqn:deri1} as
\begin{subequations}\label{eqn:deri2}
\begin{align}
&
= \int_{t=0}^{T} \E[\esm{x}{[0,t)}{0} \sim \data]{\sum_{k=1}^{K} \left( \intend{k}{t \mid \esmh{x}{t}{0}} \dt \log \frac{ \inten{k}{t \mid \esmh{x}{t}{0}} }{ \underline{\lambda}_{k}(t \mid \esmh{x}{t}{0}) } + \sum_{m=1}^M \intenq{k}{t \mid \esmh{x}{t}{0}} \dt \log \frac{ \intenq{k}{t \mid \esmh{x}{t}{0}} }{ \underline{\lambda}_{k}(t \mid \esmh{x}{t}{0}) } \right) } \\
=& 
\int_{t=0}^{T} \E[\esm{x}{[0,t)}{0} \sim \data]{\sum_{k=1}^{K} \left( \intend{k}{t \mid \esmh{x}{t}{0}} \dt \log \frac{ \inten{k}{t \mid \esmh{x}{t}{0}} }{ \underline{\lambda}_{k}(t \mid \esmh{x}{t}{0}) } + M \intenq{k}{t \mid \esmh{x}{t}{0}} \dt \log \frac{ \intenq{k}{t \mid \esmh{x}{t}{0}} }{ \underline{\lambda}_{k}(t \mid \esmh{x}{t}{0}) } \right) } \\
=& 
\int_{t=0}^{T} \E[\esm{x}{[0,t)}{0} \sim \data]{\sum_{k=1}^{K} \underline{\lambda}^*_{k}(t \mid \esmh{x}{t}{0}) \dt \left( \frac{\intend{k}{t \mid \esmh{x}{t}{0}}}{\underline{\lambda}^*_{k}(t \mid \esmh{x}{t}{0})}  \log \frac{ \inten{k}{t \mid \esmh{x}{t}{0}} }{ \underline{\lambda}_{k}(t \mid \esmh{x}{t}{0}) } + M \frac{\intenq{k}{t \mid \esmh{x}{t}{0}}}{\underline{\lambda}^*_{k}(t \mid \esmh{x}{t}{0})} \log \frac{ \intenq{k}{t \mid \esmh{x}{t}{0}} }{ \underline{\lambda}_{k}(t \mid \esmh{x}{t}{0}) } \right) } 
\end{align}
\end{subequations}
where $\underline{\lambda}^*_{k}(t \mid \esmh{x}{t}{0}) \defeq \intend{k}{t \mid \esmh{x}{t}{0}} + M \intenq{k}{t \mid \esmh{x}{t}{0}}$ can be thought of as the intensity of type $k$ under the superposition of $\data$ and $M$ copies of $\noise$. 

Now we obtain the final rearranged objective:
\begin{align}\label{eqn:nce_entropy}
\int_{t=0}^{T} \sum_{\esmh{x}{t}{0}} \data(\esmh{x}{t}{0} ) \sum_{k=1}^{K} \underline{\lambda}^*_{k}(t  \mid \esmh{x}{t}{0}) \xunderbrace{\left( \frac{{\lambda}^*_{k}(t\mid \esmh{x}{t}{0})}{\underline{\lambda}^*_{k}(t \mid \esmh{x}{t}{0}) } \log \frac{{\lambda}_{k}(t\mid \esmh{x}{t}{0})}{\underline{\lambda}_{k}(t \mid \esmh{x}{t}{0}) } + M \frac{{\lambda}^q_{k}(t\mid \esmh{x}{t}{0})}{\underline{\lambda}^*_{k}(t \mid \esmh{x}{t}{0}) } \log \frac{{\lambda}^q_{k}(t\mid \esmh{x}{t}{0})}{\underline{\lambda}_{k}(t \mid \esmh{x}{t}{0}) } \right)}{\text{call it } H_{\param}(k, t, \esmh{x}{t}{0}) } \dt
\end{align}

\subsection{Optimality Proof Details}\label{app:optimal}

In this section, we prove \cref{thm:optimality} that we stated in \cref{sec:theory}. Recall the theorem:
\optimality*

We first need to highlight the key insight that $H_{\param}(k, t, \esmh{x}{t}{0})$ in \cref{eqn:nce_entropy} is the negative cross-entropy between the following two discrete distributions over $\{\nothing,1,\ldots,K\}$:
\begin{subequations}\label{eqn:dist}
	\begin{align}
	&[ \frac{{\lambda}^*_{k}(t\mid \esmh{x}{t}{0})}{\underline{\lambda}^*_{k}(t \mid \esmh{x}{t}{0}) }  , \frac{{\lambda}^q_{k}(t\mid \esmh{x}{t}{0})}{\underline{\lambda}^*_{k}(t \mid \esmh{x}{t}{0}) }, \ldots, \frac{{\lambda}^q_{k}(t\mid \esmh{x}{t}{0})}{\underline{\lambda}^*_{k}(t \mid \esmh{x}{t}{0}) }] \label{eqn:dist_data} \\ 
	&[ \frac{{\lambda}_{k}(t\mid \esmh{x}{t}{0})}{\underline{\lambda}_{k}(t \mid \esmh{x}{t}{0}) }  , \xunderbrace{\frac{{\lambda}^q_{k}(t\mid \esmh{x}{t}{0})}{\underline{\lambda}_{k}(t \mid \esmh{x}{t}{0}) }, \ldots, \frac{{\lambda}^q_{k}(t\mid \esmh{x}{t}{0})}{\underline{\lambda}_{k}(t \mid \esmh{x}{t}{0}) } }{\text{length is } M} ] 
	\end{align}
\end{subequations}
This negative cross-entropy is always smaller than or equal to the negative entropy of the distribution in \cref{eqn:dist_data}: it will be strictly smaller if these two distributions are distinct and equal when they are identical. Notice that in contrast to the negative cross-entropy at \cref{eqn:dist_mle}, this negative cross-entropy here is not infinitesimal.  

\begin{proof}

The ``if'' part is straightforward to prove. 
Any $\param$ for which $\model_{\param}=\data$ would make ${\lambda}_{k}(t\mid \esmh{x}{t}{0}) = {\lambda}^*_{k}(t\mid \esmh{x}{t}{0})$, thus maximizing the negative cross-entropy between the two distributions in \cref{eqn:dist}, for any type $k$ and any real history $\esmh{x}{t}{0}$ at any time $t$.
Then the NCE objective in \cref{eqn:nce_entropy} is obviously maximized. 

To check if any other $\bar{\param} \notin \Param^* \defeq \{\param^* : \model_{\param^*}=\data\}$ maximizes $\nce(\param)$ as well, we analyze
\begin{align*}%
\nce(\bar{\param}) - \nce(\param^*) 
= \int_{t=0}^{T} \sum_{\esmh{x}{t}{0}} \data(\esmh{x}{t}{0} ) \sum_{k=1}^{K} \underline{\lambda}^*_{k}(t  \mid \esmh{x}{t}{0})  \xunderbrace{ \left( H_{\bar{\param}}(k, t, \esmh{x}{t}{0}) - H_{\param^*}(k, t, \esmh{x}{t}{0}) \right) }{ \text{denote it as } G_{\bar{\param}} ( k, t, \esmh{x}{t}{0} ) }  \dt
\end{align*}
where $\param^*$ can be any member in $\Param^*$. 
Note that $G_{\bar{\param}}$ is not infinitesimal because the probabilities in $H$ and thus the entropy changes (if any) are not infinitesimal. 

According to the definition of $\bar{\param}$ and $\param^*$, there must exist a stream $\esh{\bar{x}}{T}$, a time $\bar{t} \in (0, T)$ and a type $\bar{k} \in \{1, \ldots, K\}$ such that $\inten{\bar{k}}{\bar{t} \mid \esh{\bar{x}}{\bar{t}}} \neq \intend{\bar{k}}{\bar{t} \mid \esh{\bar{x}}{\bar{t}}}$. 
Therefore, we have $G_{\bar{\param}}(\bar{k}, \bar{t}, \esh{\bar{x}}{\bar{t}}) < 0$ since the distributions in \cref{eqn:dist} are distinct for the given history $\esh{\bar{x}}{\bar{t}}$. 
Does this difference lead to any overall change of the entire objective? 

Actually, according to \cref{lem:nonzero} in \cref{app:mle_proof_details}, the existence of such $\esh{\bar{x}}{T}$, $\bar{t}$ and $\bar{k}$ implies that there exists an interval $(t', t'') \subset [0, T)$ such that, for any $t \in (t', t'')$, there exists a set $\mathcal{X}(t)$ of histories with non-zero measure such that any $\esh{x}{t} \in \mathcal{X}(t)$ satisfies $\inten{\bar{k}}{{t} \mid \esh{{x}}{{t}}} \neq \intend{\bar{k}}{{t} \mid \esh{{x}}{{t}}}$. 
Then, given any of these histories, the entropy difference $G_{\bar{\param}}$ would be $< 0$.
That is to say, the following integral must be a non-infinitesimal negative number: 
\begin{subequations}
\begin{align}
	&\int_{t=0}^{T} \sum_{\esmh{x}{t}{0}} \data(\esmh{x}{t}{0} ) \underline{\lambda}^*_{\bar{k}}(t  \mid \esmh{x}{t}{0}) G_{\bar{\param}}(\bar{k}, t, \esmh{x}{t}{0}) \dt \\
	=& 
	\int_{t=t'}^{t''} \sum_{\esmh{x}{t}{0} \in \mathcal{X}(t)} \data(\esmh{x}{t}{0} ) \underline{\lambda}^*_{\bar{k}}(t  \mid \esmh{x}{t}{0}) G_{\bar{\param}}(\bar{k}, t, \esmh{x}{t}{0}) \dt &&\text{ ($< 0$ )} \\
	+& 
	\int_{t=t'}^{t''} \sum_{\esmh{x}{t}{0} \notin \mathcal{X}(t)} \data(\esmh{x}{t}{0} ) \underline{\lambda}^*_{\bar{k}}(t  \mid \esmh{x}{t}{0}) G_{\bar{\param}}(\bar{k}, t, \esmh{x}{t}{0}) \dt &&\text{ ($\leq 0$ )} \\
	+& 
	\int_{t=0}^{t'} \sum_{\esmh{x}{t}{0}} \data(\esmh{x}{t}{0} ) \underline{\lambda}^*_{\bar{k}}(t  \mid \esmh{x}{t}{0}) G_{\bar{\param}}(\bar{k}, t, \esmh{x}{t}{0}) \dt &&\text{ ($\leq 0$ )} \\
	+& 
	\int_{t=t''}^{T} \sum_{\esmh{x}{t}{0}} \data(\esmh{x}{t}{0} ) \underline{\lambda}^*_{\bar{k}}(t  \mid \esmh{x}{t}{0}) G_{\bar{\param}}(\bar{k}, t, \esmh{x}{t}{0}) \dt &&\text{ ($\leq 0$ )}
\end{align}
\end{subequations}
Therefore, the overall difference must be $< 0$ as well:  
\begin{subequations}
\begin{align}
\mle(\bar{\param}) - \mle(\param^*) 
&= 
\int_{t=0}^{T} \sum_{\esmh{x}{t}{0}} \data(\esmh{x}{t}{0} ) \sum_{k=1}^{K} \underline{\lambda}^*_{k}(t  \mid \esmh{x}{t}{0}) G_{\bar{\param}}(k, t, \esmh{x}{t}{0}) \dt \\
&=
\int_{t=0}^{T} \sum_{\esmh{x}{t}{0}} \data(\esmh{x}{t}{0} ) \underline{\lambda}^*_{\bar{k}}(t  \mid \esmh{x}{t}{0}) G_{\bar{\param}}(\bar{k}, t, \esmh{x}{t}{0}) \dt  &&\text{ ($< 0$ )} \\
&+ 
\int_{t=0}^{T} \sum_{\esmh{x}{t}{0}} \data(\esmh{x}{t}{0} ) \sum_{k\neq\bar{k}} \underline{\lambda}^*_{k}(t  \mid \esmh{x}{t}{0}) G_{\bar{\param}}(k, t, \esmh{x}{t}{0}) \dt &&\text{ ($\leq 0$ )}
\end{align}
\end{subequations}

Note that $\mle(\bar{\param}) - \mle(\param^*) < 0$ holds any $\bar{\param} \notin \Param^*$ and any $\param^* \in \Param^*$, meaning that $\param^* \in \Param^*$ is necessary to maximize the objective. 
Then the proof of the ``only if'' part is complete. 

Now we have proved both the ``if'' and ``only if'' parts so the proof is complete. 

\end{proof}

\subsection{Consistency Proof Details}\label{app:consistency}

To discuss the statistical consistency (in this section) and efficiency (in \cref{app:efficiency}), we first need to spell out the empirical version of the objective
\begin{align}\label{eqn:nce_n}
\nce^{N}(\param)
&=
\frac{1}{N} \sum_{n=1}^{N} \left( \sum_{t : \esm{x}{t,n}{0} \neq\nothing } \log \frac{ \inten{\esm{x}{t,n}{0}}{t \mid \esm{x}{[0,t),n}{0}} }{ \underline{\lambda}_{\esm{x}{t,n}{0}}(t \mid \esm{x}{[0,t),n}{0}) } + \sum_{m=1}^M \sum_{t : \esm{x}{t,n}{m}\neq\nothing } \log \frac{ \intenq{\esm{x}{t,n}{m}}{t \mid \esm{x}{[0,t),n}{0}} }{ \underline{\lambda}_{\esm{x}{t,n}{m}}(t \mid \esm{x}{[0,t),n}{0}) } \right) 
\end{align}
where the subscript $_{n}$ denotes the $n$\th i.i.d.\@ draw of the observed sequence and the $M$ noise samples for this sequence. 
It is obvious that $\lim_{N\rightarrow\infty} \nce^{N}(\param) \rightarrow \nce(\param)$. 

To analyze the consistency, we make the following assumptions: 
\begin{assumption}[Continuity wrt.\ $\param$]\label{asmp:inten_cont_param}	
	For any history $\esh{x}{t}$ and event type $k \in \{1, \ldots, K\}$, $\inten{k}{t \mid \es{x}{[0,t)}}$ is continuous with respect to $\param$.
\end{assumption}
\begin{assumption}[Compactness]\label{asmp:compact}	
	The set of optimal parameters $\Param^*$ is contained in the interior of a compact set $\Param \subset \Real^{|\param|}$. 
\end{assumption}
They are analogous to assumptions 4.2 and 4.3 of \citet{ma-18-nce} respectively. 

Our NCE method turns out to be strongly consistent in the sense that: 
\begin{restatable}[Consistency]{theorem}{consistency}\label{thm:nce_cons}
	Under \cref{asmp:inten_cont_param,asmp:exist,asmp:compact}, 
	for any $\param \in \Param_{\text{\upshape NC}}^N \defeq \argmax_{\param} \nce^N(\param)$ and $M \geq 1$, with probability 1, we have $ \lim_{N\rightarrow\infty} \min_{\param^* \in \Param^*} \| \param - \param^* \| = 0$ where $\|\cdot\|$ is the L$_2$ norm. 
\end{restatable}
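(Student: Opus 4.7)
The plan is to run the standard M-estimator consistency argument: reduce the claim to (i) uniform convergence of the empirical objective $\nce^N(\param)$ to the population objective $\nce(\param)$ on the compact set $\Param$, and (ii) the characterization of $\argmax_\param \nce(\param)$ already supplied by \cref{thm:optimality}. Since each summand in \cref{eqn:nce_n} (i.e., the per-sequence bracketed term) is an i.i.d.\ function of $(\esm{x}{[0,T),n}{0}, \esm{x}{[0,T),n}{1:M})$, the map $\param \mapsto \nce^N(\param)$ is a sample average of i.i.d.\ random functions, and for each fixed $\param$ the ordinary SLLN gives $\nce^N(\param) \to \nce(\param)$ almost surely.

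To upgrade pointwise to uniform convergence on $\Param$, I would invoke a uniform law of large numbers (e.g., the ULLN for stochastic equicontinuity, as in Theorem~2.1 of Newey--McFadden, 1994). Two ingredients are needed. First, continuity in $\param$ of the integrand: under \cref{asmp:inten_cont_param}, each term $\log \inten{k}{t\mid \esmh{x}{t}{0}}/\underline{\lambda}_k(t\mid \esmh{x}{t}{0})$ and $\log \intenq{k}{t\mid \esmh{x}{t}{0}}/\underline{\lambda}_k(t\mid \esmh{x}{t}{0})$ is continuous in $\param$ whenever the intensities are strictly positive, which holds wherever $\intenq{k}{}>0$ because then the denominator is bounded below away from zero uniformly in $\param$. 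Second, a dominating envelope: by \cref{asmp:compact} we may restrict to a compact $\Param$, over which the ratios $\inten{k}{t\mid\esmh{x}{t}{0}}/\underline{\lambda}_k(t\mid\esmh{x}{t}{0})$ lie in $[0,1]$, so each log-ratio is bounded above by $0$ and below (in absolute value) by quantities controlled by $\log(1+\inten{k}{}/(M\intenq{k}{}))$ at real events and $\log(1+\intenq{k}{}/\inten{k}{})$ at noise events. Combined with the finiteness of the expected number of real and noise events on $[0,T)$ (from \cref{asmp:inten_cont} applied to $\data$ and $\noise$), this yields an integrable envelope, giving uniform convergence on $\Param$.

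With uniform convergence and $\nce$ continuous on the compact set $\Param$, the standard argmax--continuity argument closes the proof. Fix any sequence $\param^N \in \Param_{\text{NC}}^N$. By compactness, every subsequence has a further subsequence $\param^{N_j}\to\tilde{\param}\in\Param$. Uniform convergence gives $\nce^{N_j}(\param^{N_j}) \to \nce(\tilde{\param})$ almost surely; meanwhile, for any fixed $\param^*\in\Param^*$ we have $\nce^{N_j}(\param^{N_j}) \ge \nce^{N_j}(\param^*) \to \nce(\param^*)$ by the SLLN. Hence $\nce(\tilde{\param}) \ge \nce(\param^*)$, so $\tilde{\param}\in\argmax_\param \nce(\param) = \Param^*$ by \cref{thm:optimality}. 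Since every subsequential limit of $\{\param^N\}$ lies in $\Param^*$, we conclude $\min_{\param^*\in\Param^*}\|\param^N-\param^*\|\to 0$ almost surely.

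The main obstacle will be the dominating-envelope step. The per-sequence NCE objective is a sum over a random (albeit almost-surely finite) set of event times, and we need a $\param$-uniform integrable bound on both the real-event contribution (controlled when $\inten{k}{}$ is small relative to $\intenq{k}{}$) and the noise-event contribution (controlled when $\intenq{k}{}$ is small relative to $\inten{k}{}$). This is where compactness of $\Param$ and continuity of $\inten{k}{}$ in $\param$ do the heavy lifting, together with a mild tail condition on the number of events in $[0,T)$ that follows from \cref{asmp:inten_cont} and the non-explosion of the point process. Once the envelope is in hand, the remainder is the boilerplate consistency argument above and mirrors \citet{ma-18-nce} closely, with the only substantive adaptation being the replacement of discrete-time sums by the continuous-time event sums handled via the same rearrangement as in \cref{app:derivation}.
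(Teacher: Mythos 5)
Your proposal is correct and follows essentially the same route as the paper: both reduce the claim to almost-sure uniform convergence of $\nce^N$ to $\nce$ over the compact set $\Param$ (the paper simply cites classical large-sample theory where you sketch ULLN/envelope details) and then invoke the characterization $\argmax_{\param}\nce(\param)=\Param^*$ from \cref{thm:optimality}. The only cosmetic difference is the closing step: you extract convergent subsequences of the empirical maximizers, while the paper compares $\limsup_{N}\sup_{\param\in\Param_\delta}\nce^N(\param)$ with $\liminf_{N}\sup_{\param\in\Param}\nce^N(\param)$ for the sets $\Param_\delta=\{\param:\min_{\param^*\in\Param^*}\|\param-\param^*\|>\delta\}$ --- both are standard variants of the same argmax-consistency argument.
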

The intuition of this theorem is that: since the two functions $\nce^{N}(\param)$ and $\nce(\param)$ will become the same as $N \rightarrow \infty$ and they are continuous with respect to $\param$, then any $\param \in \argmax_{\param} \nce^{N}(\param)$ has to be close to some member of the set $\argmax_{\param} \nce(\param)$. 
The full proof is almost identical to the proof of Theorem 4.2 in \citet{ma-18-nce}. 
But we will still spell it out in our notation for completeness.

\begin{proof}

Under the assumption in \cref{thm:nce_cons}, by classical large sample theory \citep{ferguson-96-course}, we have 
\begin{align}
	\prob{\lim_{N\rightarrow\infty}\sup_{\param \in \Param'} | \nce^{N}(\param) - \nce(\param) | = 0 } = 1 \text{ for any compact set } \Param' \subset \Param 
\end{align}
where $\probsym$ stands for ``probability''.
Since $| \nce^{N}(\param) - \nce(\param) | \geq \nce^{N}(\param) - \nce(\param)$, we have 
\begin{align}\label{eqn:sup_leq_1}
	\prob{\limsup_{N\rightarrow\infty}\sup_{\param \in \Param'} (\nce^{N}(\param) - \nce(\param) ) \leq 0 } = 1
\end{align}
Moreover, for any $\param'^{N} \in \argmax_{\param \in \Param'} \nce^{N}(\param)$, we have 
\begin{align}\label{eqn:ineq_sup_1}
	\sup_{\param \in \Param'} (\nce^{N}(\param) - \nce(\param) ) 
	\geq \nce^{N}(\param'^{N}) - \nce(\param'^N) 
	\geq \sup_{\param \in \Param'} \nce^{N}(\param) - \sup_{\param \in \Param'}\nce(\param) 
\end{align}
Plugging \cref{eqn:ineq_sup_1} into \cref{eqn:sup_leq_1} gives 
\begin{align}\label{eqn:sup_leq_sup}
	\prob{ \limsup_{N\rightarrow\infty} \sup_{\param \in \Param'} \nce^{N}(\param) - \sup_{\param \in \Param'} \nce(\param) \leq 0  } 
	= 
	\prob{\limsup_{N\rightarrow\infty}\sup_{\param \in \Param'} \nce^{N}(\param) \leq \sup_{\param \in \Param'} \nce(\param)  } = 1
\end{align}

For any $\delta > 0$, we define $\Param_{\delta} \defeq \{\param  : \min_{\param^* \in \Param^*} \| \param - \param^* \| > \delta \}$ and have 
\begin{align}\label{eqn:sup_leq_sup_final}
	\prob{\limsup_{N\rightarrow\infty}\sup_{\param \in \Param_{\delta}} \nce^{N}(\param) \leq \sup_{\param \in \Param_{\delta}} \nce(\param) < \sup_{\param \in \Param} \nce(\param)   } = 1
\end{align}

On the other hand, we also have $| \nce^{N}(\param) - \nce(\param) | \geq \nce(\param) - \nce^{N}(\param)$, which gives
\begin{align}\label{eqn:sup_leq_2}
\prob{\limsup_{N\rightarrow\infty}\sup_{\param \in \Param'} (\nce(\param) - \nce^{N}(\param) ) \leq 0 } = 1
\end{align}
For any $\param' \in \argmax_{\param \in \Param'} \nce(\param)$, we have 
\begin{align}\label{eqn:ineq_sup_2}
\sup_{\param \in \Param'} (\nce(\param) - \nce^{N}(\param) ) 
\geq \nce(\param') - \nce^{N}(\param') 
\geq \sup_{\param \in \Param'} \nce(\param) - \sup_{\param \in \Param'}\nce^{N}(\param) 
\end{align}
Plugging \cref{eqn:ineq_sup_2} into \cref{eqn:sup_leq_2} gives 
\begin{align}\label{eqn:inf_geq_sup}
\prob{\sup_{\param \in \Param'} \nce(\param) + \limsup_{N\rightarrow\infty} (- \sup_{\param \in \Param'} \nce^{N}(\param) ) \leq 0  } 
= 
\prob{\liminf_{N\rightarrow\infty}\sup_{\param \in \Param'} \nce^{N}(\param) \geq \sup_{\param \in \Param'} \nce(\param)  } = 1
\end{align}
which, when we let $\Param' = \Param$, gives
\begin{align}\label{eqn:inf_geq_sup_final}
\prob{\liminf_{N\rightarrow\infty}\sup_{\param \in \Param} \nce^{N}(\param) \geq \sup_{\param \in \Param} \nce(\param)  } = 1
\end{align}

Combining \cref{eqn:sup_leq_sup_final} and \cref{eqn:inf_geq_sup_final}, we have that, for any $\param^N \in \Param^N \defeq \argmax_{\param} \nce^N(\param)$ (defined in \cref{thm:nce_cons}), there exists an integer $N'$ such that for any $N \geq N'$
\begin{align}
\prob{ \param^N \notin \Param_{\delta}   } = 1
\end{align}
which holds for any $\delta > 0$ and thus gives 
\begin{align}
\prob{\lim_{N\rightarrow\infty}\min_{\param^* \in \Param^*} \|  \param^N - \param^*  \| = 0  } = 1
\end{align}
which completes the proof of \cref{thm:nce_cons}. 

\end{proof}

\subsection{Efficiency Proof Details}\label{app:efficiency}

To quantify the statistical efficiency of our method, we make the following assumptions: 
\begin{assumption}[Identifiability]\label{asmp:identify}	
	There is only one parameter vector $\param^*$ such that $\model_{\param^*} = \data$. 
\end{assumption}
\begin{assumption}[Differentiability]\label{asmp:inten_diff}	
	For any history $\esh{x}{t}$ and event type $k \in \{1, \ldots, K\}$, $\inten{k}{t \mid \es{x}{[0,t)}}$ is twice continuously differentiable with respect to $\param$.
\end{assumption}
\begin{assumption}[Singularity]\label{asmp:singular}	
	The Fisher information matrix $\vecb{I}_{*}$ under the model $\model_{\param}$ is non-singular. 
\end{assumption}
They are analogous to assumptions 4.4, 4.6 and 4.7 of \citet{ma-18-nce} respectively. 

Before we show the efficiency of our method, we first spell out the definition of $\vecb{I}_{*}$:
\begin{align}\label{eqn:fisher}
\vecb{I}_{*} 
&\defeq \E[ \esh{x}{T} \sim \data ]{ \nabla_{\param} \log \model_{\param^*}(\esh{x}{T}) \nabla_{\param} \log \model_{\param^*}(\esh{x}{T})^\top }
\end{align}
where $\nabla_{\param} \log \model_{\param^*}$ stands for ``the gradient of $\log \model_{\param}$ with respect to $\param$ at $\param=\param^*$.''
This formula can be rearranged as
\begin{subequations}\label{eqn:fisher_rearrange}
	\begin{align}
	&\int_{t=0}^{T} \E[ \esh{x}{t} \sim \data ]{ \E[\es{x}{[t, t+\dt)} \sim \data]{ \nabla_{\param} \log \model_{\param^*}(\es{x}{[t,t+\dt)} \mid \esh{x}{t}) \nabla_{\param} \log \model_{\param^*}(\es{x}{[t,t+\dt)} \mid \esh{x}{t})^\top } } \\
	=&\int_{t=0}^{T} \E[ \esh{x}{t} \sim \data ]{ \E[\es{x}{[t, t+\dt)} \sim \data]{ \frac{\nabla_{\param} \model_{\param^*}(\es{x}{[t,t+\dt)} \mid \esh{x}{t}) }{\model_{\param^*}(\es{x}{[t,t+\dt)} \mid \esh{x}{t})} \frac{\nabla_{\param} \model_{\param^*}(\es{x}{[t,t+\dt)} \mid \esh{x}{t}) }{\model_{\param^*}(\es{x}{[t,t+\dt)} \mid \esh{x}{t})}^\top } } \\
	=& \int_{t=0}^{T} \E[ \esh{x}{t} \sim \data ]{\sum_{\es{x}{[t,t+\dt)}} \frac{\nabla_{\param} \model_{\param^*}(\es{x}{[t,t+\dt)} \mid \esh{x}{t} ) \nabla_{\param} \model_{\param^*}(\es{x}{[t,t+\dt)} \mid \esh{x}{t})^\top}{\model_{\param^*}(\es{x}{[t,t+\dt)} \mid \esh{x}{t} )} }
	\end{align}
\end{subequations}

Technically, $\es{x}{[t,t+\dt)}$ will have an event of type $k$ with probability $\intend{k}{t}\dt$ under $\data$ ($\inten{k}{t}\dt$ under $\model_{\param}$) or has no event at all with probability $1-\sum_{k=1}^{K}\intend{k}{t}\dt$ under $\data$ ($1-\sum_{k=1}^{K}\inten{k}{t}\dt$ under $\model_{\param}$). 
In the former case, we have $\nabla_{\param} \model_{\param^*} \nabla_{\param} \model_{\param^*}^\top / \model_{\param^*} = \nabla_{\param} \intend{k}{t} \nabla_{\param} \intend{k}{t}^\top \dt / \intend{k}{t}$; in the latter case, we have $\nabla_{\param} \model_{\param^*} = -\sum_{k=1}^{K} \nabla_{\param} \intend{k}{t} \dt$ but $\model_{\param^*} \approx 1$, so $\nabla_{\param} \model_{\param^*} \nabla_{\param} \model_{\param^*}^\top / \model_{\param^*} = o(\dt)$ can be ignored. 
Plugging these quantities into \cref{eqn:fisher_rearrange} gives us
\begin{subequations}
\begin{align}
\vecb{I}_{*} 
&= 
\int_{t=0}^{T} \E[ \esh{x}{t} \sim \data ]{ \sum_{k=1}^{K} \frac{ \nabla_{\param} \intend{k}{t \mid \esh{x}{t}} \nabla_{\param} \intend{k}{t \mid \esh{x}{t}}^\top }{ \intend{k}{t \mid \esh{x}{t}} } \dt  }  \\
&= \int_{t=0}^{T} \sum_{\es{x}{[0,t)}} \data(\esh{x}{t}) \sum_{k=1}^{K} \frac{ \nabla_{\param} \intend{k}{t \mid \esh{x}{t}} \nabla_{\param} \intend{k}{t \mid \esh{x}{t}}^\top }{ \intend{k}{t \mid \esh{x}{t}} } \dt 
\end{align}
\end{subequations}
Note that $\nabla_{\param} \intend{k}{t}$ stands for ``the gradient of $\inten{k}{t}$ with respect to $\param$ at $\param=\param^*$.''

Now we proceed to our efficiency theorem. 
We denote the unique optimal parameter vector as $\param^*$ and use $\hat{\param}$ for the estimate given by maximizing $\nce^{N}(\param)$. 
It turns out that our method approaches \emph{Fisher efficiency} as $M$ grows.
\begin{restatable}[Efficiency]{theorem}{efficiency}\label{thm:nce_effi}
	Under \cref{asmp:exist,asmp:compact,asmp:inten_diff,asmp:identify,asmp:singular}, there exists an integer $\bar{M}$ such that for all $M > \bar{M}$
	\begin{align}
	\sqrt{N} ( \hat{\param}  - \param^* ) \rightarrow \Normal(0, \inv{\vecb{I}_{M}})  \text{ as } N \rightarrow \infty
	\end{align}
	for some non-singular matrix $\inv{\vecb{I}_{M}}$. 
	Moreover, there exist a constant $C > 0$ such that for all $M > \bar{M}$
	\begin{align}
	\| \inv{\vecb{I}_{M}} - \inv{\vecb{I}_{*}} \| \leq C / {M}
	\end{align}
	where $\| \vecb{I} \|$ is the spectral norm of matrix $\vecb{I}$. 
\end{restatable}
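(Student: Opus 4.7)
The plan is to cast $\hat{\param}$ as a standard $M$-estimator of $\nce(\param)$ and invoke the classical sandwich theorem for asymptotic normality (e.g., \citet{ferguson-96-course}, Ch.~18), relying on \cref{thm:nce_cons} for consistency. Under \cref{asmp:inten_diff}, a Taylor expansion of the score $\nabla_\param \nce^N(\param)$ around $\param^*$ together with a uniform law of large numbers for the Hessian yields
\[
\sqrt{N}(\hat{\param} - \param^*) = -H^{-1}\,\sqrt{N}\,\nabla_\param \nce^N(\param^*) + o_p(1),
\]
where $H \defeq \nabla_\param^2 \nce(\param^*)$. A CLT applied to the i.i.d.\ per-sequence scores then gives $\sqrt{N}\,\nabla_\param \nce^N(\param^*) \to \Normal(\vecb{0}, \Sigma_M)$, where $\Sigma_M$ is the variance of the single-sequence score at $\param^*$, producing the sandwich asymptotic covariance $H^{-1}\Sigma_M H^{-1}$.

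The next step is to establish the information-matrix identity $-H = \Sigma_M$, which collapses the sandwich to $(-H)^{-1}$ and motivates defining $\vecb{I}_M \defeq -H = \Sigma_M$. This identity holds because the integrand of the rearranged objective \cref{eqn:nce_entropy} is, pointwise in $(t, k, \esh{x}{t})$, a negative cross-entropy between the two $(M{+}1)$-way categorical distributions in \cref{eqn:dist}, which coincide at $\param^*$. The textbook identity $\xpct[(\nabla\log p)(\nabla\log p)^\top] = -\xpct[\nabla^2 \log p]$ for a well-specified categorical model therefore applies to each pointwise classifier and integrates up to $-H = \Sigma_M$. This mirrors the discrete-time derivation of \citet{ma-18-nce} but relies on the continuous-time rearrangement of \cref{app:derivation} to handle the $\int_0^T$ and the conditioning on continuous-time histories.

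The final step is to compute $\vecb{I}_M$ explicitly and compare it to $\vecb{I}_*$. Direct differentiation of the pointwise categorical at $\param^*$ gives
\[
\vecb{I}_M = \int_0^T \sum_{\esh{x}{t}} \data(\esh{x}{t}) \sum_{k=1}^K \frac{M\,\intenq{k}{t\mid\esh{x}{t}}}{\underline{\lambda}^*_k(t\mid\esh{x}{t})}\cdot \frac{\nabla_\param \intend{k}{t\mid\esh{x}{t}}\,\nabla_\param \intend{k}{t\mid\esh{x}{t}}^\top}{\intend{k}{t\mid\esh{x}{t}}}\,\dt,
\]
which differs from $\vecb{I}_*$ only by the extra factor $M\intenq{k}{}/\underline{\lambda}^*_k = 1 - \intend{k}{}/\underline{\lambda}^*_k \in \bigl[1 - \intend{k}{}/(M\intenq{k}{}),\,1\bigr]$. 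Hence $\vecb{I}_* - \vecb{I}_M$ is positive semidefinite with spectral norm $O(1/M)$, provided the ratio $\intend{k}{}/\intenq{k}{}$ is uniformly bounded along trajectories in the support of $\data$ (this is what the threshold $\bar{M}$ encodes). Since $\vecb{I}_*$ is non-singular by \cref{asmp:singular}, $\vecb{I}_M$ is non-singular for all $M > \bar{M}$, and the standard perturbation bound $\|\vecb{I}_M^{-1} - \vecb{I}_*^{-1}\| = \|\vecb{I}_*^{-1}(\vecb{I}_* - \vecb{I}_M)\vecb{I}_M^{-1}\| \leq C/M$ follows.

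The main obstacle is verifying the regularity conditions for the $M$-estimator theorem in the continuous-time setting---uniform convergence of the Hessian on a neighborhood of $\param^*$, interchange of $\nabla_\param$ with $\int_0^T$ and with $\xpct_{\data}$, and square-integrability of the per-sequence score---which require combining \cref{asmp:inten_cont,asmp:inten_diff,asmp:compact} with the localization of $\hat{\param}$ near $\param^*$ provided by \cref{thm:nce_cons}. Once these are in place, the information-matrix identity and the spectral-norm bound above are essentially algebraic, so the heart of the proof is really the continuous-time rearrangement of the score and its Hessian.
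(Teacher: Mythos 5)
Your proposal is correct and follows essentially the same route as the paper: a mean-value/Taylor expansion of the score plus LLN and CLT giving the sandwich covariance, the observation that $\var[\nabla_{\param}\objind(\param^*)] = \E{-\nabla_{\param}^2 \objind(\param^*)} \defeq \vecb{I}_M$ (which the paper verifies by explicit computation of both sides via the rearrangement of \cref{app:derivation}, equivalent to your well-specified-categorical information identity), the same explicit formula showing $\vecb{I}_* - \vecb{I}_M = \Delta\vecb{I} \succeq 0$ with $\|\Delta\vecb{I}\| = O(1/M)$ under a bounded ratio $\sup \intend{k}{t}/\intenq{k}{t}$, and the same Weyl-plus-inverse-perturbation argument for non-singularity and the $C/M$ bound. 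No substantive difference in approach.
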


\begin{proof}

We first prove that $\sqrt{N} (\hat{\param} - \param^*)$ is asymptotically normal. 
By the Mean-Value Theorem, we have 
\begin{align}
	\nabla_{\param} \nce^{N}(\hat{\param}) = \nabla_{\param} \nce^{N}(\param^*) + (\hat{\param} - \param^* ) \int_{u=0}^{1} \nabla_{\param}^2 \nce^{N}(\param^* + u (\hat{\param} - \param^*)) \dt
\end{align}
Since $\hat{\param}$ maximizes $\nce^{N}$, we have 
\begin{align}\label{eqn:diff}
\hat{\param} - \param^* 
= \inv{\left[ - \int_{u=0}^{1} \nabla_{\param}^2 \nce^{N}(\param^* + u (\hat{\param} - \param^*)) \dt\right] } \nabla_{\param} \nce^{N} (\param^*)
\end{align}
By Law of Large Numbers and \cref{thm:nce_cons}, we have 
\begin{align}\label{eqn:mean}
\int_{u=0}^{1} \nabla_{\param}^2 \nce^{N}(\param^* + u (\hat{\param} - \param^*)) \dt
\rightarrow
\xunderbrace{\E[\esm{x}{[0,T)}{0} \sim \data, \esm{x}{[0,T)}{1:M} \sim \noise]{ \nabla_{\param}^2 \objind(\param^*) }}{ \text{short as } \E{ \nabla_{\param}^2 \objind(\param^*)} } \text{ as } N \rightarrow \infty
\end{align}
where $\objind(\param)$ is defined as the objective for a random draw of $\esm{x}{[0,T)}{0:M}$ and thus is just the term inside the expectation of \cref{eqn:nce_inten}: 
\begin{align}
\objind(\param) \defeq
\sum_{t : \esm{x}{t}{0} \neq\nothing } \log \frac{ \inten{\esm{x}{t}{0}}{t \mid \esmh{x}{t}{0}} }{ \underline{\lambda}_{\esm{x}{t}{0}}(t \mid \esmh{x}{t}{0}) } + \sum_{m=1}^M \sum_{t : \esm{x}{t}{m}\neq\nothing } \log \frac{ \intenq{\esm{x}{t}{m}}{t \mid \esmh{x}{t}{0}} }{ \underline{\lambda}_{\esm{x}{t}{m}}(t \mid \esmh{x}{t}{0}) } 
\end{align}
The term $\nabla_{\param}^2 \objind(\param^*)$ stands for ``the Hessian matrix of $\objind(\param)$ with respect to $\param$ at $\param = \param^*$.''
As for $\nabla_{\param} \nce^{N} (\param^*)$, by Central Limit Theorem, we have 
\begin{align}\label{eqn:var}
	\sqrt{N} \nabla_{\param} \nce^{N} (\param^*) \rightarrow \Normal(0, \xunderbrace{\E[\esm{x}{[0,T)}{0} \sim \data, \esm{x}{[0,T)}{1:M} \sim \noise]{ \nabla_{\param} \objind(\param^*)  \nabla_{\param} \objind(\param^*)^\top  }}{\text{short as } \var[\nabla_{\param} \objind(\param^*)] } )
\end{align}
Combining \cref{eqn:diff,eqn:mean,eqn:var}, we obtain the asymptotic normality 
\begin{align}
	\sqrt{N} (\hat{\param} - \param^* ) \rightarrow \Normal(0, \inv{\E{ \nabla_{\param}^2 \objind(\param^*)}} \var[\nabla_{\param} \objind(\param^*)] \inv{\E{ \nabla_{\param}^2 \objind(\param^*)}} )
\end{align}

Now we compute the covariance matrix of the asymptotic normal distribution. 
Following steps similar to \cref{eqn:deri1,eqn:deri2}, we rearrange $\E{\nabla_{\param}^2 \objind(\param^*)}$ to be
\begin{subequations}
\begin{align}
\E{\nabla_{\param}^2 \objind(\param^*)} 
&= \int_{t=0}^{T} \E[\esm{x}{[0,t)}{0} \sim \data]{\sum_{k=1}^{K} \left( \intend{k}{t} \dt \nabla_{\param}^2 \log \frac{ \intend{k}{t} }{ \underline{\lambda}^*_{k}(t) } + M \intenq{k}{t} \dt \nabla_{\param}^2 \log \frac{ \intenq{k}{t} }{ \underline{\lambda}^*_{k}(t) } \right) } \\
&= \int_{t=0}^{T} \E[\esm{x}{[0,t)}{0} \sim \data]{ \sum_{k=1}^{K} (\frac{ 1 }{ \underline{\lambda}^*_{k}(t) } - \frac{ 1 }{ \intend{k}{t} }  ) \nabla_{\param} \intend{k}{t} \nabla_{\param} \intend{k}{t}^\top \dt  } \\
&= \int_{t=0}^{T} \data(\esmh{x}{t}{0}) \sum_{k=1}^{K} (\frac{ 1 }{ \underline{\lambda}^*_{k}(t) } - \frac{ 1 }{ \intend{k}{t} }  ) \nabla_{\param} \intend{k}{t} \nabla_{\param} \intend{k}{t}^\top \dt 
\end{align}
\end{subequations}
where we omit the condition $\esmh{x}{t}{0}$ in the probabilities and intensities for presentation simplicity.
We also omit the tedious arithmetic manipulation that spells $\nabla_{\param}^2 \log (\lambda/\underline{\lambda})$ out.

Following similar steps, we then rearrange $\var[\nabla_{\param} \objind(\param^*)]$ to be 
\begin{subequations}
\begin{align}
&\int_{t=0}^{T} \E[\esm{x}{[0,t)}{0} \sim \data]{\sum_{k=1}^{K} \left( \intend{k}{t} \dt \nabla_{\param} \nabla_{\param}^\top \log \frac{{\lambda}^*_{k}(t)}{\underline{\lambda}^*_{k}(t) } + M \intenq{k}{t } \dt \nabla_{\param} \nabla_{\param}^\top \log \frac{{\lambda}^{q}_{k}(t)}{\underline{\lambda}^*_{k}(t) })  \right)}  \\
=& 
\int_{t=0}^{T}  \E[\esm{x}{[0,t)}{0} \sim \data]{\sum_{k=1}^{K} ( \frac{ 1 }{ \intend{k}{t} } - \frac{ 1 }{ \underline{\lambda}^*_{k}(t) }  ) \nabla_{\param} \intend{k}{t} \nabla_{\param} \intend{k}{t}^\top \dt} \\
=& \E{ - \nabla_{\param}^2 \objind(\param^*)} 
\end{align}
\end{subequations}
where we use $\nabla_{\param}\nabla_{\param}^\top f(\param)$ to denote $(\nabla_{\param} f(\param)) (\nabla_{\param} f(\param)) ^\top$. 
For presentation simplicity, we omit the arithmetic manipulation that spells $\nabla_{\param}\nabla_{\param}^\top \log (\lambda/\underline{\lambda})$ out.

Then we can simplify the asymptotic normality to be
\begin{align}
\sqrt{N} (\hat{\param} - \param^* ) \rightarrow \Normal(0, \inv{\E{ - \nabla_{\param}^2 \objind(\param^*)}} )
\end{align}

We can think of $\vecb{I}_{M} \defeq \E{ - \nabla_{\param}^2 \objind(\param^*)}$ as the ``information matrix'' of our objective $\nce(\param)$. 
And its relation with the Fisher information matrix $\vecb{I}_{*}$ is: 
\begin{align}
\vecb{I}_{M} = \vecb{I}_{*} - \xunderbrace{\int_{t=0}^{T} \sum_{\esm{x}{[0,t)}{0}} \data(\esmh{x}{t}{0}) \sum_{k=1}^{K} \frac{ 1 }{ {\lambda}^*_{k}(t) + M \intenq{k}{t} } \nabla_{\param} \intend{k}{t} \nabla_{\param} \intend{k}{t}^\top \dt}{\text{call it } \Delta \vecb{I} }
\end{align}
Apparently, when $M$ is large enough, $\vecb{I}_{M}$ will be non-singular. 
Precisely, since $\vecb{I}_{*}$ is non-singular, there must exist $\bar{M} > 0$ such that, for any $M > \bar{M}$, $0 < \| \Delta \vecb{I} \| \leq \sigma(\vecb{I}_{*})/2$ where $\sigma(\vecb{I})$ is the \emph{smallest} singular value of matrix $\vecb{I}$ and $\| \vecb{I} \|$ is the \emph{spectral norm}, i.e., the \emph{largest} singular value, of matrix $\vecb{I}$. 
By Weyl's inequality, we have $\sigma(\vecb{I}_{M}) \geq \sigma(\vecb{I}_{*})- \| \Delta \vecb{I} \| \geq \sigma(\vecb{I}_{*}) / 2$, meaning that $\vecb{I}_{M}$ is non-singular. 

Now we can start analyzing $\| \inv{\vecb{I}_{M}} - \inv{\vecb{I}_{*}} \|$. 
By the definition of the spectral norm, we have: 
\begin{align}\label{eqn:bound1}
\| \inv{\vecb{I}_{M} } - \inv{\vecb{I}_{*}} \| 
= \| \inv{\vecb{I}_{*}} ( \vecb{I}_{*} - \vecb{I}_{M} ) \inv{\vecb{I}_{M}}  \| 
\leq \| \inv{\vecb{I}_{*}} \| \| \Delta \vecb{I} \| \| \inv{\vecb{I}_{M}}  \|
\leq \frac{1}{\sigma(\vecb{I}_{*})} \| \Delta \vecb{I} \| \frac{2}{\sigma(\vecb{I}_{*})}
\end{align}
Since the intensity functions are all bounded, continuous and twice continuously differentiable, $\| \nabla_{\param} \intend{k}{t} \nabla_{\param} \intend{k}{t}^\top \|$ will be bounded, meaning that $\| \Delta \vecb{I} \|$ will be bounded as well.
Moreover, the ratio $\intend{k}{t}/ \intenq{k}{t}$ is also bounded. 
We define $r = \sup_{\esmh{x}{t}{0}, k} \frac{\intend{k}{t \mid \esmh{x}{t}{0}}}{\intenq{k}{t \mid \esmh{x}{t}{0}}}$ and have $M \intenq{k}{t} \geq M \intend{k}{t} / r$. 
Then there must exist $B > 0$ such that we have: 
\begin{align}\label{eqn:bound2}
	\| (1 + \frac{M}{r})  \Delta \vecb{I} \| 
	\leq B \| \vecb{I}_{*}  \|
	\Rightarrow 
	\|  \Delta \vecb{I} \| \leq \frac{r B}{ r + M } \| \vecb{I}_{*}  \| < \frac{1}{ M} r B \| \vecb{I}_{*} \|
\end{align}

Combining \cref{eqn:bound1,eqn:bound2}, we have 
\begin{align}
	\| \inv{\vecb{I}_{M} } - \inv{\vecb{I}_{*}} \| \leq \frac{1}{ M} \xunderbrace{ \frac{2}{\sigma(\vecb{I}_{*})^2} r B \| \vecb{I}_{*} \| }{\text{ call it } C}
\end{align}
meaning that there exists $C > 0$ such that, for any $M > \bar{M}$, $\| \inv{\vecb{I}_{M}} - \inv{\vecb{I}_{*}} \| \leq C / {M}$. 

Note that the ratio $r$ reflects the effect of $\intenq{k}{t}$ on the efficiency. 
In the special case of $\noise = \data$, we have $r=1$ and $\Delta \vecb{I} = \frac{1}{M + 1} \vecb{I}_{*}$ and the asymptotic covariance matrix becomes $(1 + \frac{1}{M}) \inv{\vecb{I}_{*}}$. 

This completes our proof. 

\end{proof}

\section{Algorithm Details}\label{app:algo_details}

\subsection{NCE Objective Computation Details}\label{app:nce_obj}

Our main algorithm is presented as \cref{alg:nce}.  It covers the recipe for computing our NCE objective, as well as the algorithm to sample from $\noise$. 

\begin{algorithm}
	\caption{Training Objective Computation for Noise-Contrastive Estimation.}\label{alg:nce}
	\begin{algorithmic}[1]
		\INPUT observed event stream $\es{x}{[0,T)}$ with $I$ events at times $0 = t_0 < t_1 < \ldots t_I < t_{I+1} = T$;\newline
		model $\model_{\param}$; noise distribution $\noise$; number of noise samples $M$
		\OUTPUT training objective $\nce$ evaluated on $\es{x}{[0,T)}$ and the corresponding noise samples
		\Procedure{computeObjective}{$\es{x}{[0,T)}, \model_{\param}, \noise, M$}
		\LineComment{algorithm input $\model_{\param}$ gives info to define intensity function $\inten{k}{t}$}
		\State $\nce \gets 0$ \Comment{initialize the objective}
		\State initialize the neural states $s$ and $s^{\mathrm{q}}$ of $\model_{\param}$ and $\noise$ respectively \Comment{i.e., their LSTM states} 
		\State $i \gets 0$
		\While{$i \leq I$} %
		\State $i \pluseq 1$
		\LineComment{use noise samples in the current interval}
		\For{$(t, k, \lambda^{\mathrm{q}}, \mu )$ {\bfseries in} \textsc{drawNoiseSamples}($t_{i-1}, t_i$)}
			\State compute the model intensity $\inten{k}{t \mid s}$ under $\model_{\param}$
			\State $\nce \pluseq \mu \log \frac{\lambda^{\mathrm{q}}}{\inten{k}{t \mid s} + M \lambda^{\mathrm{q}} } $
			\EndFor
		\IfThen{$i > I$}{{\bfseries break}}
		\LineComment{use the real event at time $t_i$}
		\State $t \gets t_i$, $k \gets \es{x}{t_i}$
		\State compute the model intensity $\inten{k}{t \mid s}$ under $\model_{\param}$
		\State compute the noise intensity $\intenq{k}{t \mid s^{\mathrm{q}}}$ under $\noise$
		\State $\nce \pluseq \log \frac{\inten{k}{t \mid s}}{\inten{k}{t \mid s} + M \intenq{k}{t \mid s^{\mathrm{q}}} }$
		\State update the neural states $s$ and $s^{\mathrm{q}}$ of $\model_{\param}$ and $\noise$ respectively with this real event
		\EndWhile
		\State {\bfseries return} $\nce$ 
		\EndProcedure
		\Procedure{drawNoiseSamples}{$\tbeg, \tend$} \Comment{draw noise samples over interval $(\tbeg, \tend)$}
		\LineComment{has access to $\noise, M$}
		\LineComment{define the \defn{total intensity function} $\intenq{}{t \mid s^{\mathrm{q}}} \defeq \sum_{c=1}^C
			\intenq{c}{t \mid s^{\mathrm{q}}}$}
		\State $\set{Q} \gets \text{empty collection}$ \Comment{collection of noise samples}
		\State $t \gets \tbeg$; find any $\intenbound \geq \sup\;\{\intenq{}{t \mid s^{\mathrm{q}}}: t \in (\tbeg, \tend)\}$ 
		\Repeat
		\State\label{line:expdraw} draw $\Delta \sim \Exp(M \intenbound)$; $t \pluseq \Delta$ \Comment{propose a noise time}
		\If{$t < \tend$}
		\State $\mu \gets {\intenq{}{t \mid s^{\mathrm{q}}}}/{\intenbound}$ \Comment{compute probability to accept the proposed time}
		\If{$\mu < 0.05$} \Comment{stochastically accept $t$ with prob $\mu$ if $\mu < 0.05$}
                \State $u \sim \Uniform(0,1)$; {\bfseries if} $u < \mu$ : $\mu \gets 1$ 
                \EndIf
		\If{$\mu \geq 0.05$} \Comment{otherwise fractionally accept $t$ with weight $\mu$}
		\State draw $c \in \{1,\ldots,C\}$ where probability of $c$ is $\propto\intenq{c}{t\mid s^{\mathrm{q}}}$ \Comment{choose coarse type}
		\State draw $k \in \{1,\ldots,K\}$ where probability of $k$ is $\noise(k \mid c)$ \Comment{choose refinement}
		\State compute the noise intensity $\intenq{k}{t \mid s^{\mathrm{q}}}$ under $\noise$
		\State add $(t, k, \intenq{k}{t\mid s^{\mathrm{q}}}, \mu)$ to $\set{Q}$
		\EndIf
		\EndIf
		\Until{$t \geq \tend$}
		\State \textbf{return} $\set{Q}$
		\EndProcedure
	\end{algorithmic}
\end{algorithm}

\subsection{Training the Noise Distribution $\noise$ by NCE}\label{app:train_q}\label{sec:train_p}

Before we optimize our $\nce(\param)$, we first fit the noise distribution $\noise$ to the training data.  As discussed in endnote~\ref{fn:gan},   
we expect that fitting the data well will give a good training signal to learn $\param$.

In the experiments of this paper, we used MLE to estimate the parameters $\paramq$ of $\noise$, which involves taking approximate integrals as in \citet{mei-17-neuralhawkes}. (After all, we did not yet know whether NCE would work well.)  To avoid the approximate integrals, however, one could instead estimate $\paramq$ using NCE. 
When evaluating this NCE objective during training of $\paramq$, one can take the noise distribution to be $\noise_{\paramq_{\mathrm{old}}}$ where $\paramq_{\mathrm{old}}$ is any snapshot of $\paramq$ from a recent iteration of training (even the current iteration).  The same $\paramq_{\mathrm{old}}$ must be used for both drawing noise events via the thinning algorithm, and for scoring these noise events and their contrasting observed events.

Regardless of whether we use MLE or NCE, it is faster to train $\noise$ than to train $\model$ because $\noise$ only has $C$ event types instead of $K$.

The idea of using as the noise distribution a model previously trained with NCE was also considered in the original NCE paper \citep{gutmann-10-nce}.

\section{Experimental Details and Additional Results}\label{app:exp_details}

\subsection{Dataset Details}\label{app:data_details}\label{app:data_stats}\label{app:email_details}\label{app:collegemsg_details}\label{app:bitcoin_details}\label{app:wikitalk_details}

Besides the datasets we have introduced in \cref{sec:exp}, we also run experiments on the following real-world social interaction datasets: 
\paragraph{CollegeMsg \textnormal{\citep{panzarasa-09-patterns}}.}
This dataset contains anonymized private messages sent on an online social network at an university. 
Each record $(u,v,t)$ means that user $u$ sent a private message to user $v$ at time $t$ and each $u,v$ pair is an event type. 
We consider the top 100 users sorted by the number of messages they sent and received: the total number of possible event types is then $K = 9900$ since self-messaging is not allowed.

\paragraph{WikiTalk \textnormal{\citep{leskovec-10-governance}}.} 
This dataset contains the records of anonymized Wikipedia users editing each other's Talk page. 
Each record $(u,v,t)$ means that user $u$ edited user $v$'s talk page at time $t$ and each $u,v$ pair is an event type. 
We consider the top 100 users sorted by the number of edits they made and received and the total number of possible event types is $K=10000$. 

\Cref{tab:stats_dataset} shows statistics about each dataset that we use in this paper.
\begin{table*}[t]
	\begin{center}
		\begin{small}
			\begin{sc}
				\begin{tabularx}{1.00\textwidth}{l *{1}{S}*{3}{R}*{3}{S}}
					\toprule
					Dataset & \multicolumn{1}{r}{$K$} & \multicolumn{3}{c}{\# of Event Tokens} & \multicolumn{3}{c}{Sequence Length} \\
					\cmidrule(lr){3-8}
					&  & Train & Dev & Test & Min & Mean & Max \\
					\midrule
					Synthetic-1 & $10000$ & $100000$ & $10000$ & $10000$ & $100$ & $100$ & $100$ \\
					Synthetic-2 & $10000$ & $100000$ & $10000$ & $10000$ & $100$ & $100$ & $100$ \\
					EuroEmail & $10000$ & $50000$ & $10000$ & $10000$ & $100$ & $100$ & $100$ \\
					BitcoinOTC & $19800$ & $1000$ & $500$ & $500$ & $100$ & $100$ & $100$ \\
					CollegeMsg & $9900$ & $8000$ & $1000$ & $1000$ & $100$ & $100$ & $100$ \\
					WikiTalk & $10000$ & $100000$ & $20000$ & $20000$ & $100$ & $100$ & $100$ \\
					RoboCup & $528$ & $2195$ & $817$ & $780$ & $780$ & $948$ & $1336$ \\ 
					IPTV & $49000$ & $27355$ & $4409$ & $4838$ & $36602$ & $36602$ & $36602$ \\
					\bottomrule
				\end{tabularx}
			\end{sc}
		\end{small}
	\end{center}
	\caption{Statistics of each dataset. For IPTV, we have a single long sequence of 36602 tokens: we use the first 27355 as training data, the next 4409 as dev data and the remaining 4838 as test data. For other datasets, training, dev and test sequences are separate sequences.}
	\label{tab:stats_dataset}
\end{table*}

\subsection{Training Details}\label{app:training_details}

For each of the chosen models in \cref{sec:exp}, the only hyperparameter to tune is the hidden dimension $D$ of the neural network. 
On each dataset, we searched for $D$ that achieves the best performance on the dev set. 
Our search space is $\{4, 8, 16, 32, 64, 128\}$.

For learning, we used the Adam algorithm \citep{kingma-15} with its default settings. 
For each $\rho$ or $M$, we run training long enough so that the log-likelihood on the held-out data can converge. 

\subsection{More Results on Real-World Social Interaction Datasets}\label{app:real_large}

The learning curves on CollegeMsg and WikiTalk datasets are shown in \cref{fig:social_more}: they look similar to those in \cref{fig:social} and lead to the same conclusions. 

\begin{figure*}[!ht]
	\begin{center}
		
		\begin{subfigure}[t]{0.49\linewidth}
			\renewcommand\thesubfigure{\alph{subfigure}1}
			\begin{center}
				\includegraphics[width=0.49\linewidth]{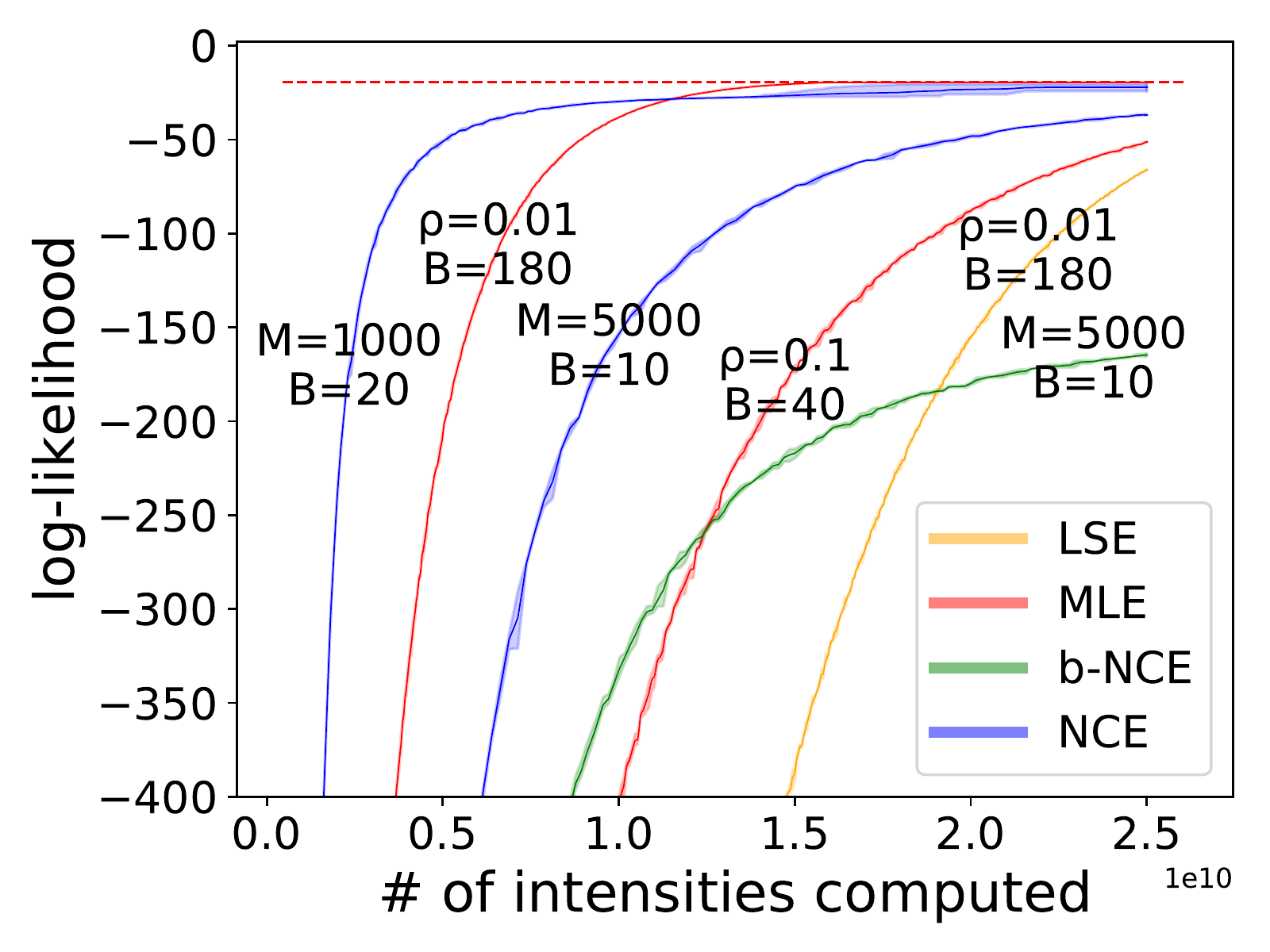}
				\includegraphics[width=0.49\linewidth]{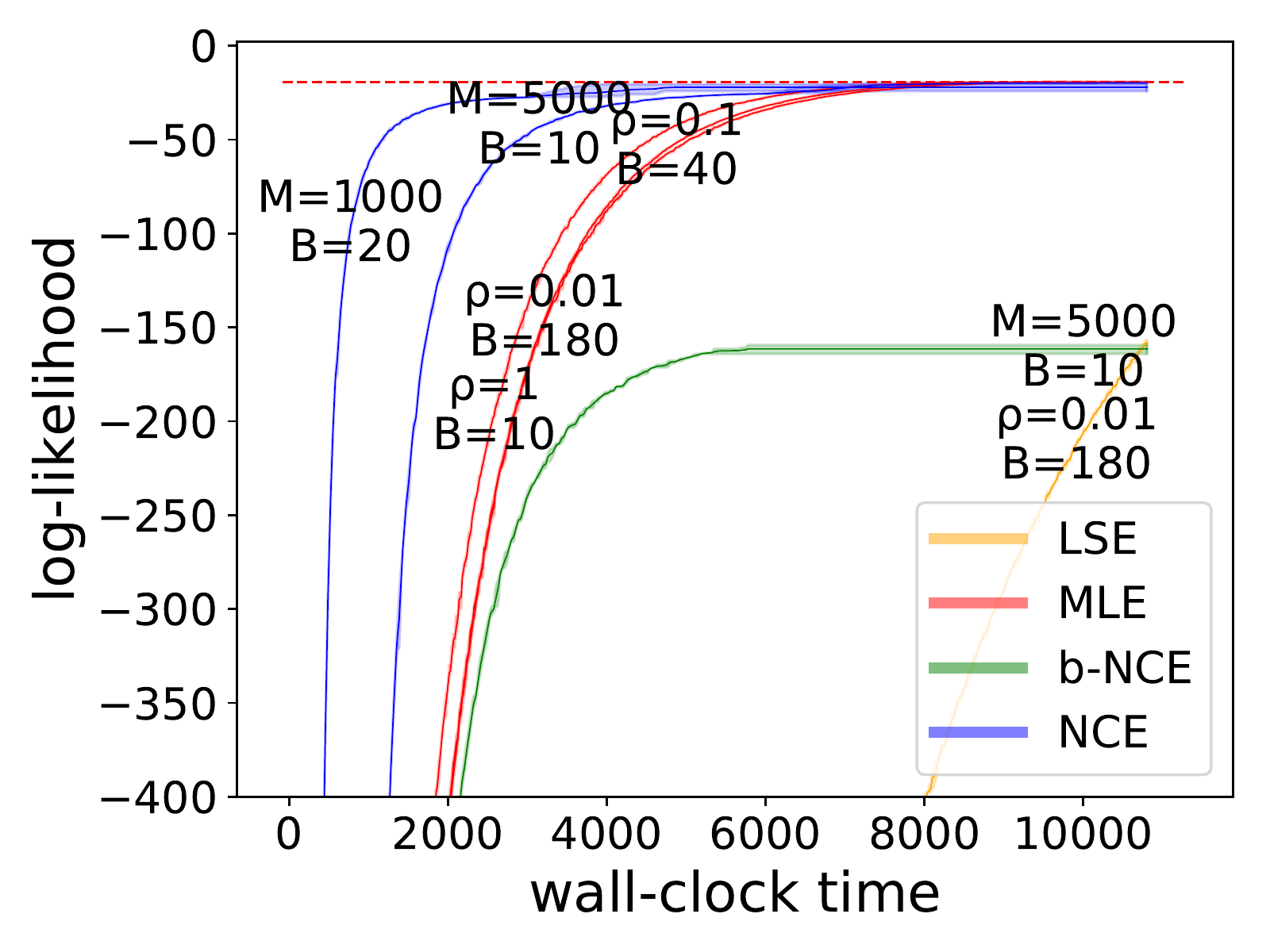}
				\vspace{-16pt}
				\caption{CollegeMsg: neural $\noise$}\label{fig:collegemsg_train_neural}
			\end{center}
		\end{subfigure}
		~
		\begin{subfigure}[t]{0.49\linewidth}
			\addtocounter{subfigure}{-1}
			\renewcommand\thesubfigure{\alph{subfigure}2}
			\begin{center}
				\includegraphics[width=0.49\linewidth]{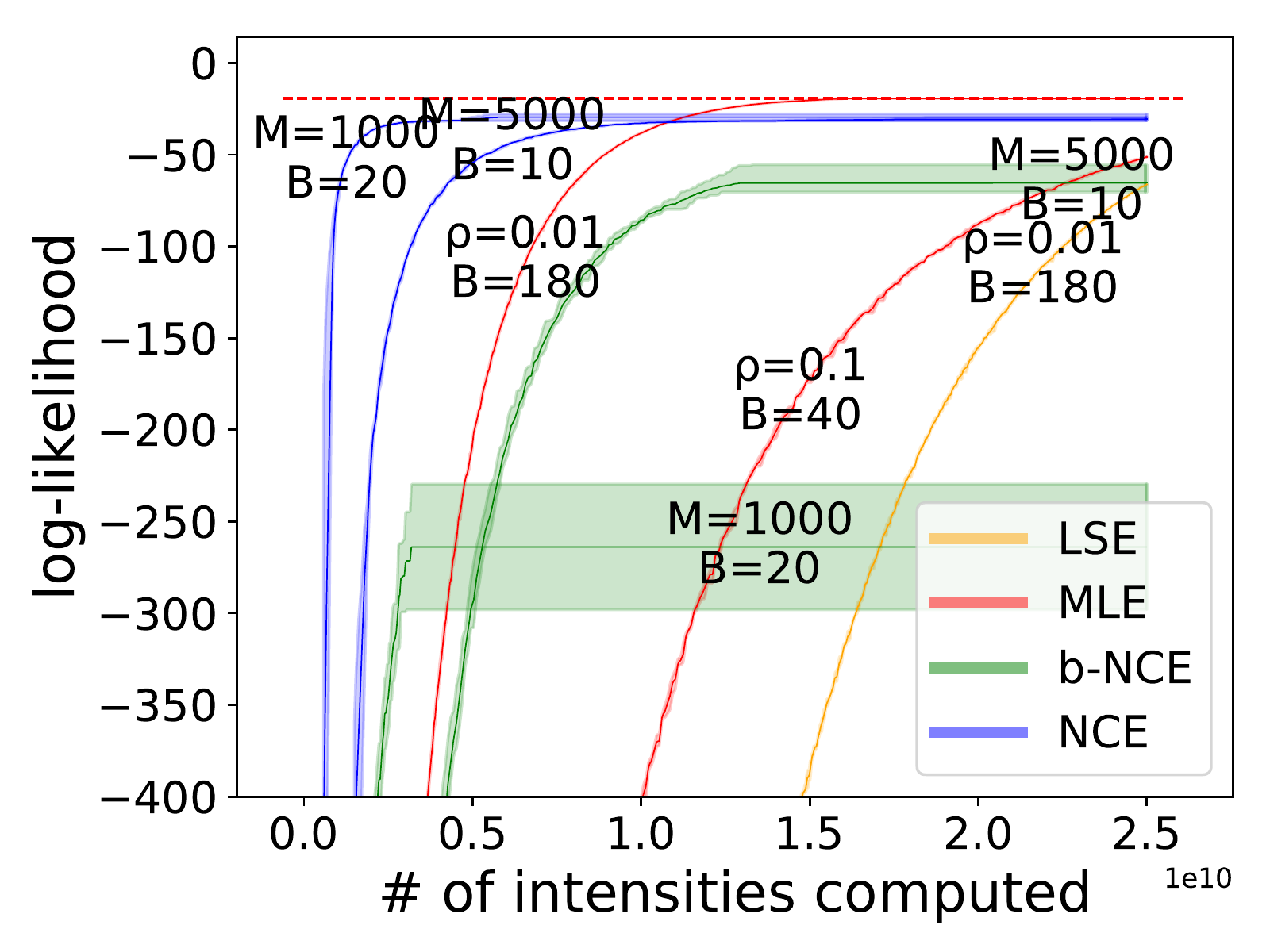}
				\includegraphics[width=0.49\linewidth]{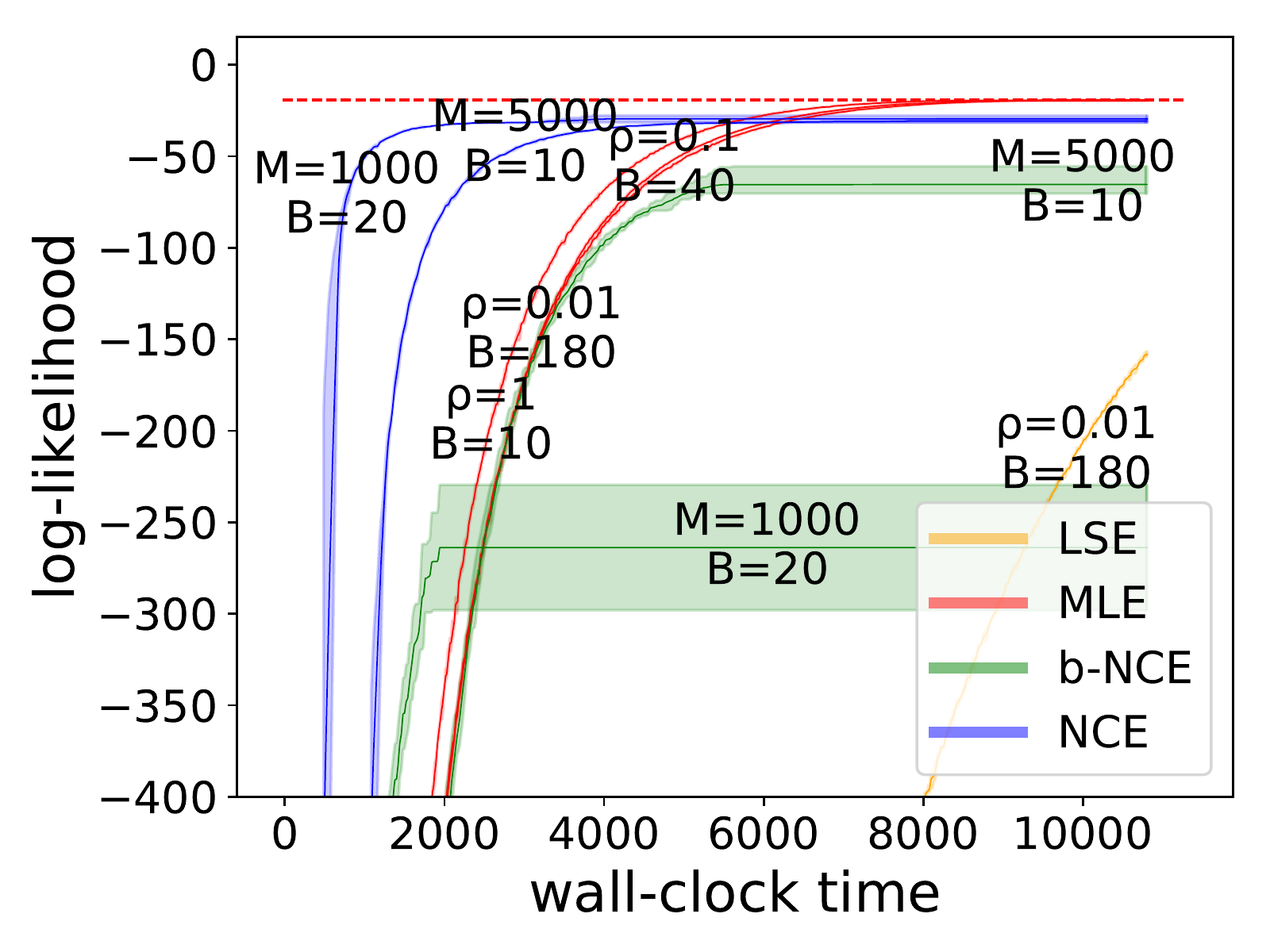}
				\vspace{-16pt}
				\caption{CollegeMsg: Poisson $\noise$}\label{fig:collegemsg_train_poisson}
			\end{center}
		\end{subfigure}
	
		\begin{subfigure}[t]{0.49\linewidth}
			\renewcommand\thesubfigure{\alph{subfigure}1}
			\begin{center}
				\includegraphics[width=0.49\linewidth]{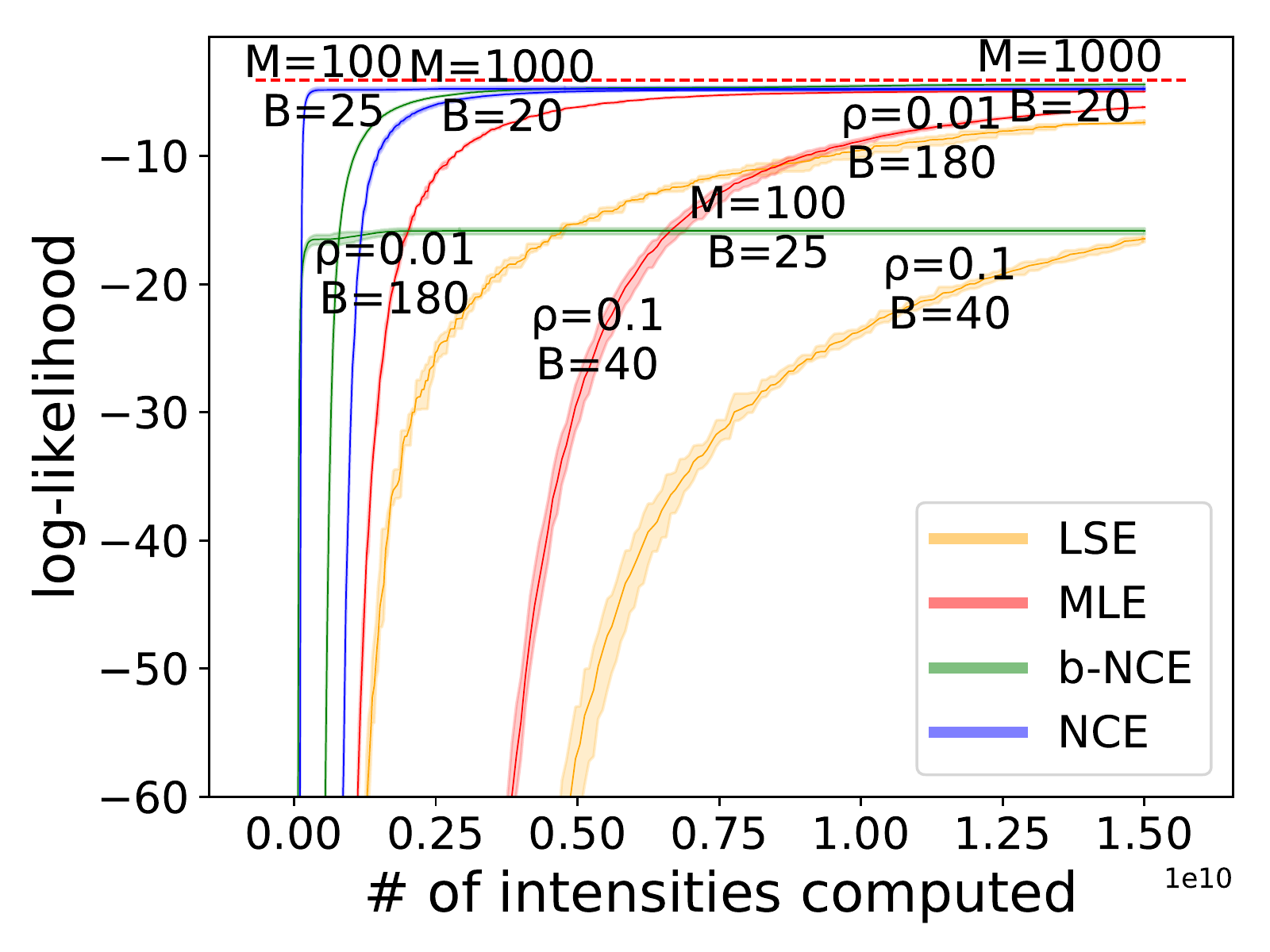}
				\includegraphics[width=0.49\linewidth]{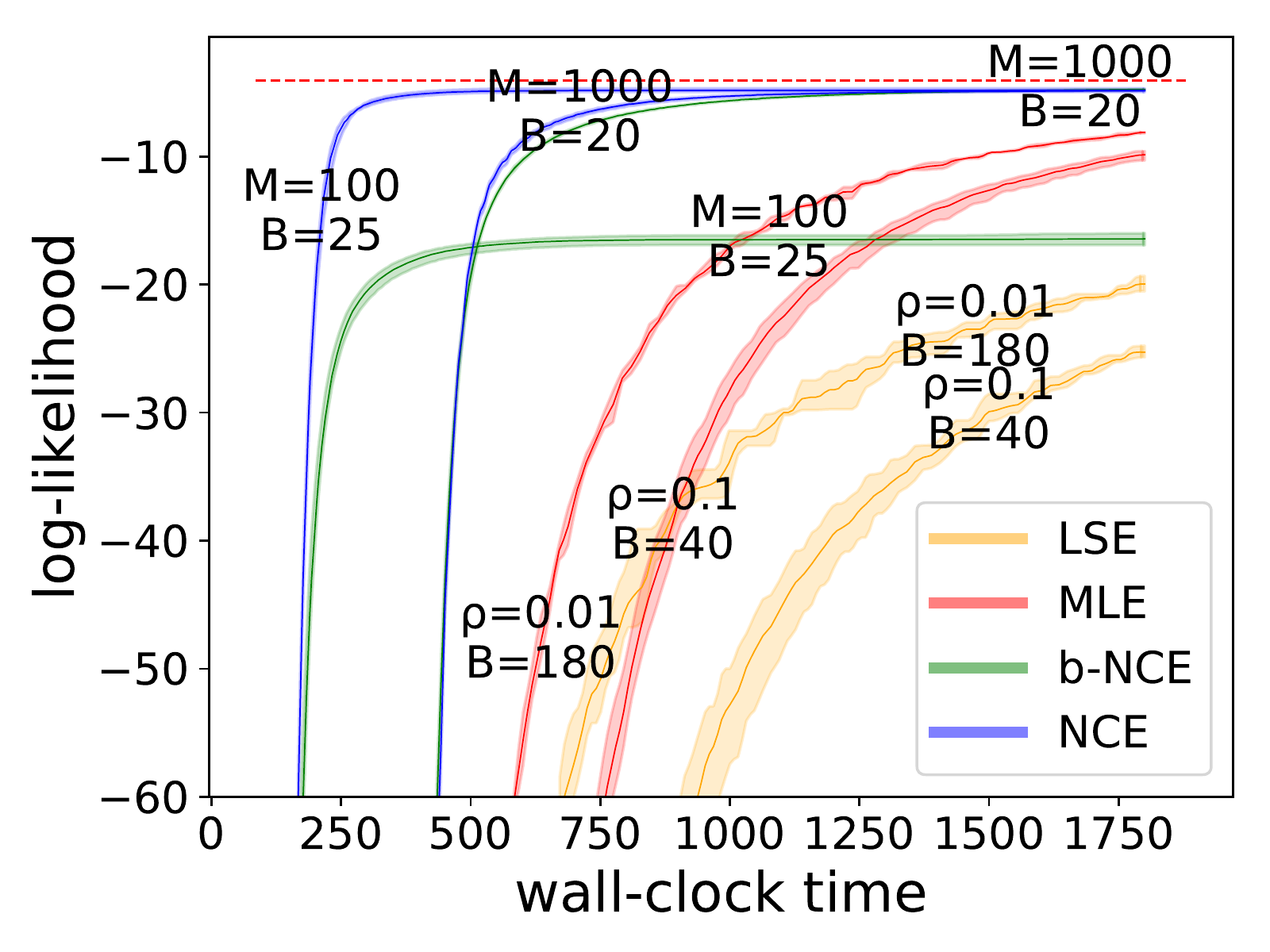}
				\vspace{-16pt}
				\caption{WikiTalk: neural $\noise$}\label{fig:wikitalk_train_neural}
			\end{center}
		\end{subfigure}
		~
		\begin{subfigure}[t]{0.49\linewidth}
			\addtocounter{subfigure}{-1}
			\renewcommand\thesubfigure{\alph{subfigure}2}
			\begin{center}
				\includegraphics[width=0.49\linewidth]{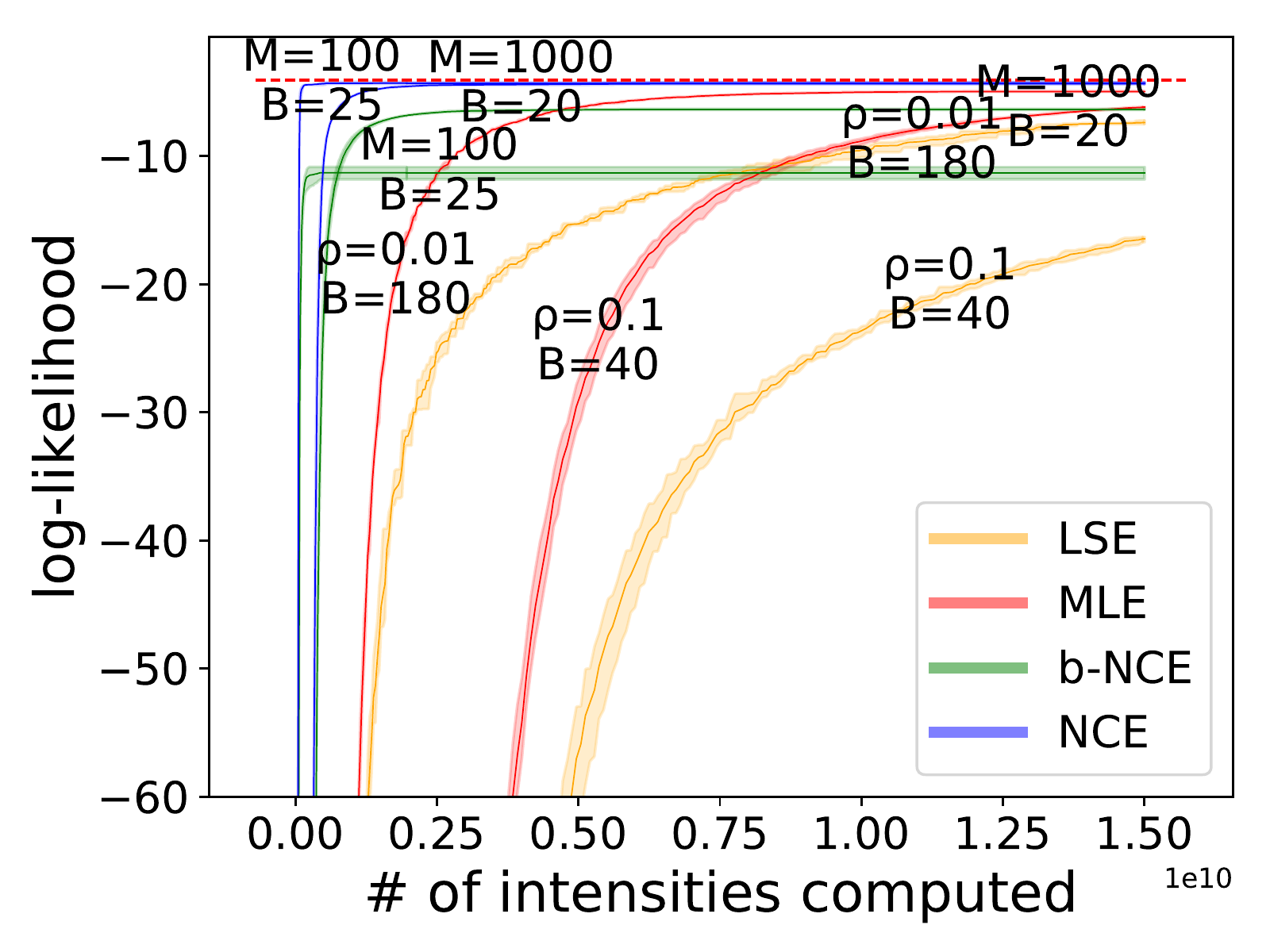}
				\includegraphics[width=0.49\linewidth]{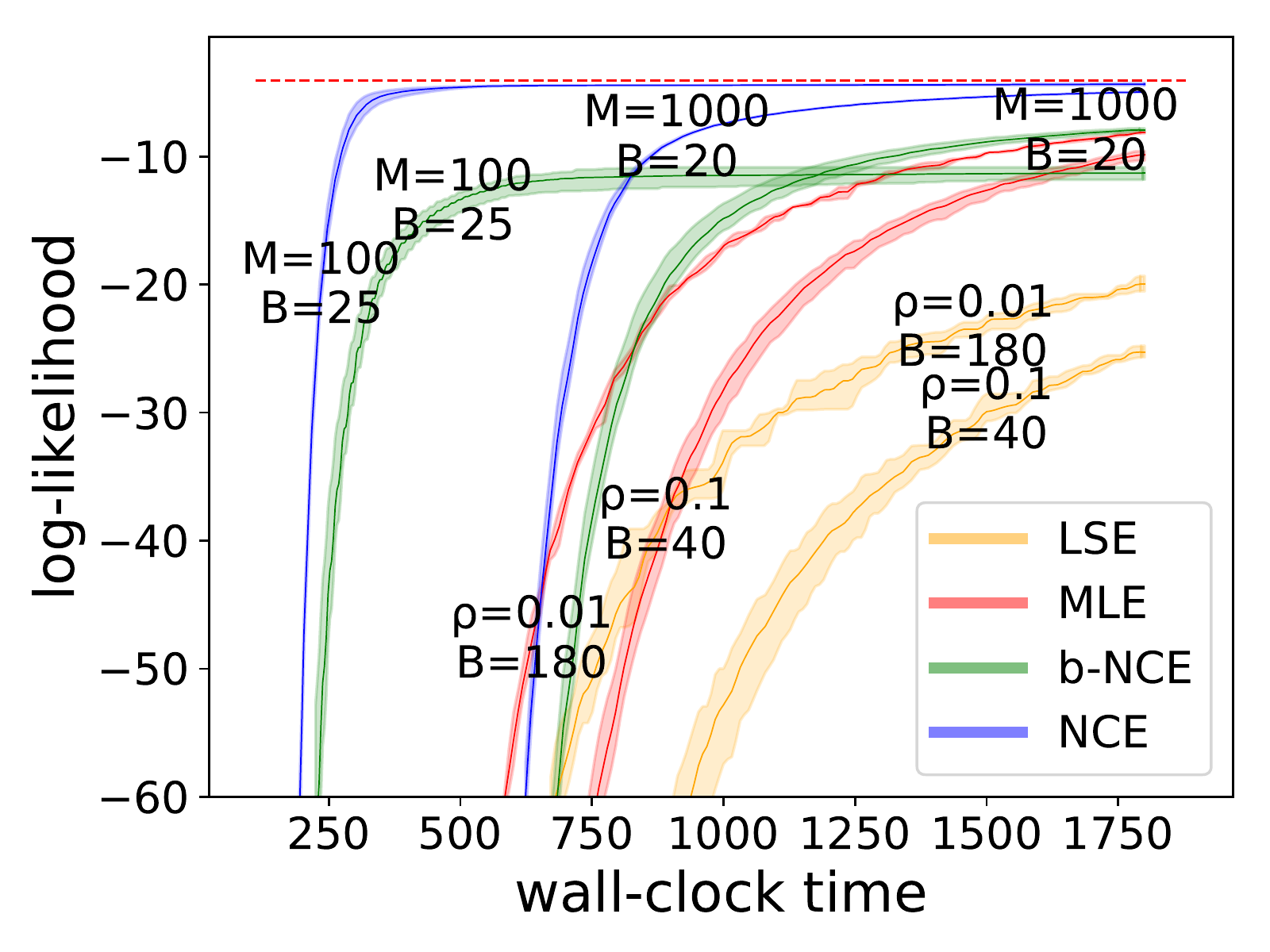}
				\vspace{-16pt}
				\caption{WikiTalk: Poisson $\noise$}\label{fig:wikitalk_train_poisson}
			\end{center}
		\end{subfigure}
		\caption{Learning curves of MLE and NCE on the other real-world social interaction datasets.}
		\label{fig:social_more}
	\end{center}
\end{figure*}

\subsection{Ablation Study I: Always or Never Redraw Noise samples}\label{app:p0p1}

In \cref{fig:nhp_p0p1}, we show the learning curves for the ``always redraw'' and ``never redraw'' strategies on the first synthetic dataset. 
As shown in \cref{fig:nhp_qeqp_p1}, with the ``always redraw'' strategy, NCE (\bluesolid) needs considerably fewer intensity evaluations to reach the highest log-likelihood (\reddash) that MLE (\redsolid) can achieve on the held-out data. 
However, the curve with $M=1000$ increases more slowly than MLE in terms of wall-clock time since it spends too much time on drawing new noise samples. 

As shown in \cref{fig:nhp_qeqp_p0}, with the ``never redraw'' strategy, $M=1000$ overtakes MLE: a single draw of $M=1000$ noise streams is able to give very good training signals and the saved computation can be spent on training $\model_{\param}$ repeatedly on the same samples. 
However, the curve of $M=1$ only achieves $\text{log-likelihood} \approx-200$ and thus falls out of the zoomed-in view. 

\begin{figure*}[!ht]
	\begin{center}
		\begin{subfigure}[b]{0.49\linewidth}
			\begin{center}
				\includegraphics[width=0.48\linewidth]{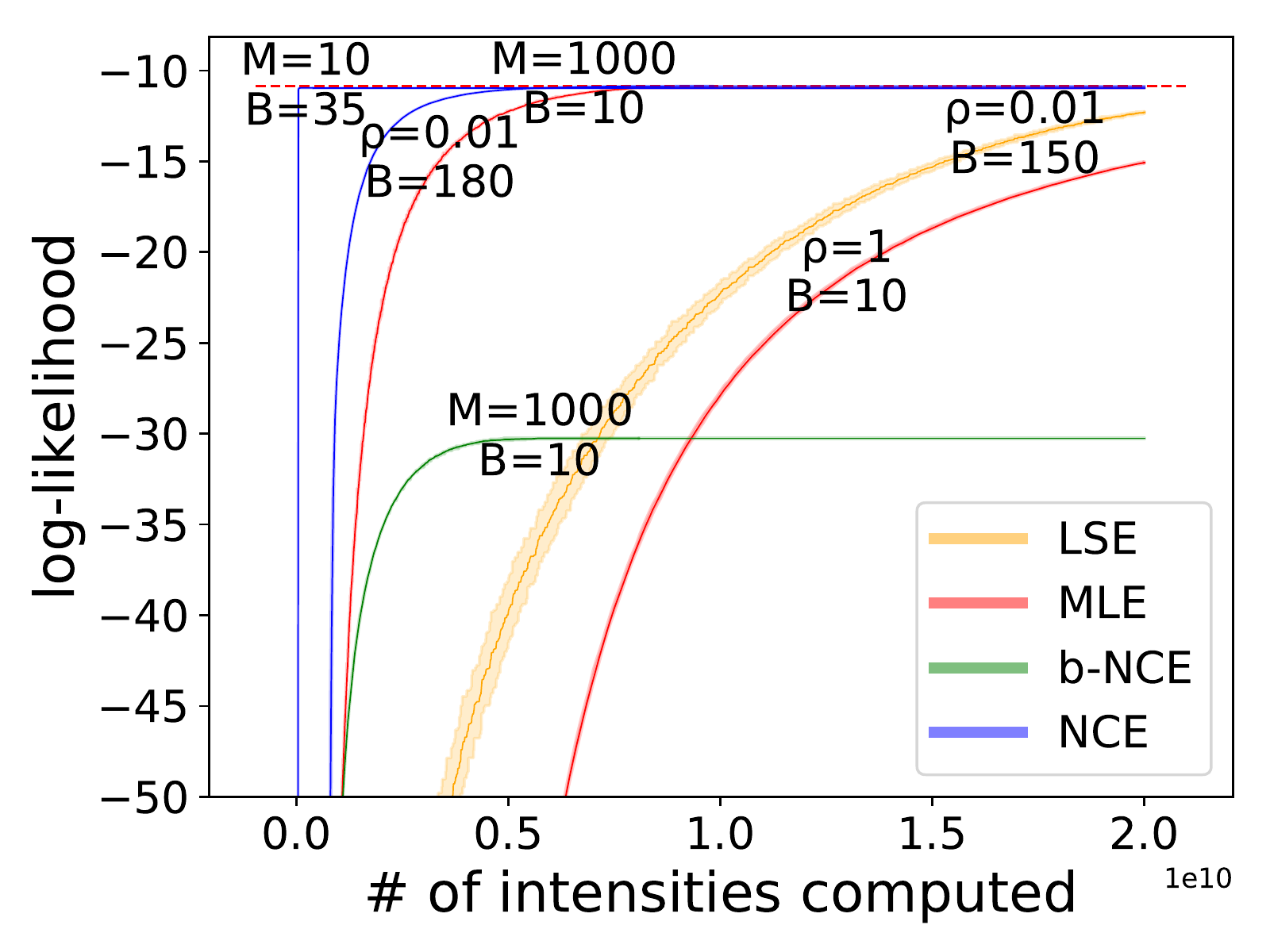}
				\includegraphics[width=0.48\linewidth]{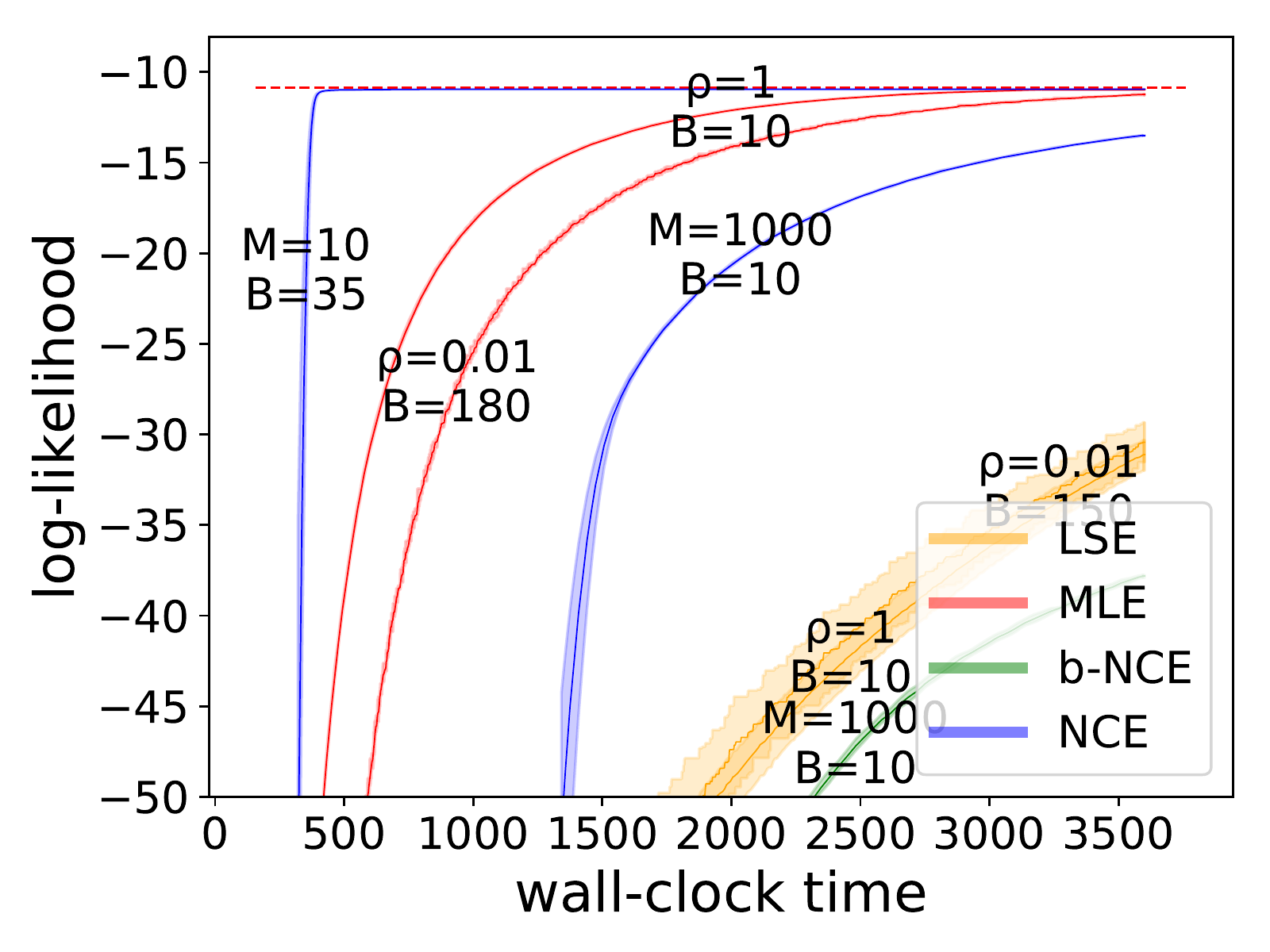}
				\caption{Always redraw new noise samples}\label{fig:nhp_qeqp_p1}
			\end{center}
		\end{subfigure}
		~
		\begin{subfigure}[b]{0.49\linewidth}
			\begin{center}
				\includegraphics[width=0.48\linewidth]{figures/nhp/fullgpu_p0_neural_q_lc_tag=inten_csv=cr_xhigh=20000000000_ylow=-50.pdf}
				\includegraphics[width=0.48\linewidth]{figures/nhp/fullgpu_p0_neural_q_lc_tag=time_csv=cr_xhigh=3600_ylow=-50.pdf}
				\caption{Never redraw new noise samples}\label{fig:nhp_qeqp_p0}
			\end{center}
		\end{subfigure}
		\caption{Ablation Study I. Learning curves of MLE and NCE with $\noise = \data$ and different ``redraw'' strategies. }
		\label{fig:nhp_p0p1}
	\end{center}
\end{figure*}

\begin{figure*}[!ht]
	\begin{center}
		\begin{subfigure}[b]{0.49\linewidth}
			\begin{center}
				\includegraphics[width=0.49\linewidth]{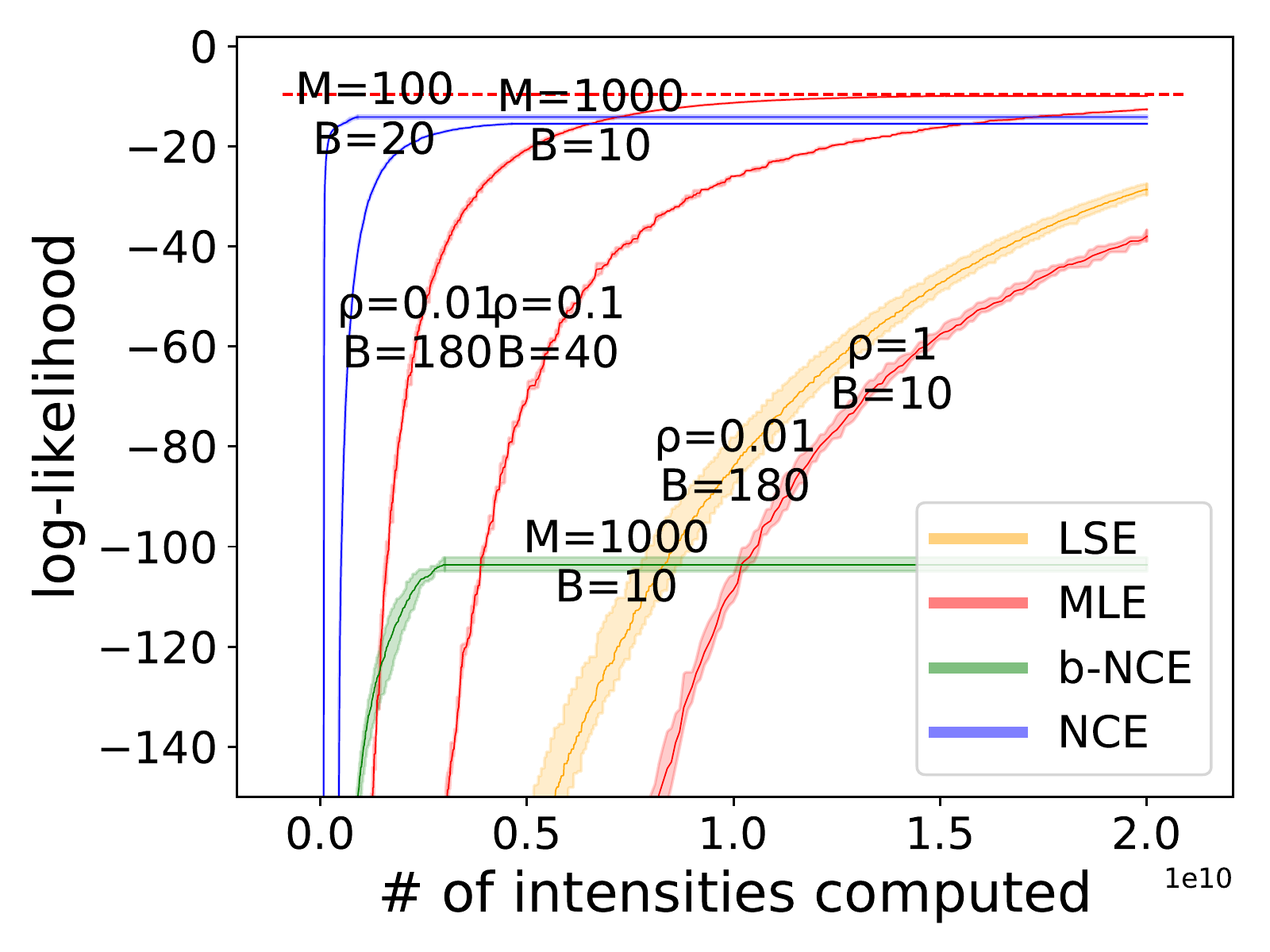}
				\includegraphics[width=0.49\linewidth]{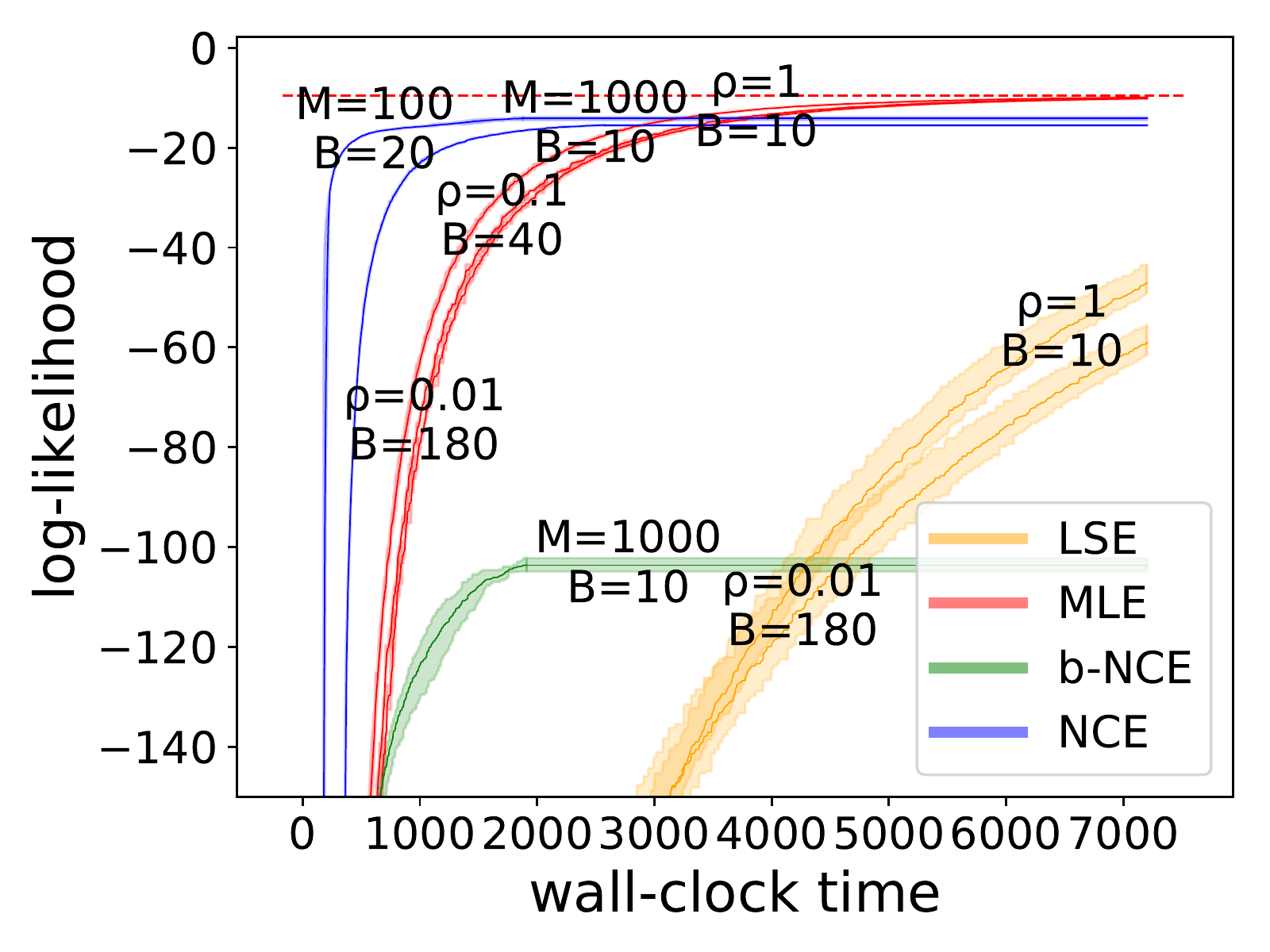}
				\caption{EuroEmail}\label{fig:email_random}
			\end{center}
		\end{subfigure}
		~
		\begin{subfigure}[b]{0.49\linewidth}
			\begin{center}
				\includegraphics[width=0.49\linewidth]{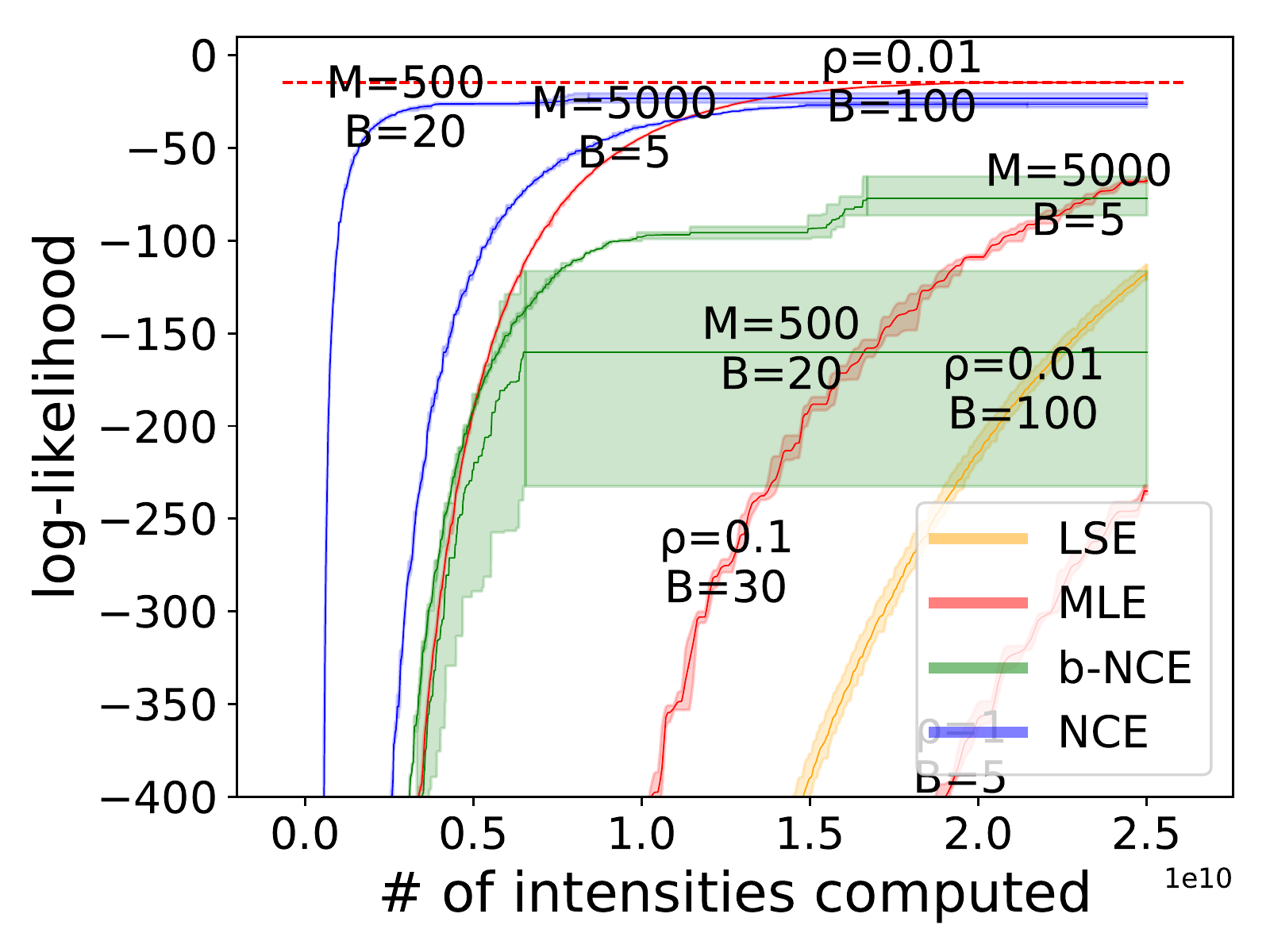}
				\includegraphics[width=0.49\linewidth]{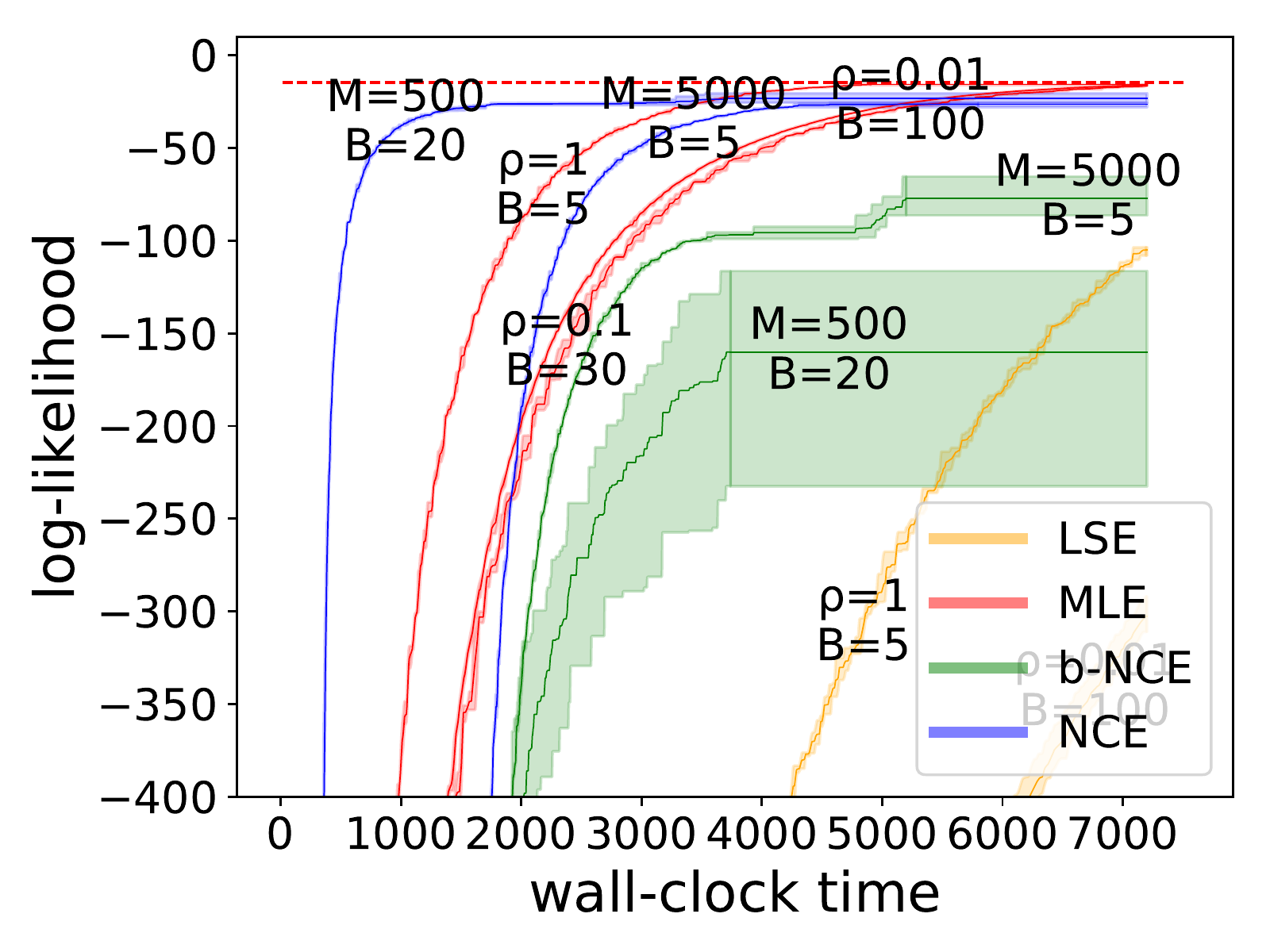}
				\caption{BitcoinOTC}\label{fig:bitcoin_random}
			\end{center}
		\end{subfigure}
		
		\begin{subfigure}[b]{0.49\linewidth}
			\begin{center}
				\includegraphics[width=0.49\linewidth]{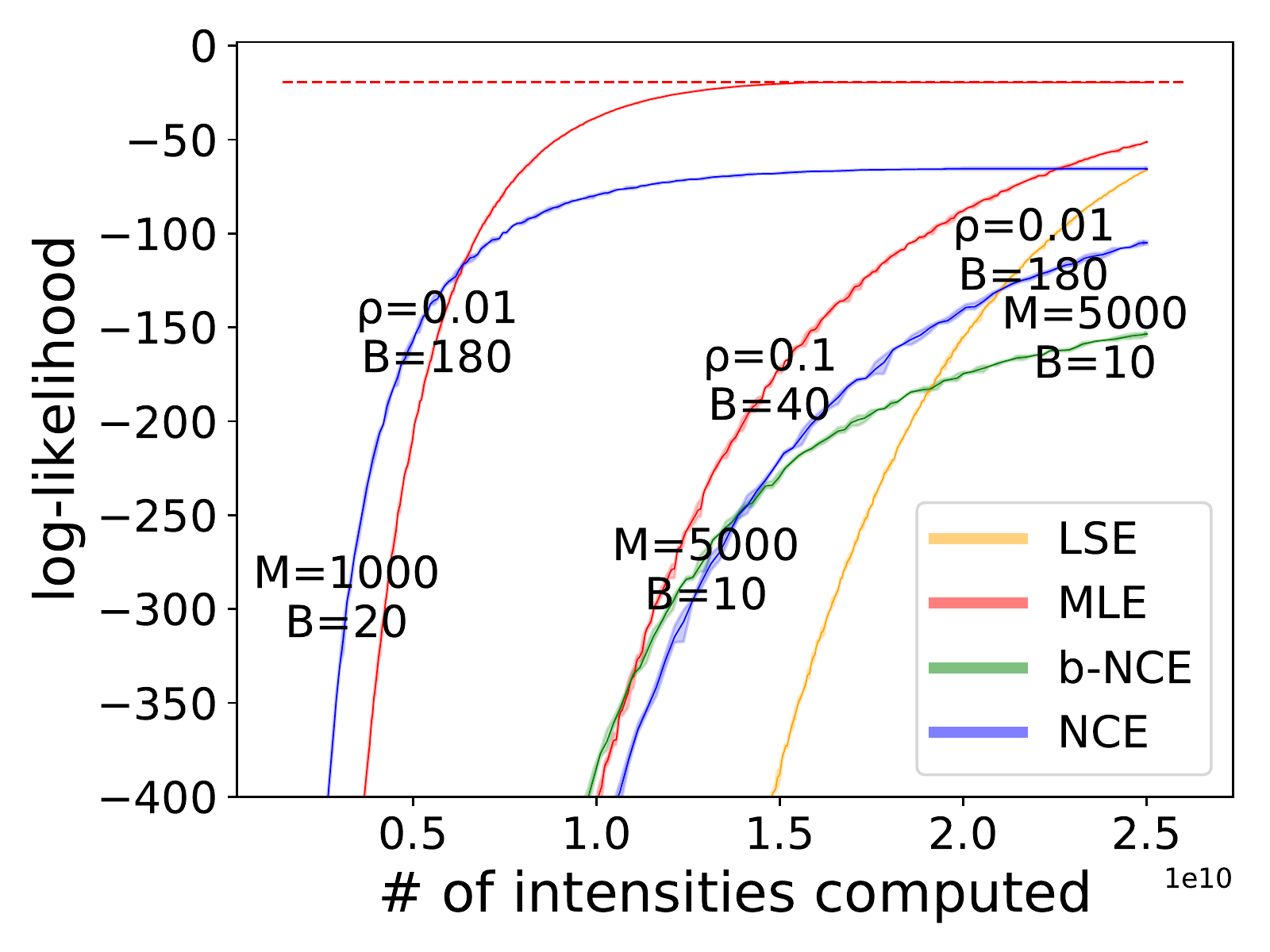}
				\includegraphics[width=0.49\linewidth]{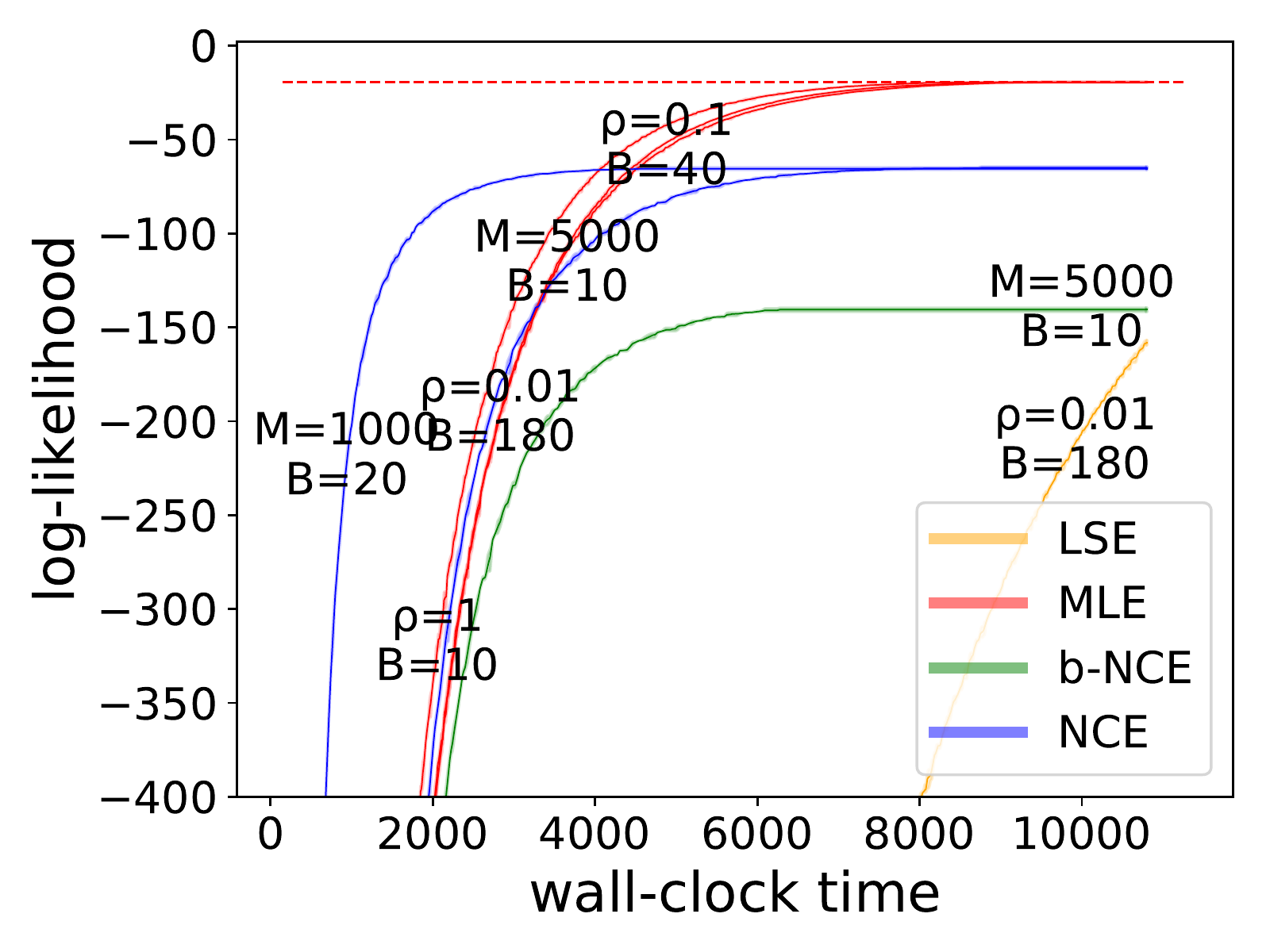}
				\caption{CollegeMsg}\label{fig:collegemsg_random}
			\end{center}
		\end{subfigure}
		~
		\begin{subfigure}[b]{0.49\linewidth}
			\begin{center}
				\includegraphics[width=0.49\linewidth]{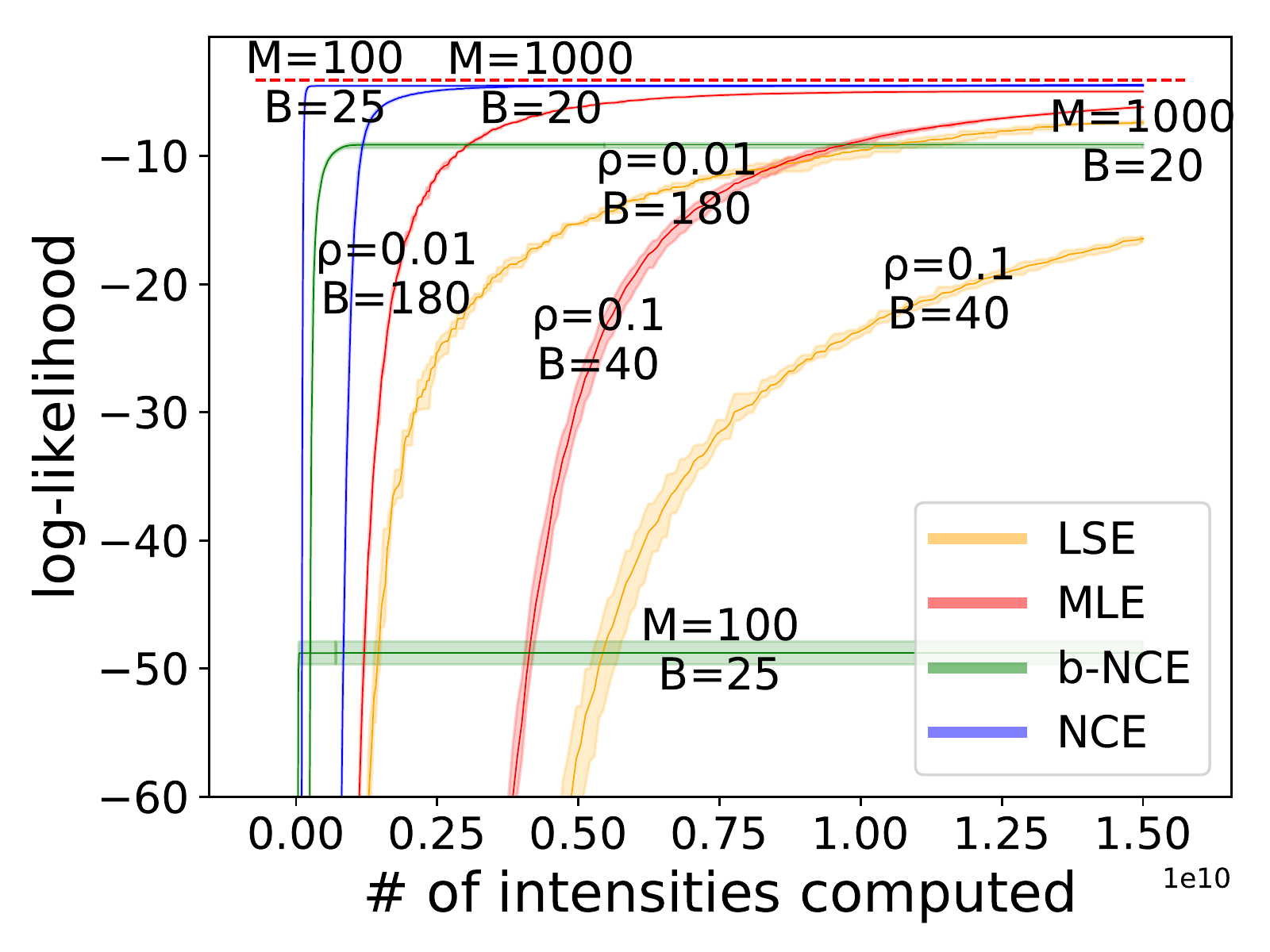}
				\includegraphics[width=0.49\linewidth]{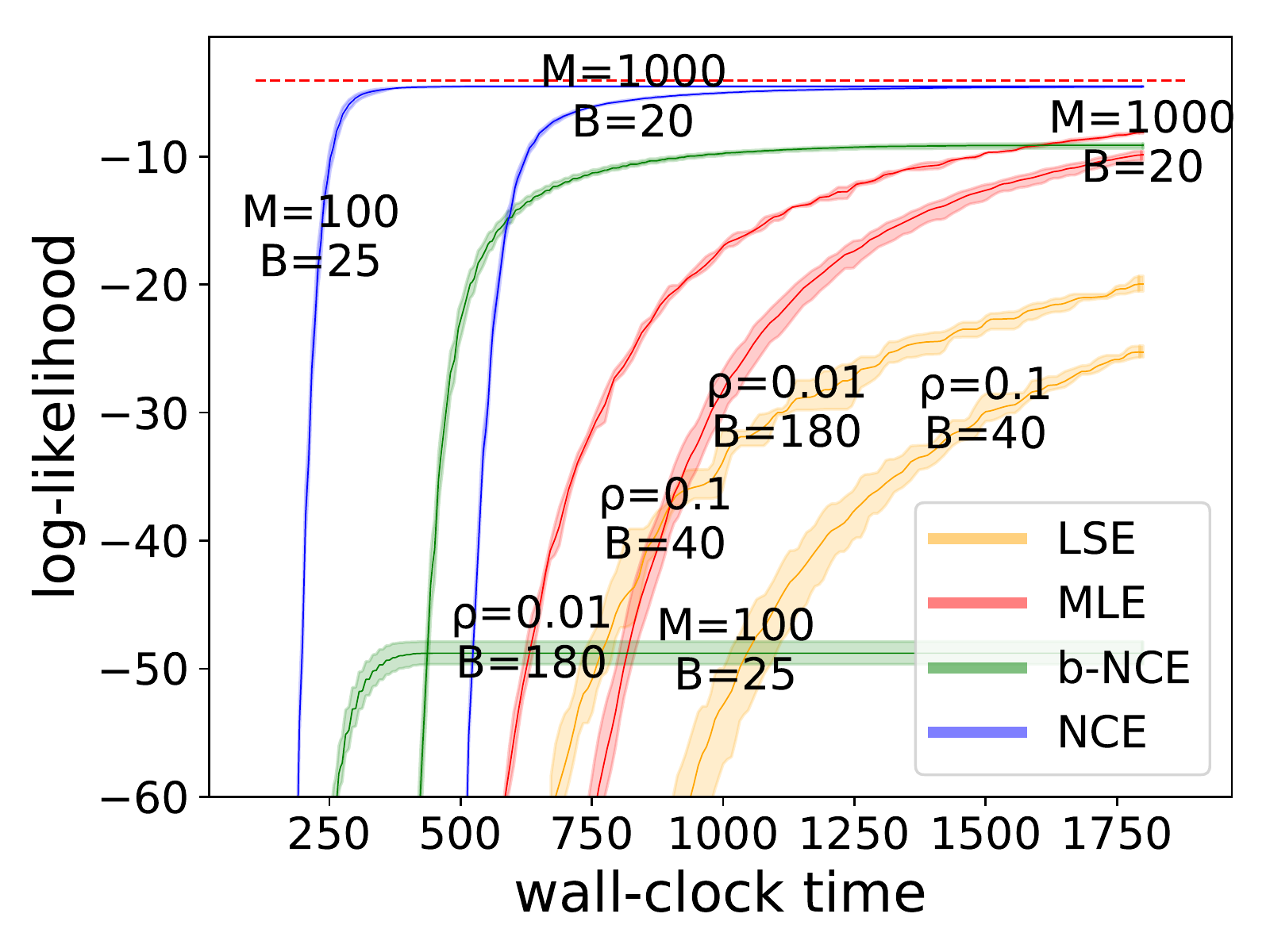}
				\caption{WikiTalk}\label{fig:wikitalk_random}
			\end{center}
		\end{subfigure}
		\caption{Ablation Study II. Learning curves of MLE and NCE with untrained $\noise$ on social interaction datasets.}
		\label{fig:social_random}
	\end{center}
\end{figure*}

\begin{figure*}[!ht]
	\begin{center}
		\begin{subfigure}[b]{0.49\linewidth}
			\begin{center}
				\includegraphics[width=0.49\linewidth]{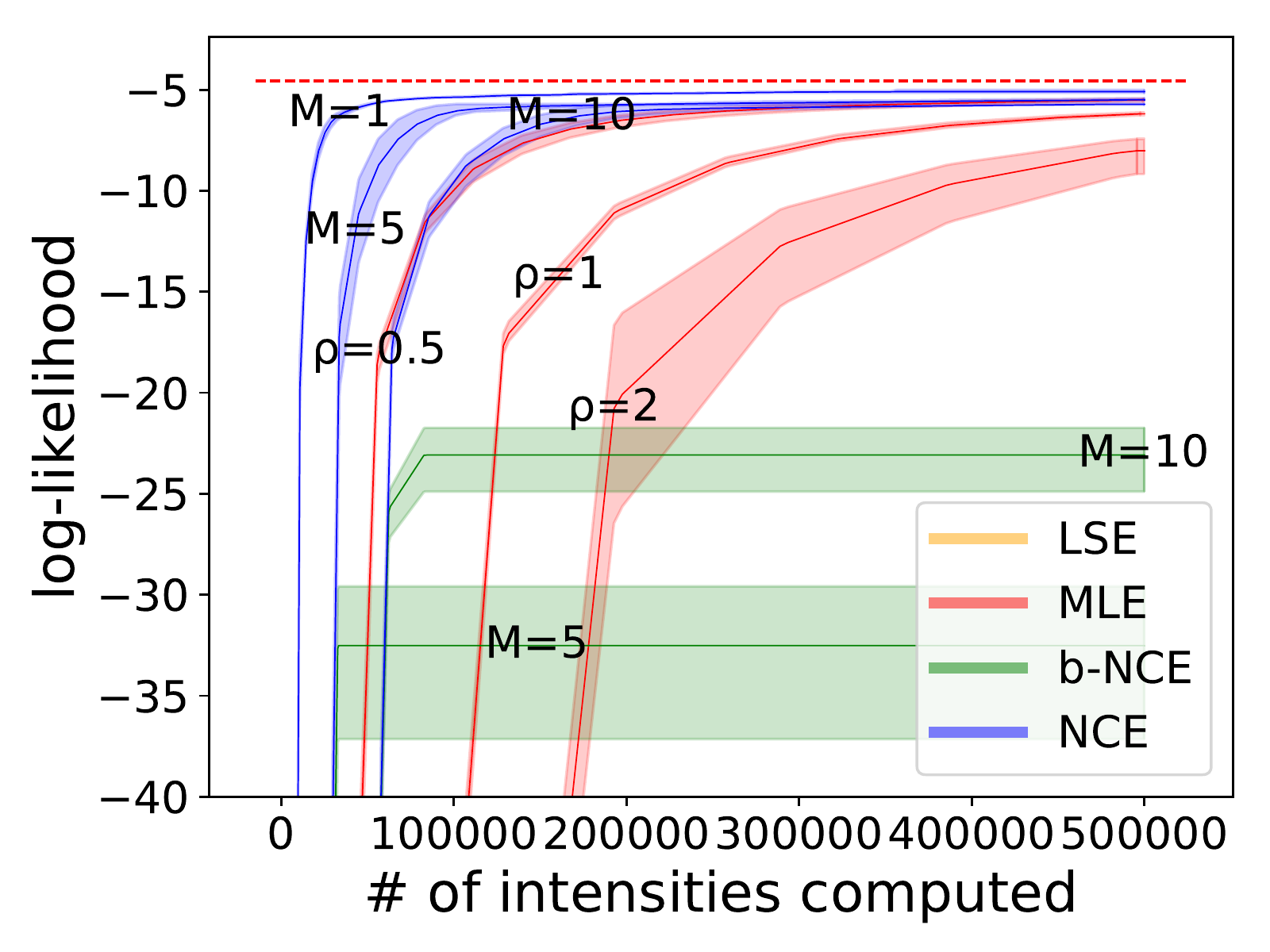}
				\includegraphics[width=0.49\linewidth]{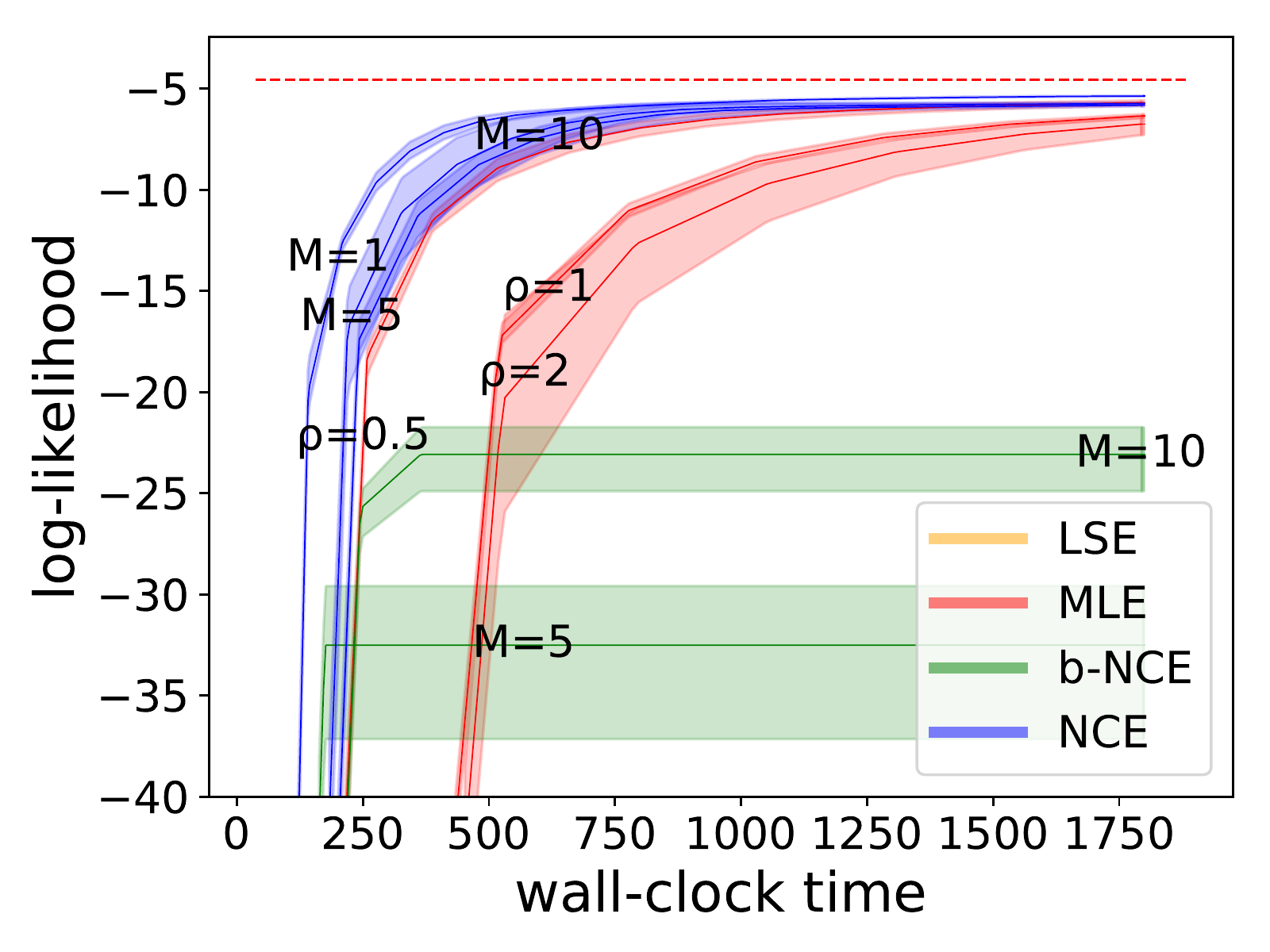}
				\caption{RoboCup}\label{fig:robocup_c1}
			\end{center}
		\end{subfigure}
		~
		\begin{subfigure}[b]{0.49\linewidth}
			\begin{center}
				\includegraphics[width=0.49\linewidth]{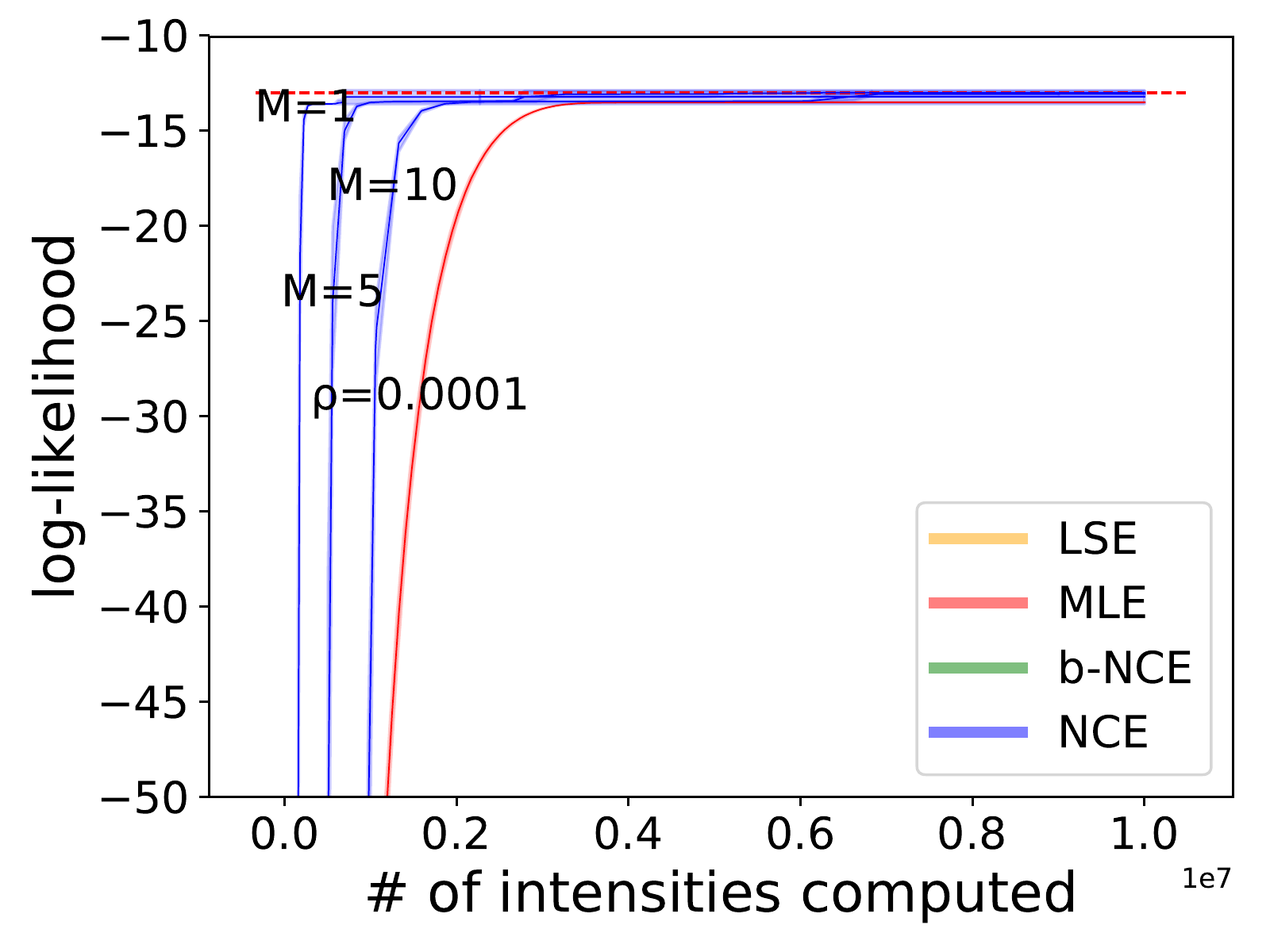}
				\includegraphics[width=0.49\linewidth]{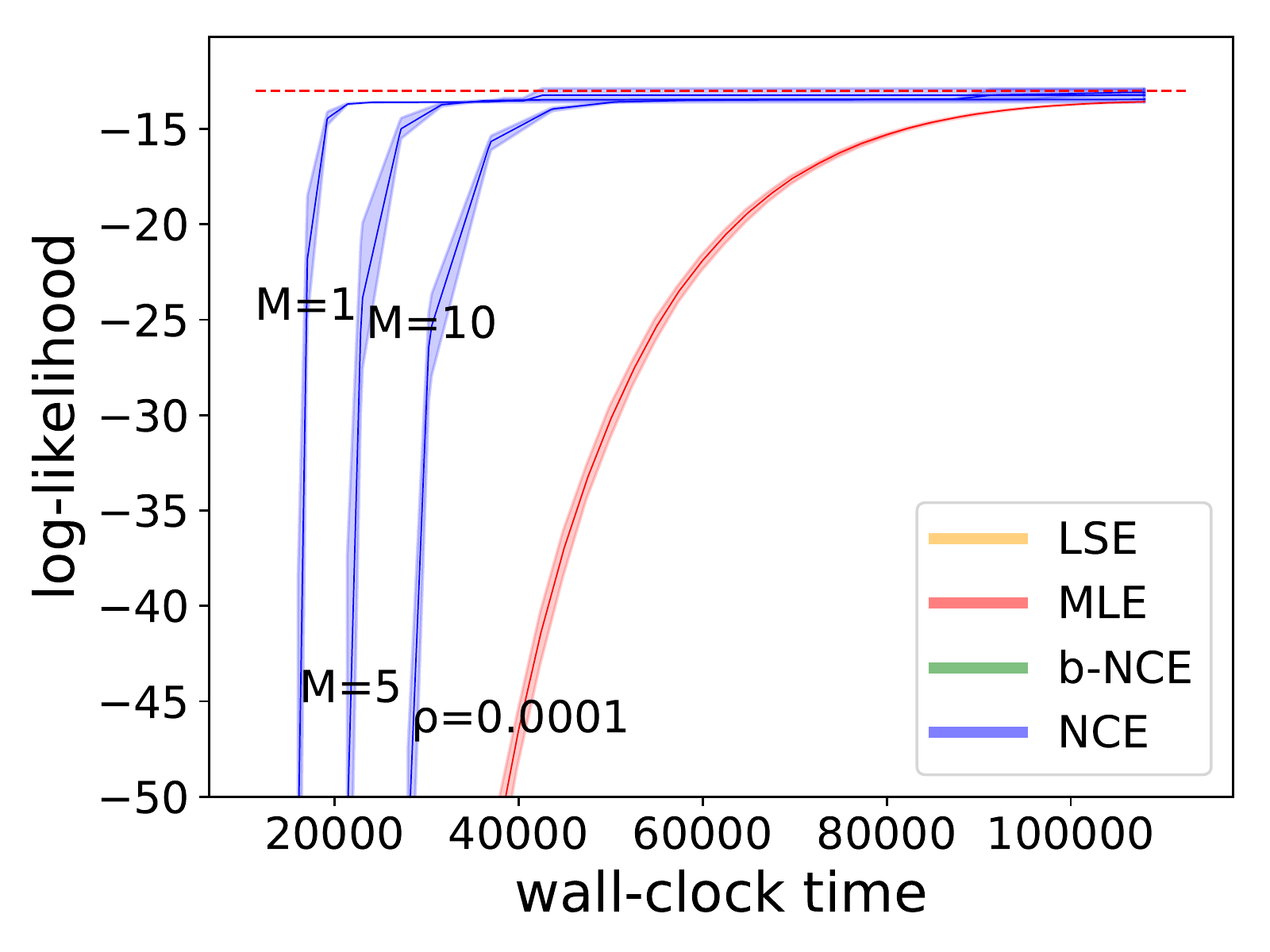}
				\caption{IPTV}\label{fig:iptv_c1}
			\end{center}
		\end{subfigure}
		\caption{Ablation Study III. Learning curves of MLE and NCE using neural $\noise$ with $C=1$.}
		\label{fig:c1}
	\end{center}
\end{figure*}

\newpage
\subsection{Ablation Study II: NCE with Untrained Noise Distribution}\label{app:random_q}

In \cref{fig:social_random}, we show the learning curves of NCE with untrained noise distributions on the real-world social interaction datasets. 
As we can see, NCE in this setting tends to end up with worse generalization (interestingly except on WikiTalk) and suffers slow convergence (on BitcoinOTC and CollegeMsg) and large variance (on BitcoinOTC). 

\subsection{Ablation Study III: Effect of $C$}\label{app:diff_c}

In \cref{fig:c1}, we show learning curves of NCE using the neural $\noise$ with $C=1$.   Taking $C=1$ means that the same number of noise samples can be drawn faster (with fewer intensity evaluations).  However, more training epochs may be needed because the noise looks less like true observations and so NCE's discrimination tasks are less challenging (see endnote~\ref{fn:gan}).

On the RoboCup dataset, $C=1$ exhibits similar learning speed to $C=5$ but has slightly worse generalization. 
On the IPTV dataset, $C=1$ gives a considerable speedup over $C=49$ without harming the final generalization.  The NCE curves for $M = 5$ and $M=10$ shift substantially to the left, since $C=1$ requires \emph{many} fewer intensity evaluations.

\end{document}